%% file: main.tex
\documentclass[11pt,a4paper,svgnames,dvipsnames]{article}

\usepackage[a4paper, margin=1in]{geometry}

\usepackage{amsmath, amsthm, amssymb}
\usepackage{mathtools}
\usepackage[mathscr]{euscript}
\usepackage{bbm}
\usepackage{enumitem}
\usepackage{geometry}
\usepackage{fancyhdr}
\usepackage[super]{nth}
\usepackage{caption}
\fancypagestyle{firstpage}{
  \fancyhf{}

  \fancyfoot[L]{\footnotesize \textit{Working paper. First released on \nth{26} February 2026.}}
  \fancyhead[R]{\includegraphics[height=1.2cm]{Figures/logo.png}}
}

\usepackage{indentfirst}

\usepackage{graphicx}
\usepackage{tikz}
\usepackage{subcaption}
\usepackage{comment}
\usepackage{threeparttable}
\usepackage{tabularray}

\usetikzlibrary{bayesnet}
\usetikzlibrary{external}

\usepackage{array, booktabs}

\usepackage[colorlinks=true, allcolors=black]{hyperref}
\usepackage{cleveref}  
\Crefname{figure}{Figure}{Figures}
\crefname{figure}{Figure}{Figures}

\usepackage{natbib}
\usepackage{tikz-3dplot}
\usepackage{stmaryrd}
\usepackage[T1]{fontenc}

\usepackage{dsfont}

\usepackage{float}

\theoremstyle{plain}
\newtheorem{theorem}{Theorem}[section]

\newtheorem{proposition}[theorem]{Proposition}

\theoremstyle{definition}
\newtheorem{definition}[theorem]{Definition}
\newtheorem{example}[theorem]{Example}

\theoremstyle{remark}
\newtheorem{remark}[theorem]{Remark}

\newcommand{\TMHP}{\textit{The Principle of Maximum Heterogeneity }}

\newcommand{\R}{\mathbb{R}} 
\newcommand{\N}{\mathbb{N}} 
\newcommand{\Z}{\mathbb{Z}}

\newcommand{\T}{\mathbb{T}}

\newif\ifdraft 
    \drafttrue

\title{\textbf{The Principle of Maximum Heterogeneity} Optimises Productivity in \textit{Distributed Production Systems} Across Biology, Economics, and Computing}

\usepackage{authblk}

\author[1]{Guillhem Artis}
\author[1,2,3]{Danyal Akarca}
\author[1,4]{Jascha Achterberg}

\affil[1]{Callosum, London, United Kingdom (\href{https://www.callosum.com}{www.callosum.com})}
\affil[2]{Imperial College London, London, United Kingdom}
\affil[3]{University of Cambridge, Cambridge, United Kingdom}

\affil[4]{University of Oxford, Oxford, United Kingdom}
\date{}
\begin{document}
\maketitle
\thispagestyle{firstpage}

\begin{abstract}

The world is full of systems of distributed agents, collaborating and competing in complex ways: firms and workers specialise within economies, neurons adapt their tuning across brain circuits, and species compete and coexist within ecosystems. In order to understand the principles governing their dynamics, individual research fields build theories and theoretical models that capture their internal dynamics, explaining how comparative advantage drives trade specialisation, how balanced neural representations emerge from sensory coding, or how biodiversity sustains ecological productivity. Here we propose that many of the well understood findings across fields can be captured in one simple joint cross-disciplinary model, which we call the Distributed Production System. It captures how agent heterogeneity, resource constraints, communication topology, and task structure jointly determine the productivity, efficiency, and robustness of distributed systems across biology, economics, neuroscience, and computing. This cross-disciplinary model reveals that a small set of underlying laws generate the complex dynamics we observe across the distributed production systems of biology, economics, and computing. These can be summarised in our \textit{Principle of Maximum Heterogeneity}: any distributed production system optimising for performance will converge on an increasingly heterogeneous configuration; environmental demands place an upper bound on the degree of heterogeneity required; and the communication topology of the system determines the spatial scale over which heterogeneity spreads, with this principle applying recursively across all layers of nested production systems.

These principles show how any distributed system optimises its production and efficiency, and hence can not only be used to explain already existing systems through a simple joint mechanism, but can also act as a blueprint for how ideal distributed systems ought to be constructed. To demonstrate this, we here use the discovered principles to suggest specific ideal redesigns for compute systems used to execute large scale AI systems. \textit{The Principle of Maximum Heterogeneity} shows a unique convergence of complex phenomena observed across fields onto a simple set of underlying design principles with important predictive value for the ideal design of future distributed production systems.

\end{abstract}
\newpage

{
\setlength{\parskip}{0pt} 
\tableofcontents
}

\newpage
\section{Introduction}
\input{Text/1_introduction}

\newpage
\section{Our model: \textit{The Distributed Production System}}
\input{Text/2_model}
\label{sec:model}

\newpage
\section{Our model captures distributed production systems across biology, economics, and computing}
\label{sec:findings}
\input{Text/3_findings}

\newpage
\section{The principles behind production in any distributed system}
\label{sec:principles}
\input{Text/4_principles}

\newpage
\section{Trying to force homogeneity into distributed production systems}
\label{sec:forcing_homogeneity}
\input{Text/5_forcing}

\newpage
\section{What this means for large scale AI and computing systems}
\label{sec:compute_systems}
\input{Text/6_compute_systems}

\newpage
\section{Discussion}
\label{sec:discussion}
\input{Text/7_discussion}

\section{Acknowledgements}

We thank Lukas B. Freund, Fleur Zeldenrust, Jack Gartside, and Jannik Reichert for helpful discussion about heterogeneity during the early stages of this project. We thank Dan Goodman for ongoing discussions and advice.

\newpage 
\section*{Appendix}
\addcontentsline{toc}{section}{Appendix}
\appendix

\section{Heterogeneity measure}
\label{app:heteroneity_metric}
\label{app:heterogeneity_metric}
\input{Text/heterogeneity_measure}

\section{Supplementary details on model definition}
\input{Text/model_design_discussion}

\section{Technical verifications on the optimisation procedure}
\input{Text/technical_verifications}
\label{app:technical_verifications}

\section{Parameters used for simulations}
\label{app:experiment_parameters}
\input{Text/experiment_parameters}

\section{Networks effects}
\label{app:network_effect}
\input{Text/network_effect}

\section{Supplementary analyses}
\label{app:supplementary_findings}
\label{app:supplementary_principles}
\input{Text/supplementary_principles}
\label{app:supplementary_compute}

\newpage
\addcontentsline{toc}{section}{References}
\bibliography{references}

\vfill
\noindent
\begin{tikzpicture}
  \draw[line width=1.08pt] (-0.5pt,0) -- (\textwidth+0.5pt,0);
  \fill (0.31pt, 0.54pt)
    .. controls (-0.81pt, 0.54pt) and (-1.92pt, 0.69pt) .. (-2.92pt, 1.15pt)
    .. controls (-3.08pt, 1.23pt) and (-3.23pt, 1.27pt) .. (-3.38pt, 1.27pt)
    .. controls (-4.08pt, 1.31pt) and (-4.73pt, 0.73pt) .. (-4.77pt, 0.00pt)
    .. controls (-4.77pt, -0.73pt) and (-4.19pt, -1.31pt) .. (-3.50pt, -1.31pt)
    .. controls (-2.81pt, -1.31pt) and (-3.15pt, -1.31pt) .. (-3.00pt, -1.19pt)
    .. controls (-2.08pt, -0.81pt) and (-1.08pt, -0.54pt) .. (-0.08pt, -0.54pt)
    -- cycle;
  \fill (\textwidth-0.31pt, 0.54pt)
    .. controls (\textwidth+0.81pt, 0.54pt) and (\textwidth+1.92pt, 0.69pt) .. (\textwidth+2.92pt, 1.15pt)
    .. controls (\textwidth+3.08pt, 1.23pt) and (\textwidth+3.23pt, 1.27pt) .. (\textwidth+3.38pt, 1.27pt)
    .. controls (\textwidth+4.08pt, 1.31pt) and (\textwidth+4.73pt, 0.73pt) .. (\textwidth+4.77pt, 0.00pt)
    .. controls (\textwidth+4.77pt, -0.73pt) and (\textwidth+4.19pt, -1.31pt) .. (\textwidth+3.50pt, -1.31pt)
    .. controls (\textwidth+2.81pt, -1.31pt) and (\textwidth+3.15pt, -1.31pt) .. (\textwidth+3.00pt, -1.19pt)
    .. controls (\textwidth+2.08pt, -0.81pt) and (\textwidth+1.08pt, -0.54pt) .. (\textwidth+0.08pt, -0.54pt)
    -- cycle;
\end{tikzpicture}

\end{document}

%% file: Text/1_introduction.tex
The world around us is full of systems, consisting of vast numbers of interacting agents. As individuals collaborate and compete, their unique behaviours jointly integrate into the systems overall characteristics. In ecology individuals of different species aggregate to inter-species competition. In neuroscience individual cells adapt their tuning to sensory stimuli to jointly show balanced responses to perceived stimuli. In economics, workers and countries specialise to have advantageous positions in trade relationships. In all of these cases well crafted theoretical models have been successful in deciphering the underlying principles that generate the stunningly complex systems level behaviour.

The most impactful theoretical work does not only reveal principles behind the complex systems level behaviour of individual subject areas, but shows how the same principles are replicated across many disciplines. A prominent example for the case of complex interaction in large distributed systems likely is Game Theory \citep{vonneumann1944, nash1951}. Game Theory has been impactful not only modelling microeconomic behaviours \citep{kagel1986winner} but also have generalised well to biology \cite{bshary2006image}, ecology \cite{giraldeau1994test}, and neuroscience \citep{fakhar2025human}. Other examples of theories with cross disciplinary impact are Information Theory \citep{shannon1948mathematical}, the $x^{\frac{3}{4}}$ scaling law \citep{west1997general}, and most recently peak selection with continuous attractor dynamics \citep{Khona2025}. 

All of the above examples have revealed important principles of how systems of distributed agents interact, however they also not yet capture key properties that have been observed as important across recent work. Critical examples are \textit{(1)} to capture the characteristics and skills of individual actors to allow those to impact how agents interact \citep{Freund2025, cook2025brainlike, Eaton2012, dahmen2025heterogeneity, Ito2026}, \textit{(2)} to capture the resource constrains \cite{achterberg2023spatially, gershman2015computational, kool2018mental, sun2025exploiting}, \textit{(3)} the topology of communication and interaction \citep{achterberg2023spatially, achterberg2023building, akarca2021generative, carvalho2019production}, and \textit{(4)} how joint and individual production and task goals influence the interaction of \textit{(1)}, \textit{(2)}, and \textit{(3)} \citep{achterberg2023spatially, safran2017depth, ellefsen2015neural, caliendo2012impact}.

Here we built a new theoretical model to study the general principles underlying the dynamics of distributed systems, taking into account the four additional forces discussed above, while also lending core mechanisms from the already established theoretical frameworks. We call the new integrated, but still tractable model, \textit{The Distributed Production System}. Extensive experiments on the model show that it captures a vast set of findings across distributed production systems observed in economic, biological, neural, and computing systems, validating it as a cross-discipline model of distributed production systems. We then study which laws underlie the model's behaviour and find a striking tendency towards heterogeneity as the optimal system's level solution in terms of productivity, efficiency, and robustness. We characterise this drive towards heterogeneity and its dependency with the system's workload and topology in the \textit{Maximum Heterogeneity Principle}.

The \textit{Maximum Heterogeneity Principle} represents not only a new unifying mechanism for how large systems of distributed agents can jointly optimise their production but also promises to act as a coherent design blueprint for how to construct new optimal large distributed production systems. We apply the discovered principle to the case of large-scale computing systems in the context of AI systems to outline how current systems follow a suboptimal scaling paradigm and how they could be built differently in order to achieve more efficient and scalable infrastructure.

%% file: Text/2_model.tex
On a high level we want to use a model to capture how distributed agents with individual characteristics optimally act and interact to optimise for their own or their joint progress towards a goal. As we want to use this model to identify which universal principles drive systems-level behaviour across scientific disciplines, the formulation of the model relies on simple and generalisable description of all components. In the following we will first give the reader an intuitive overview over the model (\cref{sec:model_intuitive}). Then we will describe in more mathematical detail how we characterise agents (\cref{sec:model_agents}), their interaction (\cref{sec:model_networks}), how they individually and jointly produce outputs, how they adapt their outputs to demands of the environment and additional constraints, and lastly which measures we use the study the behaviour of our model system.  

\subsection{Intuitive summary of the model}
\label{sec:model_intuitive}

The model represents a distributed production system as a set of agents, each carrying an individual skill function that describes how much of each type of operation the agent can perform (\cref{fig:model_intuitive}, lower half). Agents are connected via an interaction network $Q_{(1)}$, which governs how they communicate and collaborate. The joint production function of the system emerges from this combination: as shown in \cref{fig:model_intuitive}, the individual skill densities of agents connected via the collaboration network integrate into a collective output, $W_{(1)}$, whose shape reflects both the individual agents' skills and how they are linked.

Whether this collective output is sufficient depends on the demand placed on the system. As illustrated in \cref{fig:model_intuitive} (upper right panel), the same production function can be well-suited to one demand and poorly-suited to another. The model formalises this as a coverage problem: a demand is met when the joint production function covers it across the full space of operations. For any given demand shape, we can recover the optimal configuration of agent skills and network connections via gradient descent and then study the properties of that solution. Many of the experiments in following section focus on studying under which conditions empirically observed system level architecture develop (\cref{sec:findings}) or what optimal systems level architecture look like (\cref{sec:principles}). While this intuitive description focuses on a one-layer setup with a demand at the top, captured by the production of a set of agents, our model extends naturally to hierarchical systems, where multiple layers of interdependent agents sit below the demand, each layer producing the one above it, with resource constraints acting as additional production requirements at the bottom of the hierarchy. The remainder of the methods section formalises both the single-layer and multi-layer cases in detail.

\begin{figure}[H]
    \centering
    \includegraphics[width=0.7\textwidth]{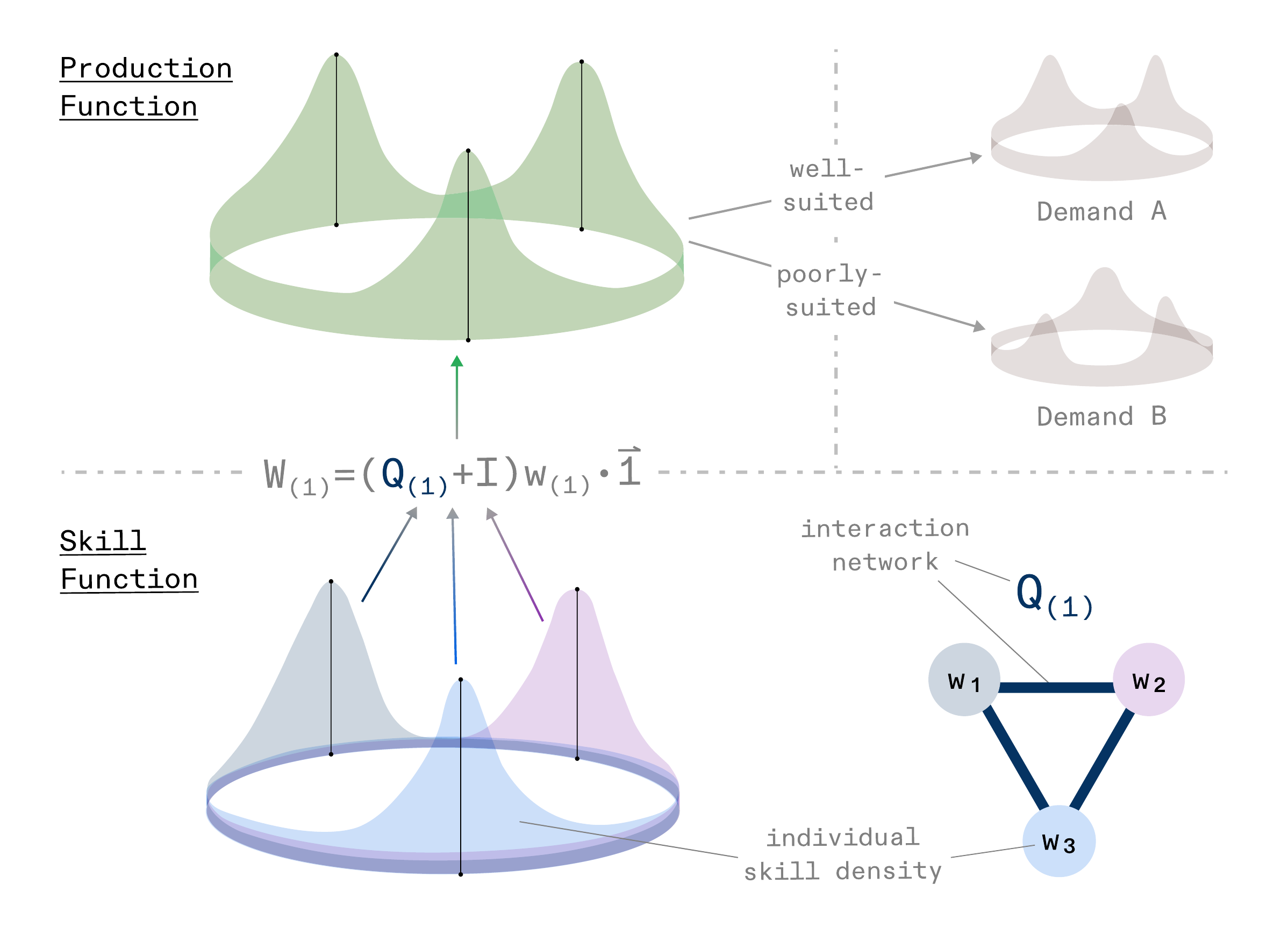}
    \caption{\textbf{Intuitive visualisation of the distributed production system model in which each agent contributes an individual skill density and connected agents jointly produce an output that can match or fail to match a given demand.} The lower half of the figure shows three agents ($w_1$, $w_2$, $w_3$), each characterised by an individual skill density concentrated around a specialism. The interaction network $Q_{(1)}$ encodes which agents communicate. Together, skill densities and network structure determine the joint production function shown in the upper half, computed as $W_{(1)} = (Q_{(1)} + \mathbf{I})\, w_{(1)} \cdot \vec{1}$ where $\vec{1}$ denotes the vector full of ones. The right panel illustrates that the same production function can be well-suited to one demand shape (Demand A) and poorly-suited to another (Demand B), motivating the optimisation problem at the core of the model.}
    \label{fig:model_intuitive}
\end{figure}

\subsection{Parametrisation of agents}
\label{sec:model_agents}

As discussed in \textit{(1)} in the introduction, we want agents to have individual characteristics that can influence their interaction. Inspired by prior economic models \cite{Freund2025}, we characterize each agent through a density of their skills and abilities. We represent them as wrapped Gaussian densities over the space of skills that is modelled by a one-dimensional torus (or circle) $\mathbb{T_{\text{skills}} }$. This means that agents de facto have a core talent, the mean of that density distribution, and can spread out their skills to individual degrees around that mean. More complex shapes of talents can be considered but we opt for this simple shape of skills to allow for more detailed analyses of resulting model setups (\ref{sec:model_metrics}). Note that we more specifically discuss why this is a good representation for most agent classes, even if they are universal function approximators in \cref{app:ufa}. Following this, an agent $i$, could be represented by the mean $\mu_i$ and standard deviation $\sigma_i$ of its skills distribution $s_i$,
$$s_i(\mu_i, \sigma_i) : \theta \in \mathbb{T}_{\text{skills}} \mapsto \frac{1}{\sigma_i\sqrt{2\pi}} \sum_{k \in \Z} \exp \left(\frac{-(\theta - \mu_i + 2k \pi)^2}{2 \sigma_i^2} \right).$$
The talent / production of one agent corresponds to the integral of its skills function over the entire space of skills.

\subsection{Parametrisation of agents' pairwise interactions}
\label{sec:model_networks}
The production system is a distributed system that achieves production via distributing the task across the agents that jointly interact to meet a given demand.
In most distributed systems, the interaction between all agents is not uniform. \textit{(2)} in the introduction references how the topology of networks in which agents lie creates varying levels of interaction between agents. To capture these interactions, we connect the agents with a network structure, which takes the form of a graph where the vertices are the agents and the edges represent the interactions. The graph structure is contained in its adjacency matrix that we denote as $Q$. For simplicity, we assume that the graph is undirected, i.e. $Q$ is a symmetric matrix. Integrating this structure with the agents definition and we get a representation of a simple production system that focuses on the effects of different skill sets linked together via network interaction. For $N$-agents the formalism is:
$$ ((\mu_i, \sigma_i)_{1 \leq i \leq N}, Q) \overset{\mathrm{not.}}{=} ((\mu, \sigma), Q),$$
with $\mu_i \in \T$, $\sigma_i \in \R_+^*$ and $Q \in \R^{N \times N}.$  We introduce the set $\mathscr{C}$ which contains all the maximally (in term of size) connected components of the graph. Every element $c \in \mathscr{C}$ corresponds to the set of all agents connected in this component. We note $(\mu_c, \sigma_c)$ the vector containing all the agents' means and standard deviations within $c$. Agents can only interact with agents within their connected component. We note that the network structure implies the agents are unique. One should think of the agents as associated to an identifier, and together they form a network. We have modelled the interactions as a network, and so, alongside the agents parametrisation, we have now created a distributed system through a network of agents.

\subsection{Agents' individual and joint production output}
With the distributed system formalism, the production is represented as a function of both the skills of agents and their pairwise interactions. To keep the model simple, we characterise outputs of the function in an operation space $\T_{\text{ops}}$ that is similar to the skills space. To be mathematically precise, similar means here equality in terms of set, but not necessarily on the nature of the points forming the set (this is expanded on in \cref{app:between_layer_mapping}). Formally we have the following mapping, 
$$\T_{\text{skills}} \xrightarrow{\quad \mathrm{id} \quad} \T_{\text{ops}}.$$
Where no ambiguity arises, we write $\T_{\text{skills}} = \T_{\text{ops}} \overset{\mathrm{not.}}{=} \mathbb{T}$.
Before defining the joint production function of the distributed system, we introduce the individual production function. The individual production of a single agent $i$ corresponds to the output of agent $i$ considered in isolation. This output corresponds to the agent skills density mapped onto the operation space, namely:  
$$
    s_i \in \mathscr{F}\left(\T_{\text{skills}}\right) \xrightarrow{\quad \mathrm{id} \quad} w_i \in \mathscr{F}\left(\T_{\text{ops}}\right),
$$
where $\mathscr{F}\left(\T_{\text{skills}}\right)$ and $\mathscr{F}\left(\T_{\text{ops}}\right)$ are respectively the set of real functions on the operations space and on the skills space. With the mapping in mind, the reader can intuitively think of
$$w_i = s_i, \quad \text{ with } \: w_i,s_i\in \mathscr{F}\left(\T\right).$$
The individual production of an agent is therefore a workload density over the operations space. 
The joint production function of the distributed system is based on this mechanism but also takes into account the network structure. Let $S:= S((\mu, \sigma), Q)$ be a distributed system composed of $N \in \N$ agents, $s = (s_1, \dots, s_N)^T$ be the vector of all individual agents' skill densities, and $w = (w_1, \dots, w_N)^T$ be the vector of all individual agents' production densities. The production function of $S$ in its simplest form is:
$$W(s, Q) = \mathds{1}^T(I+Q)w,$$
where $I$ is the identity matrix and $\mathds{1}$ the $N$-dimensional vector containing only ones. 
\begin{remark}
    In its simplest form, the production function is linear with respect to the vector of agents' skills. Consequently, the production is a weighted sum of the individual production functions and the interaction matrix does not have an interaction effect on the production function itself here. The interaction acts on the optimisation (c.f.\ \cref{sec:model_optimisation}) and thus affects the optimal skill densities.
\end{remark}

\begin{example} We give some examples of simple production outputs: 
    \begin{itemize}
\item \textbf{No collaboration.} A `no collaboration' distributed system (i.e.\ $Q = 0$) has a production function equal to the sum of the individual agents' production functions,
        $$W(s, 0) = \mathds{1}^Tw = \sum_{1 \leq i \leq N} w_i.$$
\item \textbf{Uniform interactions with unit intensity.}
Consider the case in which all agents interact symmetrically and with equal intensity, so that 
\[
Q = \mathbf{1}_{N\times N} - I,
\]
where $\mathbf{1}_{N\times N}$ denotes the $N\times N$ matrix of ones and $I$ is the identity matrix.  
In this setting, the aggregate production function becomes
\[
W(s,Q) 
= N \sum_{i=1}^N w_i.
\]
Hence, total production equals $N$ times the aggregate stand-alone production.  
The interaction structure generates an additional contribution of 
\[
(N-1)\sum_{i=1}^N w_i,
\]
which reflects the fully connected nature of the interaction network.
\item \textbf{Uniform interactions with intensity $\lambda$.}  
Suppose now that interactions remain symmetric but occur with uniform intensity $\lambda \in \mathbb{R}$, so that
\[
Q = \lambda (\mathbf{1}_{N\times N} - I).
\]
The production function can then be written as
\[
W(s,Q) 
= \sum_{i=1}^N \left( w_i + \lambda \sum_{j\neq i} w_j \right).
\]
In this case, each agent's production receives a homogeneous ``mean-field'' contribution proportional to the aggregate production of all other agents. The parameter $\lambda$ measures the strength (and possibly the sign) of the spillover effects.
\item \textbf{Non-uniform interaction structure.} 
In the general case of non-uniform interactions, where $Q = (Q_{ij})$ is an arbitrary interaction matrix with zero diagonal, total production is given by
\[
W(s,Q) 
= \sum_{i=1}^N \left( w_i + \sum_{j\neq i} Q_{ij} w_j \right).
\]
Thus, each agent's production is augmented by a weighted sum of the outputs of the other agents, with weights determined by the interaction matrix $Q$. This specification allows for asymmetric collaboration patterns.
\item \textbf{Circular topology interaction.} Throughout this work, we extensively use the circular topology and the corresponding interaction matrix, 
$$Q = J_N = A_N + A_N^T,$$
where $A_N$ denotes the $N$-cycle matrix, 
$$A_N = \begin{pmatrix} 
    0 & 1 & 0 & \cdots & 0 \\
    0 & 0 & 1 & \ddots & \vdots \\
    \vdots & \vdots & \ddots & \ddots & 0 \\
    0 & 0 & \cdots & 0 & 1 \\
    1 & 0 & \cdots & 0 & 0
\end{pmatrix}$$
For such a network, we remove the contribution of the identity matrix, which leads to a production function,
$$W(s, Q) = 2 \sum_{i = 1}^N w_i.$$ 
In \cref{app:network_effect} we justify its extensive use and explain why it can be considered as a canonical example.
\end{itemize}
\end{example}

\subsection{Workloads and demands for produced outputs}
Additionally, we use different parameters on the model. One specific goal of our model setup is to allow for complex demand functions to influence the interaction of agents and their production, as discussed in the introduction with \textit{(4)}. In the context of economic scenarios, the demand may be the actual global demand for goods or services, but in disciplines like neuroscience or computer science it may also represent the environmental task or computational task faced by humans or machines which naturally are an important component for driving the optimal system-level setup. We represent it as a positive function over the operations space, 
$$W_0 \in \mathscr{F}(\T_{\text{ops}}).$$ 

We note that a task, specifically in its computing occurrence of workload, is more naturally represented as a graph informing on internal dependencies. However, we argue in \cref{sec:workload_as_graph} that the core result of this paper, i.e. \textit{The Maximum Heterogeneity Principle}, still holds. For the sake of simplicity, we use demand, workload, and task synonymously hereafter. In some cases, which correspond to specific modelling choices, the task is simply the maximisation of one quantity. For instance, plants that aim to maximise their biomass, or a large-scale economic system that aims at maximising GDP. In such cases, the task is the function that equals infinity. Such a technical detail is only mentioned to say that using functions for tasks meets a wide range of criteria. Generally, when there is no real function that models the task, it means that the goal of the system of agents is to maximise production. 

\begin{remark}
    Note that the task lies over the same space as the agents' outputs. This is by design as we want to be able to have a criterion of: a function is realised, a task is solved, a demand is met, the workload is run, and how ``far'' we are to completing it. 
\end{remark}

Consequently the task can take any shape, wrapped Gaussians, wrapped mixture of Gaussians, uniform, positive scaling of these shapes even non-continuous positive functions, and Dirac delta type functions are tasks but they constitute more of an edge case to represent a realistic workload (\cref{app:dirac_workload} discussing the Dirac delta case in more detail). However because we would like to perform a wide range of mathematical operations we restrict ourselves to the class of analytical functions on the torus that we denote $\mathscr{A}(\T)$. The rationale behind analyticity assumption is that it allows for simple theoretical guarantees the optimisation procedure we will introduce (c.f.\ \cref{app:technical_verifications} for detailed explanations). Furthermore this condition does not limit us for the different workloads we will consider.  
Naturally, the demand present in the environment influences the ideal design choices of the production system to meet this demand. The variables that make up the ideal system design include the number of agents, given amount of resources, each of their skill densities $s_i$ and their joint interaction $Q$. The following subsection formalises the framework we use to understand how the demand influences the ideal production system.

\subsection{Ideal design of production system: optimisation framework}
\label{sec:model_optimisation}
We now need a strategy that enables us to find the ideal design of production system given the demand function. Because there is an objective, namely to find an ideal production system for a given task, optimisation (under constraints) is a natural tool. The parameters fixed here are: 
\begin{itemize}
    \item The task, $W_0 \in \mathscr{A}(\T)$.
    \item The number of agents $N$.
    \item The network structure $Q$ and its connected components set $\mathscr{C}$.
\end{itemize}
Given these parameters and the context, a form of production function is set and we denote it as $W_1$. The loss we chose can be thought of as the minimum amount of energy needed (or the minimum amount of financial resource, or time, or more generally, the resource that enables production) for the given production system and for the individual agents production functions $s$ to perform the task: 
$$\mathcal{L}(s) := \sum_{c \in \mathscr{C}}\left\lVert\frac{W_0 - W_1(s_c)}{W_1(s_c)} \mathds{1}_{\lbrace W_0 - W_1(s_c) > 0\rbrace} \right\rVert_{p},$$
where $\left\lVert \cdot\right\rVert_{p}$ denotes the $p$-norm on the torus. The indicator function $\mathds{1}_{W_0 > W_1(s_c)}$ ensures we only penalise the system for operations it fails to cover, as overshooting part of the demand is acceptable while undershooting is not. The summation over the connected components implies that non interacting agents cannot jointly optimise: they have no knowledge about the skills of the agents in other connected components and thus cannot take it into account to solve the task.\footnote{For the moment, with the simplest form of the production function, the interaction effects of the network lie at the connected component level. This interaction will be made deeper for the next iteration of the model.} For simplicity, we have dropped $Q$ from the production function as it is a fixed parameter, not a variable. Since $s$ is uniquely defined by all the means $\mu$ and standard deviations of the agents $\sigma$, the loss can be written as such: 
$$\mathcal{L}(\mu_c, \sigma_c) :=\sum_{c \in \mathscr{C}} \left\lVert\frac{W_0 - W_1(\mu_c, \sigma_c)}{W_1(\mu_c, \sigma_c)} \mathds{1}_{\lbrace W_0 > W_1(\mu_c, \sigma_c)\rbrace} \right\rVert_{p}.$$
Consequently, the framework is a parametric optimisation problem: 
$$ \mu^*, \sigma^* \in \text{Arginf}_{\mu, \sigma} \mathcal{L}(\mu, \sigma).$$
We impose the production function to be analytical with respect to $\mu, \sigma$ and $\theta$, and strictly positive. We further impose that every standard deviation $\sigma_i$ is above a strictly positive constant. Under these physically realistic assumptions, the loss is well-defined, fully differentiable, and amenable to standard gradient descent (c.f.\ \cref{app:technical_verifications}), allowing us to efficiently solve for the optimal agent composition $\mu^*, \sigma^*$ across a wide range of configurations: different workload shapes, different numbers of agents, different numbers of layers, and different resource constraints. Our decision to opt for gradient-based optimisation instead of local optimisation or analytically solvable systems is explained in \cref{app:model_design_optimiser}. In short, opting for a fully analytically tractable system would have significantly limited the model design space. In contrast, local optimisation processes generally have worse convergence guarantees, making the system harder to study across parametrisations. However, as discussed in \cref{app:model_design_optimiser}, global gradient optimisation can often behave similarly to the aggregate of local optimisation processes.

\subsection{Constraints acting on the model}
\label{model_second_order}
We want now to account for different constraints that influence the optimal composition of production systems. Though the constraints will be modelled in a more specific way across the different experiments, we present here the general setup of how constraints act on the production system. We consider three constraints: the \textit{resource constraint} that solely acts on the production function, the \textit{communication cost} and the \textit{second-order constraints} that take the form of penalties added within the optimisation loss. \cref{fig:model_constraints} provides an intuitive visual of how the constraints act on the production system. We give details: 
\begin{enumerate}
    \item The resource constraint $R$ is a level-type constraint that acts on the production function by dividing its output. It models the scarcity of external resources present in the environment used to fuel the production. An absence of such resources (energy, capital, food) is modelled by an infinite constraint, and the production is the null function. Its role on the system will be studied in general in \cref{sec:principles_efficiency}.
    \item The communication cost accounts for the cost of collaboration. We use this penalty only twice (c.f.\ \cref{sec:findings_labour_market} and \cref{sec:principle_topology}), but it leads to interesting properties of the network topology. We model the penalty as such, 
    $$\mathcal{L}_c(s;Q) = \lambda_c \lVert Q \odot D(s)\rVert_1,$$
    where $\odot$ denotes the Hadamard product, $\lambda_c > 0$ is a weighting coefficient and $D(s)$ is a dissimilarity matrix between the agents accounting for harder communication between dissimilar agents.
    \item The second-order constraints come in the shape of an underlying internal `resource' level that can modify the skills function and hence production of agents to varying degrees. As before, we describe the available internal resources as densities, denoted $s^{(2)}$, connected via $Q^{(2)}$ in the same numerical space as the skill densities of agents $\T$, and, if present, they influence the systems by providing the resources to output the skills function of the layer and constrain it in the form of an additional cost in the loss proportional to the distance of agents' skills to the lower level resources. We only use this layer for a couple of specific experiments, and in \cref{sec:compute_systems} more generally, where they highlight how agents produce outputs according to their skills and underlying tools / hardware when they are available. Mathematically, the second-order constraints act as a penalty in the optimisation loss and we denote it $\mathcal{L}_s$.  
\end{enumerate}

By convention, as soon as any penalty is present, we denote the optimisation loss introduced above $\mathcal{L}_m$ and refer to it as the task mismatch cost. In full generality the optimisation takes the form, 
$$\mathcal{L}(s; Q, R) = \mathcal{L}_m(s;W_0, R) + \mathcal{L}_c(s; Q) + \mathcal{L}_s(s; Q, s^{(2)}, Q^{(2)}).$$
Note that in the following we will use a simpler model of the second-order layer.
\vspace{0.4cm}
\begin{figure}[H]
    \centering
    \includegraphics[width=0.5\textwidth]{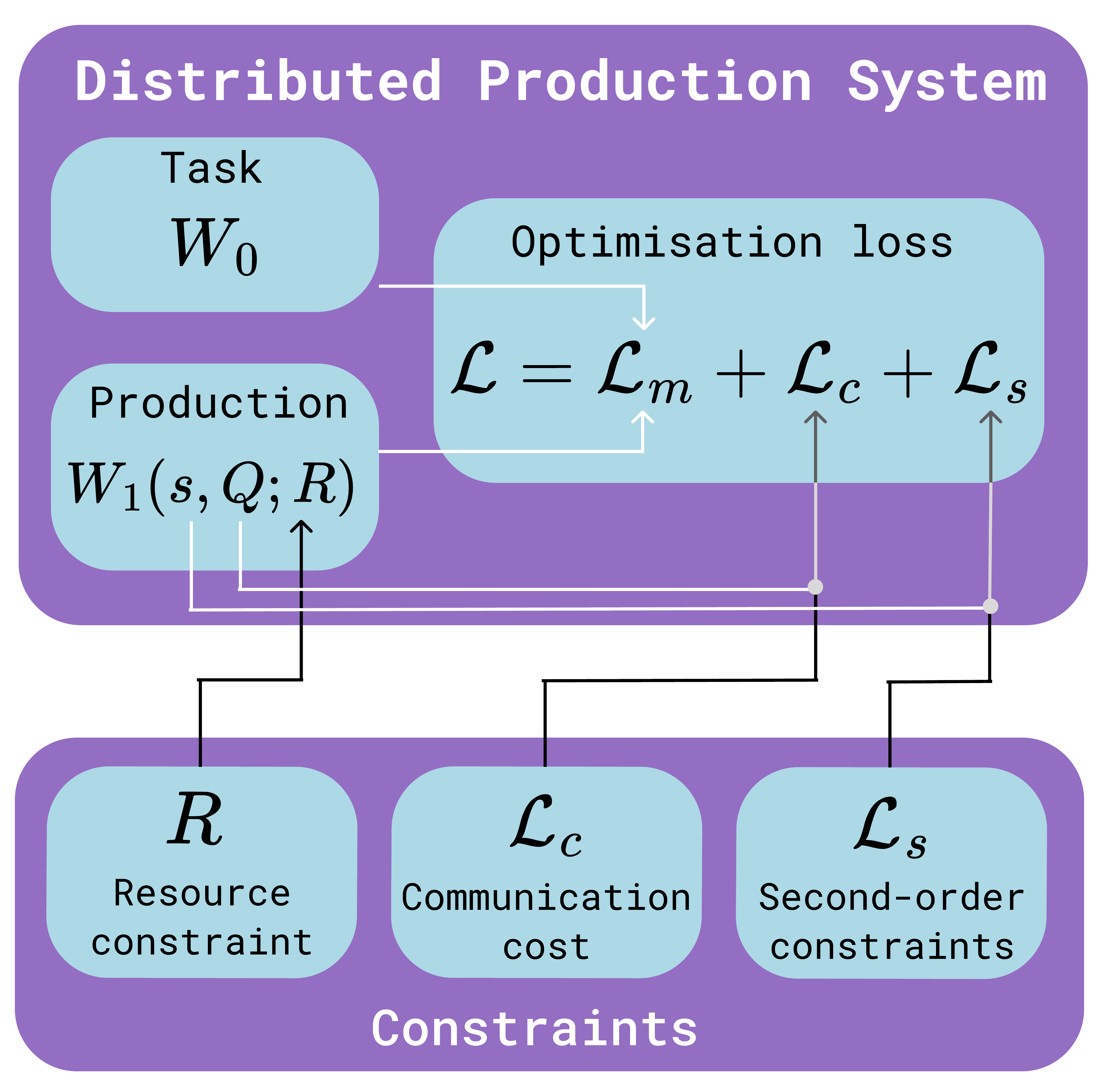}
    \vspace{0.2cm}
    \caption{\textbf{Intuitive visualisation of the distributed production system model with constraints.} The resource constraint acts as a parameter on the production function. The communication cost, combined with the network structure, influences the gradient descent by adding a penalty. The second-order constraint, combined with the agents' skill densities, takes the form of a penalty in the optimisation loss. The arrows indicate the effects: white denotes the parameters of the model, black the effect of the constraints and grey the combined effects of the constraints and model parameters.}
    \label{fig:model_constraints}
\end{figure}

\subsection{Full formulation of the model}

We now consolidate all previously introduced components into a unified framework. The model describes a distributed production system whose purpose is to meet a given demand workload, by optimally composing agents with heterogeneous skills connected through a network structure.

\paragraph{Spaces.}
The model is built on a common one-dimensional torus $\mathbb{T}$, which simultaneously plays three distinct roles depending on the layer it represents:
\[
\mathbb{T}_{\text{res}} \xrightarrow{\quad \mathrm{id} \quad} \mathbb{T}_{\text{skills}} \xrightarrow{\quad \mathrm{id} \quad} \mathbb{T}_{\text{ops}},
\]
where $\mathbb{T}_{\text{res}}$ is the resource space, $\mathbb{T}_{\text{skills}}$ is the agents' skills space, and $\mathbb{T}_{\text{ops}}$ is the operations space in which both production outputs and workloads are expressed. Since these three spaces are identical as sets, we write $\mathbb{T}_{\text{res}} = \mathbb{T}_{\text{skills}} = \mathbb{T}_{\text{ops}} \overset{\mathrm{not.}}{=} \mathbb{T}$ whenever there is no ambiguity. This recursive identification encodes the self-similar, hierarchical nature of the model, in which the output of one layer becomes the input of the next. Throughout this paper, we treat the mappings between spaces as the identity function, however \cref{app:between_layer_mapping} discusses how this is not a necessary assumption of our model.
\paragraph{Workload.}
The demand is represented as a positive, continuously differentiable function over the operations space,
\[
W_0 \in \mathscr{A}(\mathbb{T}).
\]
Each point $\theta \in \mathbb{T}$ denotes a particular operation, and $W_0(\theta)$ quantifies how much of that operation is required. The task is considered completed as soon as the system's aggregate production covers the demand everywhere, i.e.\ $W_1 \geq W_0$.

\paragraph{Agents and production system.}
The production system is composed of interacting agents. Each agent $i$ has a skill density modelled as a wrapped Gaussian density over $\mathbb{T}$ with mean $\mu_i \in \mathbb{T}$ encoding the agent's area of specialisation and standard deviation $\sigma_i \in \mathbb{R}_+^*$ encoding the breadth of its skills. An agent's individual production density is the direct mapping of its skills density onto the operations space.  
Interactions are realised through a network with a topology that is encoded in an adjacency matrix $Q \in \mathbb{R}^{N \times N}$. Collecting all skills densities in the vector $s = (s_1, \dots, s_N)^\top$, the aggregate production of the distributed system is denoted $W_1(s, Q)$.

\paragraph{Optimisation.}
Given the task $W_0$, the number of agents $N$, the network $Q$, and any additional constraints $C$ (e.g.\ resource constraints, second-order constraint, or communication too costly to be negligible), the optimal production system composition $(\mu^*, \sigma^*)$ is found by minimising the loss:
\[
\mathcal{L}(\mu, \sigma) \;:=\; \sum_{c \in \mathscr{C}} \left\lVert \frac{W_0 - W_1(\mu_c,\sigma_c)}{W_1(\mu_c,\sigma_c)}\,\mathbf{1}_{\{W_0 > W_1(\mu_c,\sigma_c)\}} \right\rVert_{p},
\]
which penalises only the operations for which the system undershoots the demand. The optimisation problem reads
\[
\mu^*,\, \sigma^* \;\in\; \operatorname*{arg\,inf}_{\mu,\,\sigma} \;\mathcal{L}(\mu,\sigma) \quad \text{subject to } C.
\]
Under the assumptions that $W_0$ is analytical, $W_1$ is analytical in $(\mu, \sigma, \theta)$, strictly positive in $\theta$, and that all $\sigma_i$ are bounded away from zero, the loss is well-defined and fully differentiable, making gradient-based optimisation directly applicable. 

In summary, the model couples a demand $W_0$, a set of $N$ specialised agents $((\mu,\sigma), Q)$ forming a distributed production system, and a hierarchical torus structure $\mathbb{T}_{\text{res}} \to \mathbb{T}_{\text{skills}} \to \mathbb{T}_{\text{ops}} = \mathbb{T}$ into a single, gradient-optimisable framework for studying how the composition and interaction of agents drive optimal, scalable production.

\subsection{Measures to study the model}
\label{sec:model_metrics}

To systematically study the optimal solution that our model converges on for a specific discipline specific experiment (\cref{sec:findings}) or when studying the general behaviour of the model to extract its principles (\cref{sec:principles}), we need metrics that can characterise the current state of all agents, and their joint solution in context of a specific demand or workload. To achieve this, we derive a new metric to describe the system's level heterogeneity and use it alongside established measures to capture agents' degree of specialisation, the overall efficiency of the system, and overall amount of production. These are introduced in the following, ordered by level of analysis, from single agents to whole system.

\subsubsection{Specialisation}
The first metric we consider is specialisation or equivalently level of specialisation, which is a property that is measured on the level of individual agents. For an agent's skills density $s_i$, it corresponds to how concentrated the distribution is around its peak — a specialist has a tall and narrow density, whereas a generalist has a flat and broad one, with the same unit area distributed over a wider range of skills. It describes whether a given agent specialises in a specific part of the space of skills / abilities. Because $(\mu_i, \sigma_i)$ and $\sigma_i$ are sufficient statistics of the agents skill density, we can uniquely define, the specialisation of an agent to be the inverse of its standard deviation:
$$\varsigma(s_i) = \varsigma(\mu_i, \sigma_i)  = \frac{1}{\sigma_i}.$$
The individual agents' specialisations can be averaged across the population of agents to measure the level of specialisation of the production system with vectors of skills densities $s$:
$$\varsigma(s) = \varsigma(\mu, \sigma)  = \frac{1}{N}\sum_{1 \leq i \leq N} \frac{1}{\sigma_i}.$$
Because of the equivalence between skills space and operations space, the specialisation is also measured equivalently in terms of production. For one single agent, with production density $w = s$, its productivity specialisation is 
$$\varsigma (w_i) = \varsigma(\mu_i, \sigma_i)  = \frac{1}{\sigma_i}.$$

\subsubsection{Heterogeneity}
The heterogeneity is a property that has started to receive attention through accumulating empirical findings \cite{eisenhauer2023heterogeneity, vijay2015}. It is inherently a measure of the diversity of a group of agents based on characteristics of interest. For example, in a group of flowers, we can focus on the heterogeneity of colours, of sizes, or on the number of bees they attract per year, or all of these features. There is no standard measure of heterogeneity yet. We hence derive a comprehensive measure of heterogeneity, expanding prior relevant works \cite{Hill1973, Jost2006, Jost2009, LeinsterCobbold2012, Leinster2021}. Here we give a short overview of the metric and the full derivation of the metric is in \cref{app:heterogeneity_metric}.
We want to measure heterogeneity of the composition of production system. Thus the characteristic of interest for us is the agents' skill densities. Building on Hill numbers \cite{Hill1973} and the similarity-sensitive diversity framework of \cite{Leinster2021, LeinsterCobbold2012}, we construct the metric from the pairwise similarities between agents' skill densities, collected in a similarity matrix $Z \in [0,1]^{N \times N}$, where $Z_{ij}$ decreases as agents $i$ and $j$ grow more dissimilar. Specifically, we define 
$$Z_{ij} = \exp\left(-d_{\circ}(\mu_i, \mu_j)/\sqrt{\sigma_i^2 + \sigma_j^2}\right),$$
 where $d_{\circ}$ is the circular distance between agent means, defined as, 
 $$d_{\circ}(\mu_i, \mu_j) := \min(|\mu_i - \mu_j|, 2 \pi - |\mu_i - \mu_j|)$$ and the denominator accounts for the breadth of both agents' skills. Because agents are uniquely identified within the network, we use the uniform distribution over agents and fix the sensitivity parameter $q = 2$, yielding the heterogeneity 
 $$\mathcal{H} = \frac{N}{1 + \frac{2}{N}\sum_{i<j} Z_{ij}} - 1.$$ This measure takes values in $[0, N-1]$: lower values indicate a more homogeneous system and higher values a more heterogeneous one. When there exist subgroups (limit case) that are maximally similar (i.e.\ pairwise similarity equal to one), we can form subgroups within the set of agents based on identical similarity. Denoting $p_i$ the proportion of agents in the subgroup $i$ and $p$ the associated vector of proportions of agents in each subgroup, the heterogeneity becomes, 
 $$\mathcal{H} = \frac{N}{1 + \frac{2}{N}\sum_{i<j} Z_{ij}} - 1 = \frac{1}{p^T \widetilde{Z}p} - 1,$$
 with $\widetilde{Z}$ the reduced similarity matrix. For example for a production system with three agents with two maximally similar agents and one maximally dissimilar agent (similarity of 0), we have the reduction:
 $$Z = \begin{pmatrix}
     1 & 1 & 0 \\
     1 & 1 & 0 \\
     0 & 0 & 1 \\
 \end{pmatrix} \to \widetilde{Z} = \begin{pmatrix} 
     1 & 0 \\
     0 & 1 \end{pmatrix} \quad \text{and } p = \left(\frac{2}{3}, \frac{1}{3} \right).$$

When the pairwise similarity is not taken into account, i.e.\ when $Z$ is the identity matrix, the heterogeneity reduces to the classical Hill number of order $2$, thereby satisfying the standard axioms of symmetry, expandability, the transfer principle, and the replication principle \cite{Jost2009, Nunes2020} while taking into account similarity \cite{Leinster2021, LeinsterCobbold2012}. Its full derivation and properties are given in \cref{app:heterogeneity_metric}.

\subsubsection{Systems-level production and productivity}
The system-level production $\mathcal{P}(s, Q)$ results from how interaction of all agents and is hence a function of all agents' parametrisation and their pairwise interactions. It is defined as the total area under the joint production function, i.e.\ the integral of $W_1$ over the operations space:
$$\mathcal{P}(s, Q) = \|W_1(s, Q)\|_1 = \int_{\mathbb{T}} W_1(s,Q)(\theta)\, d\theta.$$
The system-level over-production $\overline{\mathcal{P}}(s, Q)$ corresponds to the amount of production that exceeds the demand. It is defined as the area is above the demand and below the joint production function, i.e. the following integral:
$$\overline{\mathcal{P}}(s, Q) = \|(W_1(s, Q) - W_0) \mathbf{1}_{\{W_0 < W_1(\mu_c,\sigma_c)\}} \|_1 = \int_{\{W_0 < W_1(\mu_c,\sigma_c)\}} (W_1(s,Q)(\theta) - W_0 (\theta))\, d\theta.$$
With these definitions we can define the concept of effective production, which is the production that truly serves to meet the task. It corresponds to the area below both the joint production function and the demand, i.e. the part of production that is actually absorbed by demand:
$$
\mathcal{P}_{\mathrm{eff}}(s,Q) = \|(W_1(s, Q) - W_0) \mathbf{1}_{\{W_0 > W_1(\mu_c,\sigma_c)\}} \|_1
$$
This quantity measures the share of system-level productivity that is effectively matched to demand. Moreover, the total production decomposes naturally as
$$
\mathcal{P}(s,Q)
=
\mathcal{P}_{\mathrm{eff}}(s,Q)
+
\overline{\mathcal{P}}(s,Q).
$$

Related to effective production, we define the productivity, which is can be seen as an index version of the effective production, which also takes into account the constraints. We define it as:
$$\Pi(s; Q) = \frac{1}{1 + \mathcal{L}(s;Q)},$$
where $\mathcal{L}$ is the optimisation loss. 
A productivity of zero corresponds to no efficient production, while a productivity of one is related to a task entirely performed with no additional costs given the constraints.

\subsubsection{Systems-level efficacy and efficiency}
Systems-level efficacy and efficiency both expand on production by taking the demand $W_0$ into account. Efficacy $\mathcal{F}$ captures how much of the production is actually used to meet the demand. It corresponds to the ratio demand is covered by the system's production, defined as the fraction of total production that falls within the demand:
$$\mathcal{F}(s, Q) = \frac{\mathcal{P}_{\mathrm{eff}}(s,Q)}{\mathcal{P}(s,Q)}.$$
Efficiency $\mathcal{E}$ is related to how well the resource are used. It measures the proportion of resource that goes is use for efficient production: 
$$\mathcal{E}(s, Q) = \frac{\mathcal{P}_{\text{eff}}(s,Q)}{N},$$ where $N$ is the number of agents as a proxy for resources used. Efficacy thus measures coverage of the demand, while efficiency measures how economically that coverage is achieved.

%% file: Text/3_findings.tex
We believe that the simple model system described in \cref{sec:model} captures the core functioning of the distributed production systems we observe across biology, neuroscience, economics, finance, and computing. If true, we could then use the model to derive which core principles drive the productivity of distributed systems across fields. \textbf{Before we start validating our model on established findings, we ought to define what exactly we mean with distributed production systems.} A distributed production system is any system in which agents jointly, through both their collaboration and competition, produce an output. For example, all plants of an ecosystem jointly produce biomass as a joint output through a complex network of collaborative and competitive relationships. In the same way, neurons jointly produce appropriate muscle responses or cognition, and countries’ production and trade creates a global GDP. Another way of characterising distributed production systems is that they are systems where the agents within them engage in a non-zero sum game, so that there is no ‘the winner takes it all’-dynamic and that appropriate coordination between agents (which can be achieved both through collaboration but also competition) can increase the output of the entire system, so produce additional output.

In the following we show how our model matches distributed production system observed across disciplines. We begin each discipline specific section of findings with an `introductory' finding that follow from the model in a straightforward manner, to allow the reader to develop an intuition for the model, before moving on to more complex findings. As each discipline has an `introductory' finding in its respective section, readers should be able to skip individual disciplines.

\subsection{Ecology}
\label{sec:findings_ecology}

The first set of findings that we want to compare our model to is biology, specifically macro-scale interactions as they are studied in ecology. Ecology provides an interesting comparison point as it very naturally suits the description of a distributed production system where multiple independent agents produce a joint output through their interaction. In the case of biology the productivity of a system is usually captured by the joint biomass produced by all individuals. In this first section, we hence want to look in detail at how our model fits findings of inter-individual biomass production.

\subsubsection{Specialised species develop over time in new environments}
\label{sec:ecology_specialisation}

The theory of adaptive radiations describes the development of a species when it arrives in a new environment with currently unrealised ecological niches, so parts of the natural habitat containing enough food and lack of predatory pressures to proliferate which are otherwise uninhabited. In such cases it can be observed that a species with a single ancestral lineage starts forming multiple ecologically distinct species, where new species take increasingly specialised roles in the environment \cite{yoder2010ecological}. We want to test whether our model shows the same overall dynamic observed in the data. For this we create a simulation with a multi-Gaussian workload shape and observe the average degree of specialisation of the population of agents as we increase the total number of agents. We opt for this workload shape to highlight some parts of the environment have more resources than others, however we also contrast this with a uniform workload shape. We use a fully circular network where every agent can interact via $J_N$. The results of this analysis are depicted in \cref{fig:finding_ecology_scaling}. We observe that for non-uniformly shaped distribution of resources in the environment we get the expected relationship, where agents at first take generalist roles as the environment is not yet strongly populated, but as more agents develop, they take increasingly specialised roles. Importantly this is not true for fully uniform distribution of resources across the environment, in which case our model would predict a set of generalists could potentially co-exist. Whether a truly uniform distribution of resources across the environment is realistic is however doubtful and we assume that a complex-shaped distribution would be observed in most environments. Of course, in these cases our model aligns with the expected trend.
\begin{figure}[H]
    \centering
    \includegraphics[width=0.5\textwidth]{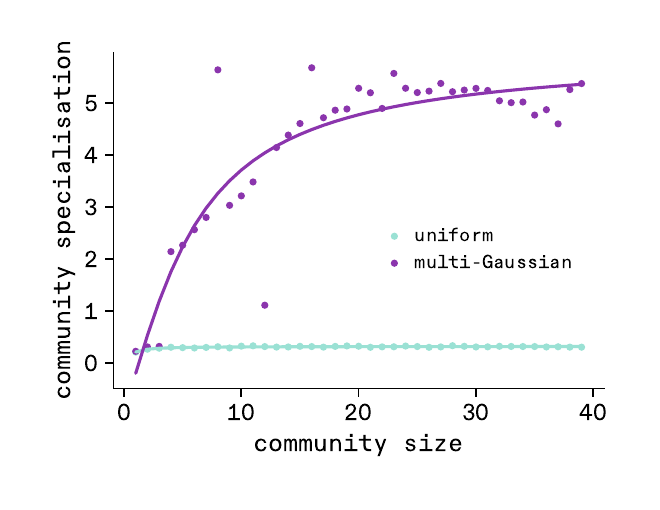}
    \caption{\textbf{According to the theory of adaptive radiations, as species proliferate in a new environment, they become increasingly specialised.} For two workload shapes (uniform, multi-Gaussian) we study the average specialisation of the converged model as we increase the number of agents. The predicted trend according to the theory of adaptive radiations is highlighted with the dotted line. As expected, the multi-Gaussian workload with its `ecological niches' within its distribution replicates the expected trend.}
    \label{fig:hsl_2_0}
    \label{fig:finding_ecology_scaling}
\end{figure}

\subsubsection{Diversity of specialisation increases and stabilises biomass production}

The preceding results cover how species interact with environments during the unusual moment of new environments opening up to species. We next want to focus on the more typical interaction of species with the environment, and the specific mechanisms that lead to increased or decreased biomass production over time. One specific example of such a mechanism has been described in the spatial insurance hypothesis \citep{yachi1999insurance, Loreau2003, schindler2010population, shanafelt2015biodiversity, eisenhauer2023heterogeneity} that predicts that community diversity both raises the temporal mean of biomass production and buffers its variability under environmental pressure. 

We verify whether these two signatures, higher mean production and lower temporal variability, emerge from the composition layer of our model when a homogeneous community is compared with a heterogeneous one. The homogeneous community is formed of two similar agents with both equal means $\mu{=}(\pi,\pi)$ and equal standard deviations $\sigma{=}0.5$. The heterogeneous community also has two agents with the same standard deviation $\sigma{=}0.5$ but with different means $\mu{=}(\tfrac\pi2,\tfrac{3\pi}{2})$. Interaction is modelled by the identity matrix and resource constraints are set to $4$. The environmental pressure is a unimodal Gaussian with the same standard deviation ($\sigma_{\text{task}}{=}0.5$) that drifts continuously as a wave, as in \citep{Loreau2003}, around the circle completing ten
full periods. At each time step we compute the production of biomass for the two communities $$\Pi(t)=1/\bigl(1+\mathcal{L}(t)),$$
with the optimisation loss of our model. The mean production is computed by averaging the production across the ten periods, and the coefficient of variation (CV), defined as the ratio of temporal variance to temporal mean, is the measure of variability we use.
We first run a single-task experiment and then generalise the result with a catalogue of sixteen unimodal environmental pressures (c.f. \cref{app_fig:workloads_unimodal} for visual of the catalogue) whose standard deviations
span the range $\sigma_{\text{task}}\!\in[0.3,2.0]$, reflecting an environmental stress that has a single dominant pressure drifting through niche space favouring one species at a given time.
For the single drifting Gaussian pressure (c.f. \cref{fig:combined_metacommunity_results} (top)), the heterogeneous community achieves a temporal mean production $70\%$ higher than the homogeneous community and has half the variability, measured by the coefficient of variation, confirming both predictions of the insurance hypothesis qualitatively. This result is further verified across the sixteen environmental pressures as shown in \cref{fig:combined_metacommunity_results} (bottom). The heterogeneous community constantly produces a higher temporal mean than its homogeneous counterpart and exhibits a lower temporal variability in all $16/16$ tasks. Table \ref{tab:ecol2_catalogue} summarises the aggregated statistics and associated tests.
\begin{figure}[H]
     \centering
     \begin{subfigure}{\linewidth}
         \centering
         \includegraphics[width=0.8\linewidth]{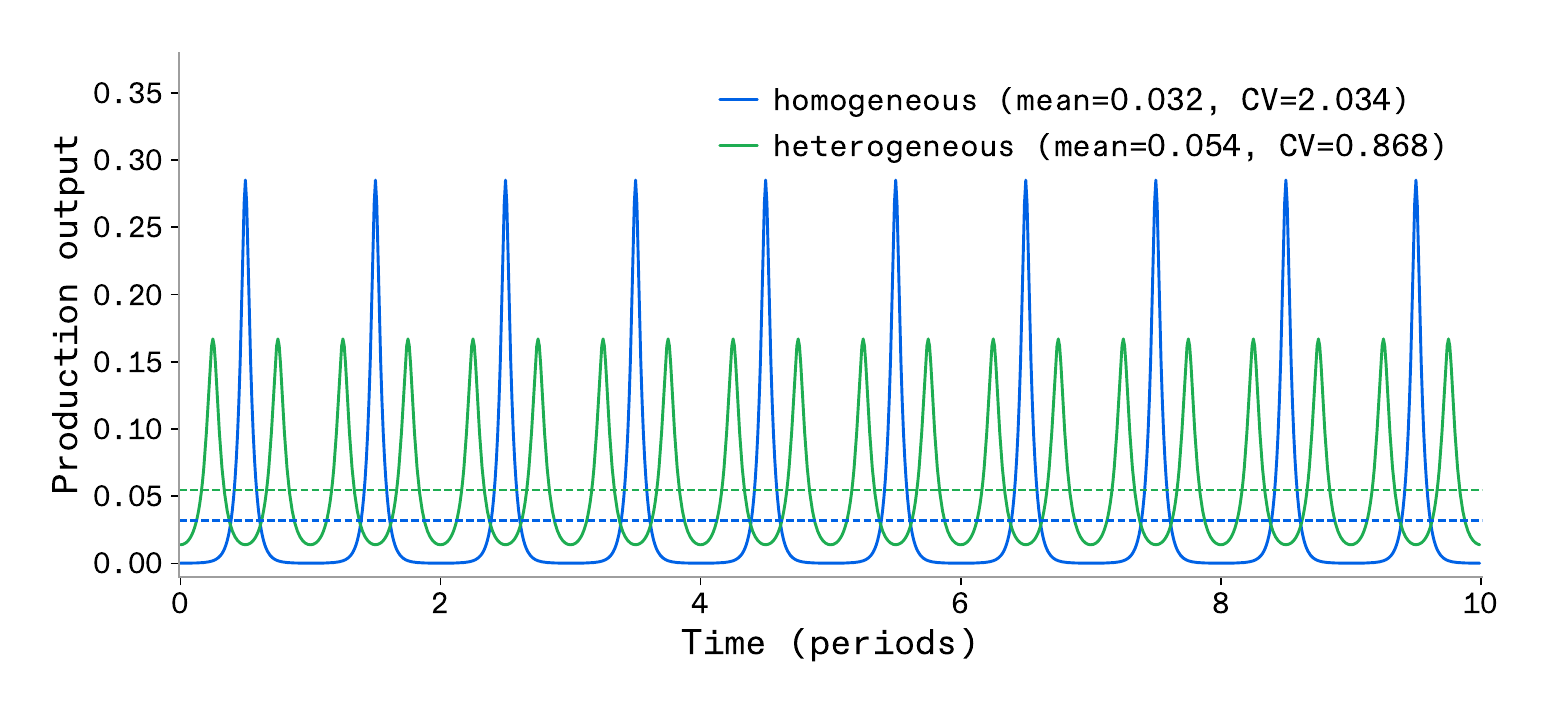}
     \end{subfigure}
     \begin{subfigure}{\linewidth}
         \centering
         \includegraphics[width=0.8\linewidth]{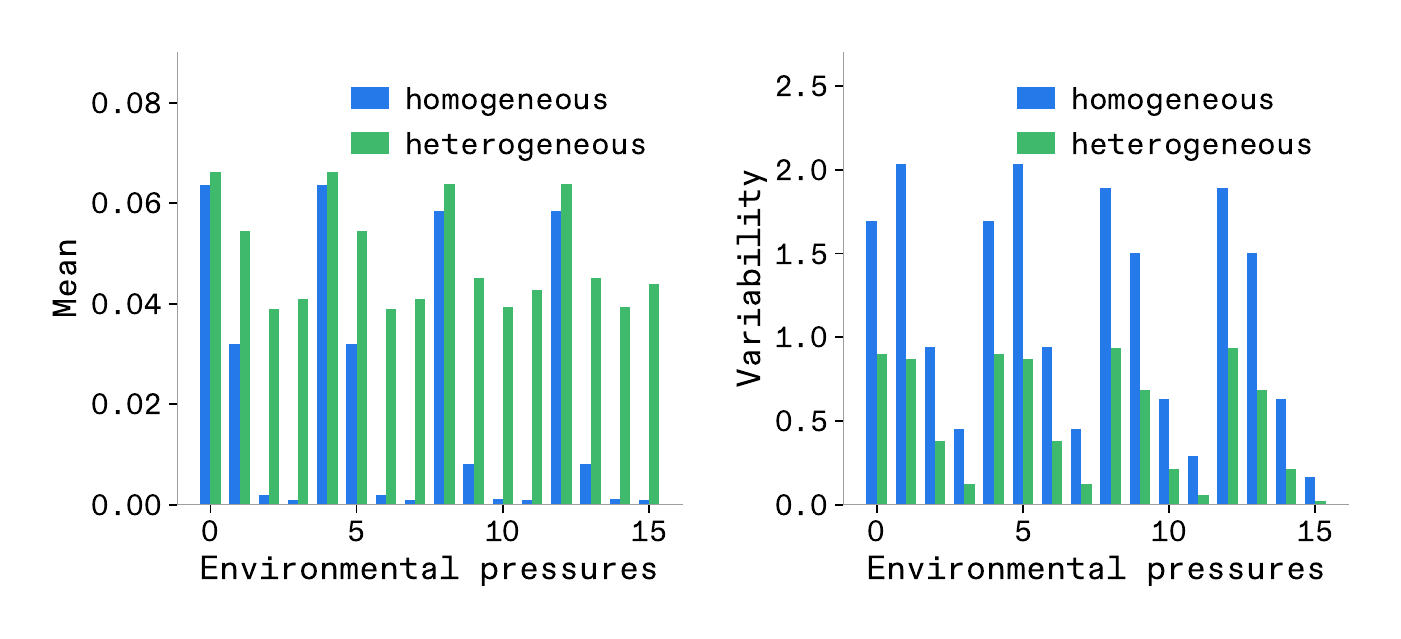}
     \end{subfigure}
     \caption{\textbf{Homogeneous and heterogeneous temporal biomass production across unimodal environmental pressures.} \textit{Top:} Production time series for the single drifting Gaussian task. The heterogeneous community (green) maintains higher and more stable production than the homogeneous community (blue). Dashed lines indicate temporal means. \textit{Bottom:} Per-task mean production (left) and coefficient of variation (right) for the sixteen environmental pressures. The heterogeneous community (green) exceeds the homogeneous one (blue) in mean production and falls below it in CV for every task ($p<10^{-4}$, paired tests).}
     \label{fig:combined_metacommunity_results}
\end{figure}

\begin{table}[H]
  \centering
  \begin{threeparttable} 
    \caption{\textbf{Statistical comparison of community biomass production and variability.} Summary statistics, paired $t$-test results, and effect directions comparing homogeneous and heterogeneous communities across 16 unimodal environmental pressures.}
    \label{tab:ecol2_catalogue}
    \small
    \begin{tabular}{l c c c c c}
      \toprule
      Statistic
        & Homogeneous
        & Heterogeneous
        & Test
        & $p$-value
        & Direction \\
      \midrule
      Mean production
        & $0.021$
        & $0.049\;(+134\%)$
        & $t = 7.24$
        & $2.86\!\times\!10^{-6}$
        & $16/16$ \\[3pt]
      Variability\tnote{a} 
        & $1.17$
        & $0.52\;(-56\%)$
        & $t = 8.06$
        & $7.91\!\times\!10^{-7}$
        & $16/16$ \\
      \bottomrule
    \end{tabular}
    \begin{tablenotes}
      \small
      \item[a] A non-parametric Wilcoxon signed-rank test further confirms the CV result ($W{=}0$, $p{=}4.34\!\times\!10^{-4}$).
    \end{tablenotes}
  \end{threeparttable}
\end{table}
Both signatures of the spatial insurance hypothesis, elevated mean biomass production and buffered temporal variability, are reproduced by the composition and task layers of our production model. Crucially, these properties emerge from the interaction between diversity of agents' specialisations and a drifting environmental task, with no calibration to ecological data.

\subsection{Neuroscience}
After having looked at biology more through a macro-level perspective of species interacting, we now want to focus on the smaller scale processes within individual animals, specifically information processing in the brain. For this, \cref{sec:findings_neurons} will first discuss the interaction of individual neurons during perception, followed by \cref{sec:findings_MD} discussing how multiple regions specialise to collaborate via the connectivity and topology of the brain's connectome.

\subsubsection{How neurons interact to jointly perceive stimuli}
\label{sec:findings_neurons}
A large number of neurons in any animal's nervous system are concerned with processing visual information, and they are distributed across the eye's retina as well as species-specific visual processing regions of the central nervous system such as the visual cortex in mammals \cite{grill2004human}. Many of these `vision neurons' act like visual filters, responding to specific visual elements present across images, such as dots, lines, and directed movement \cite{hubel1962receptive, grill2004human}. We here see a system where large complex visual stimuli are perceived through the joint work of many distributed agents and hence we want to study whether the way that they work together follows our model system. 

To do so, we can study the specific distributions of specialisations that are observed in studies \cite{laughlin1981simple, qiu2021natural}.
If one looks at the distribution of visual filters in the visual system, one can observe a pattern that the distribution of such filters is not homogeneous \cite{wells2016variation, qiu2021natural}. Instead, when studying the spatial distribution of visual filters within the retina, where the location of a filter in the retina is directly linked to a location in the visual field, one finds that they are heterogeneously distributed according to expected stimuli \cite{laughlin1981simple}. As an example one can find that the distribution of specific colour perceiving filters in the retina of mice follows the distribution needed to identify colour contrasts well in the respective upper vs lower part of the retina, related to the difference in colours in the sky vs the ground \cite{qiu2021natural}.

We wish to verify whether the optimisation mechanism of our model drives the agents' skill densities, which correspond to the distribution of neural filters, to converge separately onto the stimulus distribution each cluster receives, and jointly onto the full stimuli distribution. The stimuli distribution maps onto the task, that is the workload density over the operations space. We set up the experiment as follows. The overall task is a bimodal Gaussian $W_0$. We consider 4 agents arranged in two clusters of two, and each cluster of agents is presented with one of the two modes corresponding to individual stimulus. The interaction matrix is block diagonal: agents within each cluster communicate perfectly ($Q_{ij}=1$) while agents across clusters do not communicate at all ($Q_{ij}=0$). The task consists of two Gaussian modes centred at $\mu_{\text{task}}=(\pi/2, 3\pi/2)$, both with standard deviation $\sigma_{\text{task}}=0.7$, presented in a disaggregated fashion: cluster 1 receives only the first mode and cluster 2 only the second. From the same initialisation we optimise the agents' skill parameters (means and standard deviations), minimising the production loss, under a resource constraint of twice the number of agents.
The results are displayed in \cref{fig:N2_1}. In the left panel, the combined filter of cluster 1 overlaps almost exactly with stimulus 1, showing that the two agents in this cluster have jointly shaped their skill densities to match the Gaussian mode they receive. The centre panel shows the same for cluster 2 and stimulus 2. In the right panel, the sum of all four agents' filters superimposes the full bimodal stimuli, and the two curves are virtually indistinguishable. The filters and stimuli match at every point of the operations space.
\begin{figure}[H]
    \centering
    \includegraphics[width=1\textwidth]{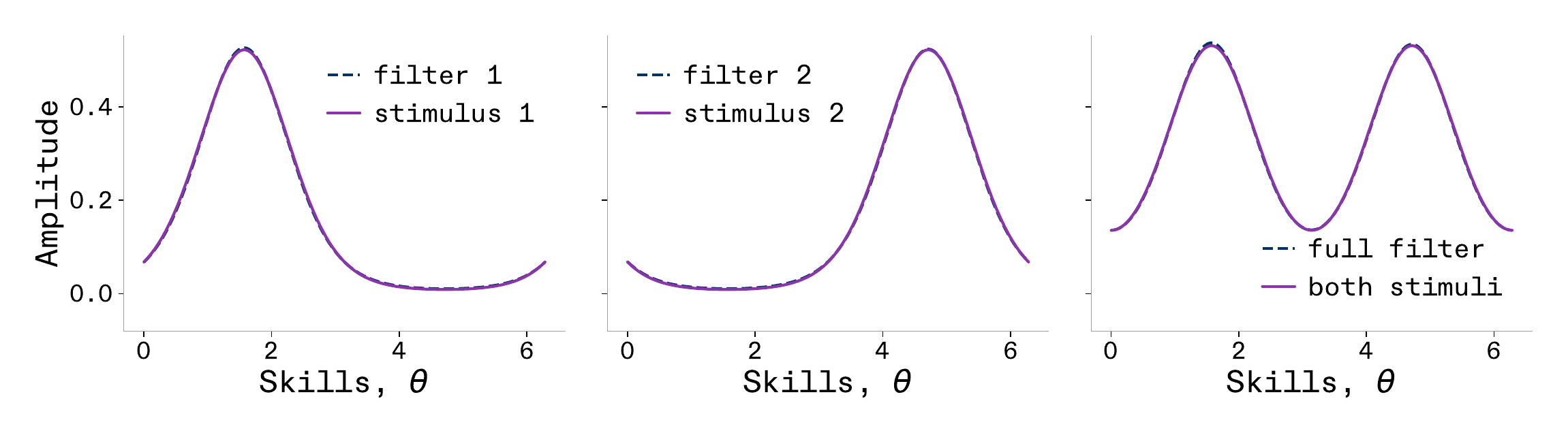}
    \caption{\textbf{Agent populations naturally adapt their filters to match local stimulus distributions.} Two independent pairs of agents (populations) are optimised on different stimulus distributions, mirroring distinct regions of a visual field. Solid purple lines represent the expected stimulus (workload), and dashed blue lines represent the learned filters. \textit{(Left, Center)} When isolated and presented with different expected stimuli, the two populations independently converge to specialised heterogeneities that perfectly match their local workloads. \textit{(Right)} The combined full system demonstrates how these spatially segregated, specialised populations collectively cover the broader, bimodal distribution of the entire environment, analogous to the heterogeneous distribution of colour filters across the retina.}
    \label{fig:N2_1}
\end{figure}
The Distributed Production System therefore reproduces the neuroscience finding that neural filter distributions adapt to match the stimulus distributions they encode. Each cluster of agents, receiving only a fraction of the total workload, independently converges onto the shape of its local stimulus, and the aggregate filter of the full population recovers the complete bimodal stimuli. This emergence arises from the optimisation of production under resource constraints. The result provides further validation that the model captures an organising property of biological neural systems, whereby the heterogeneous tuning of neuronal filters is not arbitrary but is the consequence of efficient adaptation to the environment.

\subsubsection{How regions specialise to collaborate via the brain's connectome}
\label{sec:findings_MD}
In the brain, complex information is processed not by individual neurons but instead by complex networks of them, arranged in brain regions with their unique specialisations \cite{zhou2025default, assem2020domain, achterberg2023building}. Depending on the situation and complexity of the information, different brain regions collaborate to jointly produce complex behaviour \citep{achterberg2023building, duncan2025construction, mashour2020conscious}. An intriguing finding from large primate brains is the variability of specialisation across brains with many regions. Specifically, we observe that a topologically central region acts as a generalist for complex integrative tasks, called the Multiple-Demand (MD) system, and links to a wide network of more specialised regions in the topological surroundings, which can handle domain-specific processing \cite{duncan2010multiple, duncan2020integrated, achterberg2023building, assem2020domain}.

\begin{figure}[H]
    \centering
    \includegraphics[width=0.3\textwidth]{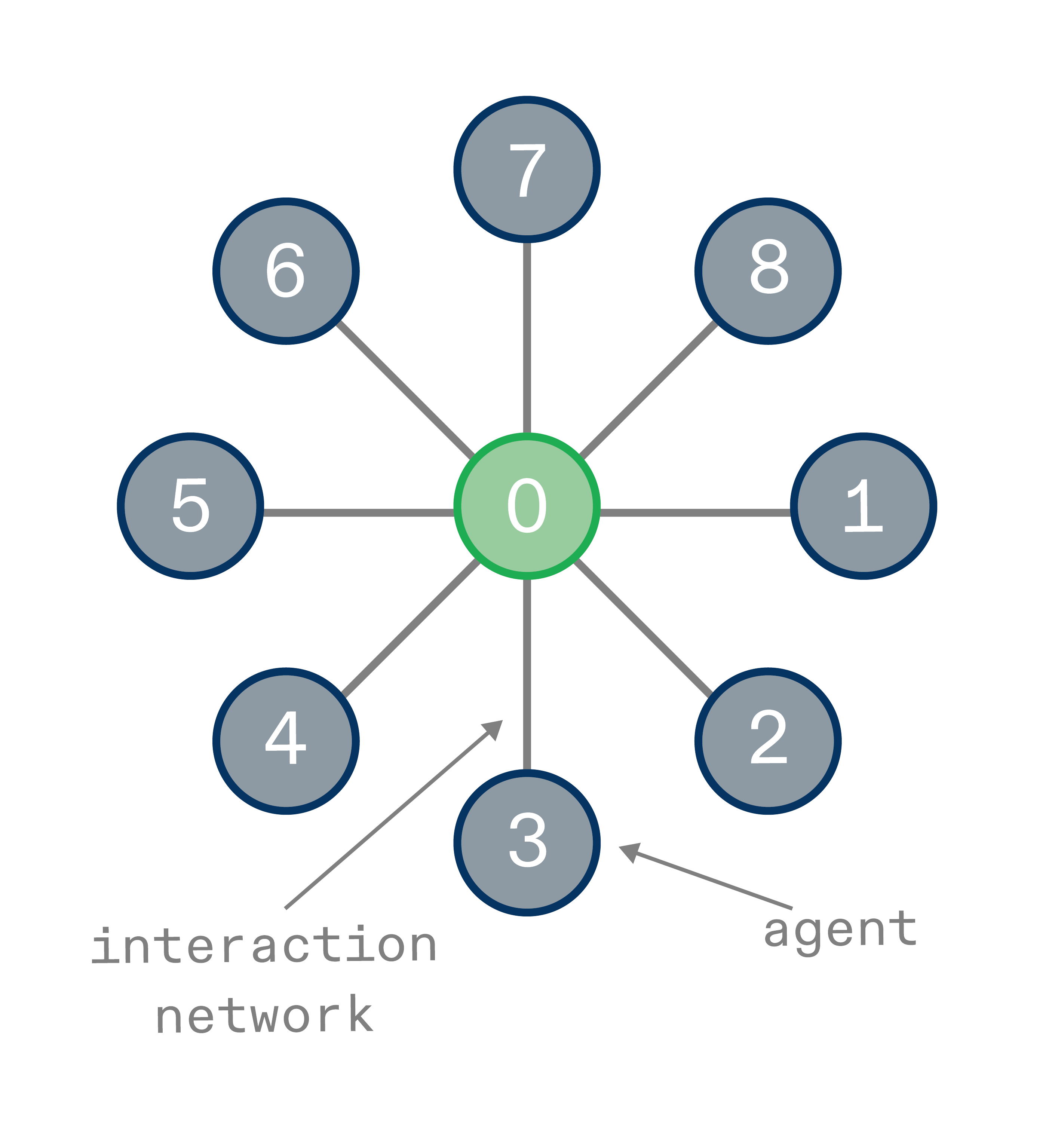}
    \caption{\textbf{Network topology underlying the MD system experiment.} To run the MD system experiment we use a star-like topology, where we expect the central agent to converge on a generalist role, with peripheral agents converging on specialised roles.}
    \label{fig:N1_1}
    \label{fig:findings_MD_topology}
\end{figure}

To test whether this emerges in our model, we create a setup with a Gaussian mixture workload shape and a set of $9$ agents that are connected in a star-like topology with one agent in the middle and all other agents at the periphery connected to the central agent (\cref{fig:findings_MD_topology}). We set a high resource constraint of 100 and we perform a Monte-Carlo approximation of the specialisation level of the optimal agents' skill densities, over 200 random initialisations. Figure \ref{fig:N1_1_grid} (left) shows an example of optimal agents' skill densities where the central agent (agent number 0) shows a more generalist set of skills spread across the workload distribution, while the peripheral agents show more specialist skill distributions around the central agent. Figure \ref{fig:N1_1_grid} (right) presents the Monte-Carlo means of the optimal level of specialisation and the segments represent one standard error around the mean. Though the deviations around the mean of the peripheral agents' average level of specialisation are large, the level of specialisation of the central agent is clearly less than the specialisations of the peripheral agents. This replicates the functional distribution observed in the context of the Multiple-Demand system.

\begin{figure}[H]
    \centering     
    \begin{subfigure}[t]{0.48\textwidth}
        \centering
        \includegraphics[width=\textwidth]{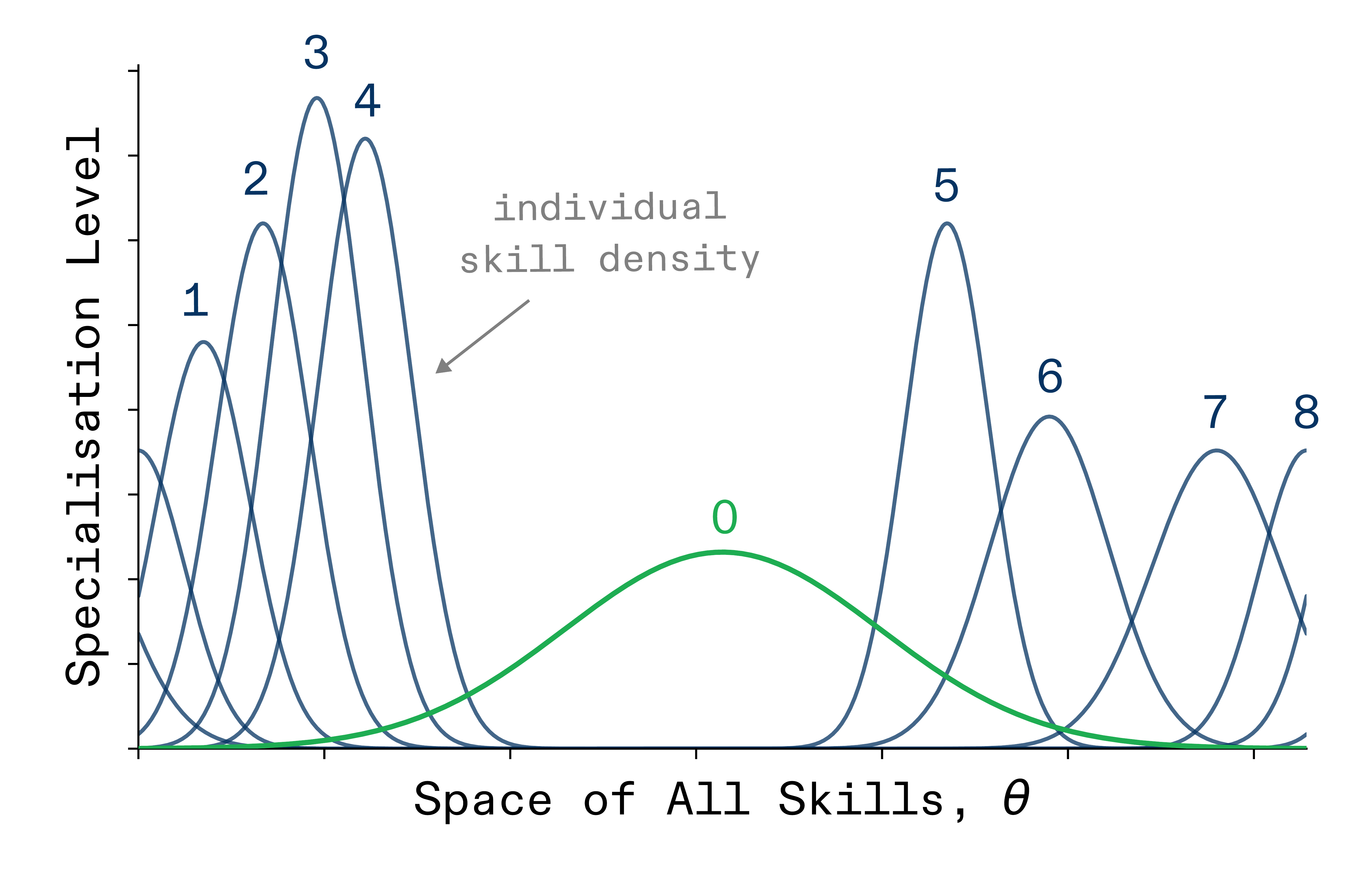}
        \label{fig:N1_1_3}
    \end{subfigure}
    \hfill
    \begin{subfigure}[t]{0.48\textwidth}
        \centering
        \raisebox{-9pt}{\includegraphics[width=\textwidth]{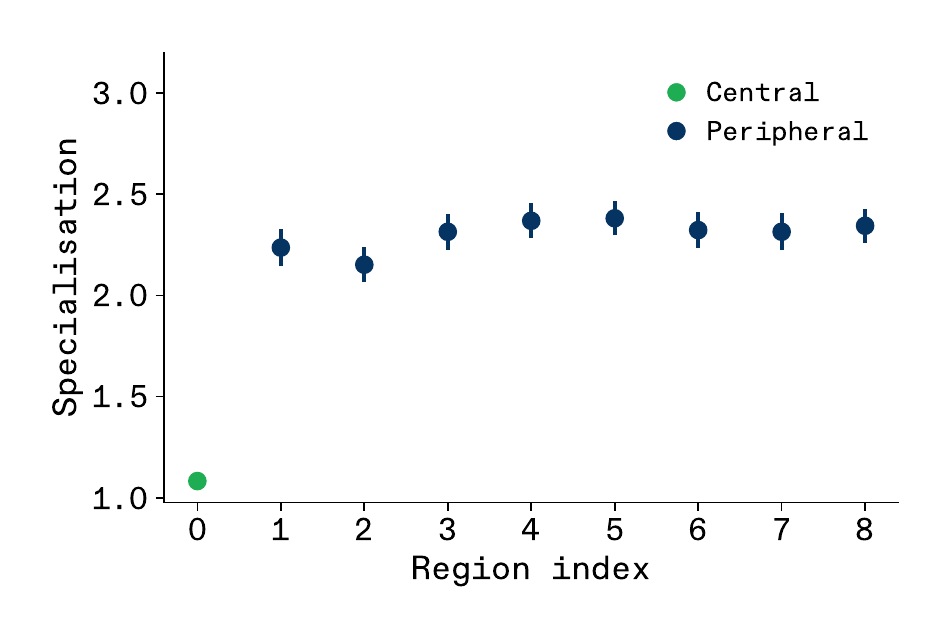}}
        \label{fig:neuron_specialisation_points}
        \label{fig:findings_MD}
    \end{subfigure}
    
    \caption{\textbf{The model converges on brain-like structure-function topology.} In the brain we observe that the topologically central region (MD system; 0) takes a generalist role during problem solving, with more peripheral regions taking more specialised roles (1-8). We see these findings replicated in our model across the skill distributions (left) and their respective specialisation levels averaged across multiple optimisations from random initialisation (right). Error bars on figures represent standard errors.}
    \label{fig:N1_1_grid}
\end{figure}

\subsubsection{Three-tier organisation of the Multiple-Demand System}

Beyond the binary centre/periphery distinction, newer neuroimaging studies based on new high-resolution imaging protocols identify a more granular three-tier organisation in the across cortex in the context of the Multiple-demand system (\cref{sec:findings_MD}): a highly connected core that participates broadly across cognitive demands, a penumbra of regions with intermediate connectivity and moderate domain generality, and a peripheral belt of narrowly tuned, domain-specific regions \cite{assem2020domain, duncan2025construction}. Specialisation increases monotonically from core to penumbra to periphery, mirroring the decrease in graph centrality.

We test whether this graded structure/function relationship emerges from our model when agents are arranged in a matching three-tier topology. Each agent plays the role of a brain region: agent~0 serves as the core, connected to eight penumbra agents, each of which links to three peripheral agents that have no other connections (\cref{fig:N1_penumbra_topology}). The workload is a fixed mixture of Gaussian densities. We optimise skill densities of all agents under a high resource constraint and measure the specialisation of each agent, averaged within each tier. We report the mean and standard error across $100$ independent Monte-Carlo simulations in \cref{fig:N1_penumbra_specialisation}.

\begin{figure}[H]
    \centering
    \includegraphics[width=0.5\linewidth]{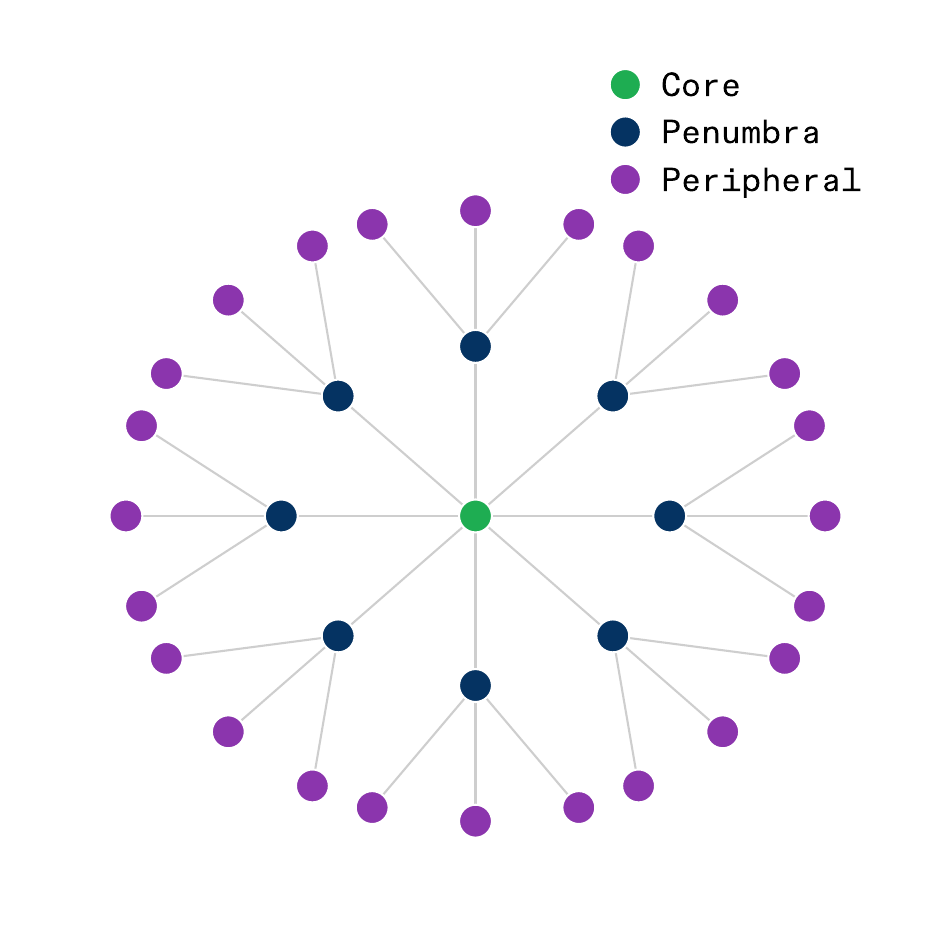}
    \caption{\textbf{Three-tier network topology.} The core agent (green) connects to all penumbra agents (blue), each of which links to three peripheral agents (purple) that have no other connections.}
    \label{fig:N1_penumbra_topology}
\end{figure}

The model recovers the expected gradient: core agents converge on the broadest skill distributions, penumbra agents on intermediate ones, and peripheral agents on the narrowest (\cref{fig:N1_penumbra_specialisation}). The ordering core~$<$~penumbra~$<$~peripheral holds with non-overlapping standard error bars, confirming that the three-tier specialisation hierarchy emerges from the optimisation. These results show that the three-tier specialisation gradient observed in the primate cortex arises in our model from the interplay between network topology and resource-constrained optimisation, extending our earlier replication of the centre/periphery dichotomy to the finer-grained core/penumbra/domain-specific hierarchy of the MD system.
\begin{figure}[H]
    \centering
    \includegraphics[width=0.5\linewidth]{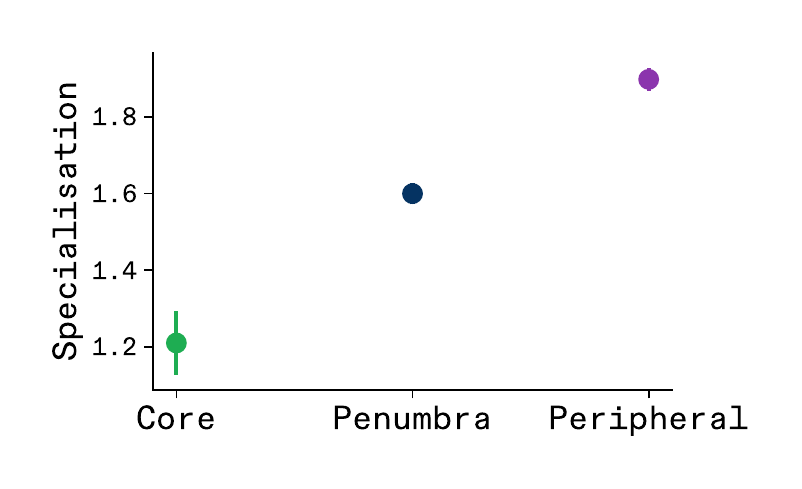}
    \caption{\textbf{Specialisation increases from core to periphery.} Mean specialisation per tier averaged over 100 simulations; vertical bars denote one standard error below and above the mean. Error bars on figures represent standard errors}
    \label{fig:N1_penumbra_specialisation}
\end{figure}

\subsection{Economic systems}

While the analyses in the prior sections focused on findings in the context of biology, distributed production systems are a much more general class of systems. To look beyond ecology and neuroscience, we next focus on human-made systems and hence analyse the behaviour of economic systems. We will first compare our model to productivity gains from global trade (\cref{sec:findings_trade}), followed by findings from interactions of firms and efficient labour markets (\cref{sec:findings_firms} and \cref{sec:findings_labour_market}).

\subsubsection{International collaboration in global trade}
\label{sec:findings_trade}
A foundational result in trade theory is that countries with differing production capacities collectively gain from trade~\cite{Ricardo_2015, Eaton2012}.
The core mechanism is comparative advantage: even when one country is more productive across all goods, both countries benefit if each specialises in the good it produces relatively more efficiently and trades for the remainder. This robust prediction underpins the case for open trade networks over isolated, autarchic production. However we want to note that this model truly assumes maximising production performance and ignores any considerations around sovereignty of production.  Crucially, the gains from trade are only realisable when countries are connected; isolated countries, unable to exchange output, have their production solely dictated by consumption demand and converge on homogeneous, self sufficient production profiles.

We wish to verify whether the model reproduces the classical finding: when two countries are allowed to trade, their collective production and realised consumption exceed those attainable under autarchy. To this end, we set up two scenarios. In the \emph{trade} scenario, countries are represented by two connected agents that optimise their production to meet a bi-Gaussian task representing the demand $W_0$. Here the cost can be seen as a measure of the shortfall between consumption demand and production. In the \emph{autarchy} scenario, the graph of trade connections decomposes into two singleton components. For each scenario $s \in \{\text{autarchy}, \text{trade}\}$, we compute the optimal production $W_{1,s}$ and the realised consumption,
$$\int_0^{2\pi} \min\!\bigl(W_0(\theta),\, W_{1,s}(\theta)\bigr)\, \mathrm{d}\theta,$$
which captures how much of the demanded goods/products have been produced. Both scenarios use 5\,000 gradient descent steps with a learning rate of $0.01$ and early stopping with patience $100$ and a fixed resource constraint.

Figure \ref{fig:production_autarchy_vs_trade} displays the optimised production profiles. Under autarchy (top row), each country converges toward a broad, generalist production capacity that attempts to cover both demand peaks simultaneously. The resulting individual profiles are nearly identical and relatively flat, reflecting the impossibility for a single agent to efficiently serve two separated demand modes. Under trade (bottom row), each country instead specialises its production toward one of the two demand peaks. The production in the trade scenario covers the demand more closely and with greater amplitude. Figure \ref{fig:realised_consumption} further shows the realised consumption for both scenarios. The shaded area, representing the realised consumption, is strictly larger under trade than under autarchy.
This confirms that enabling exchanging production outcomes, allows the two countries to collectively satisfy a greater share of consumption demand via different different and higher specialisation.

\begin{figure}[H]
    \centering
    \includegraphics[width=0.8\linewidth]{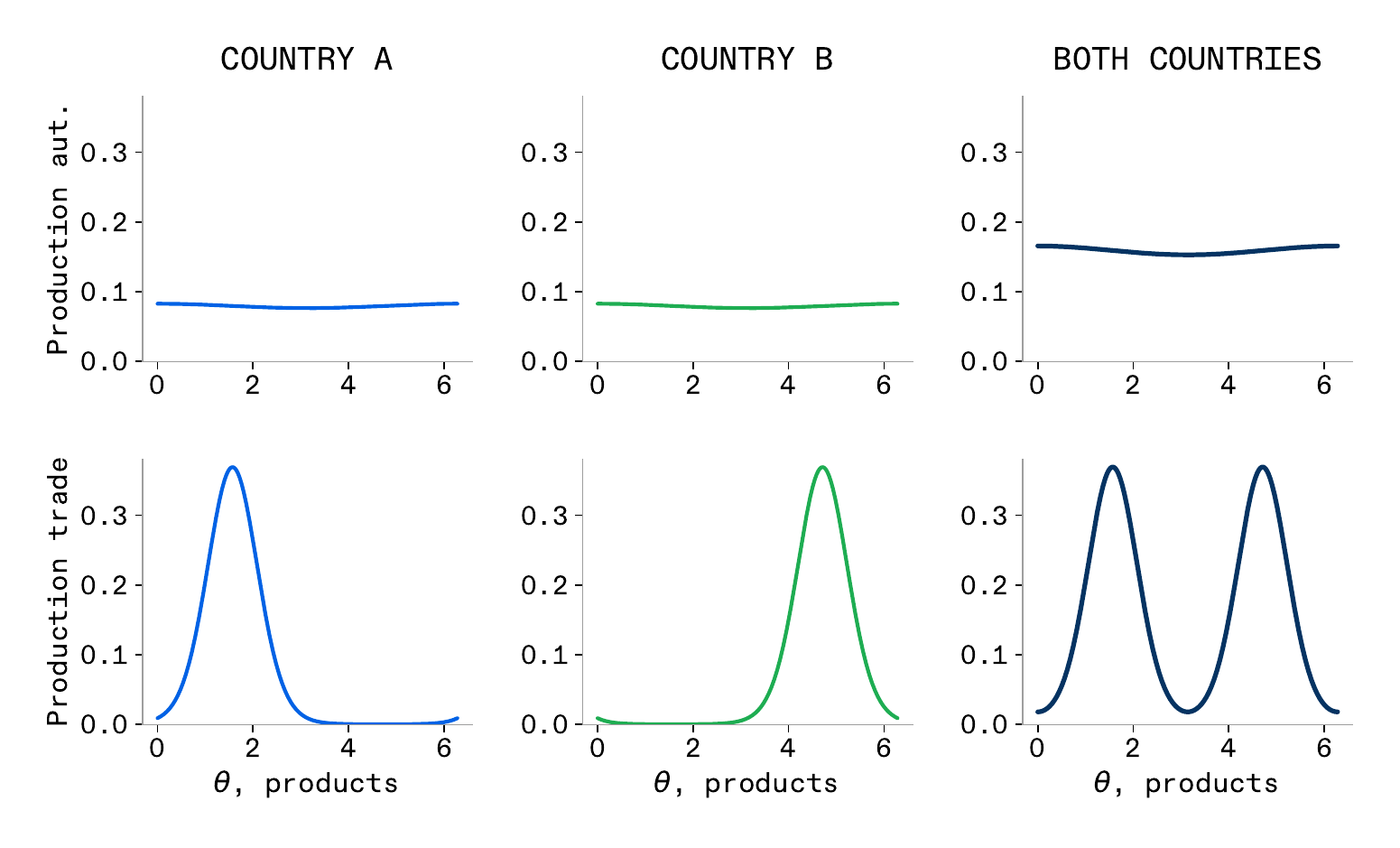}
    \caption{\textbf{Production profiles of two countries.} Top row: autarchy. Bottom row: trade. Under trade, each country specialises toward a distinct demand peak, yielding higher combined coverage.}
    \label{fig:production_autarchy_vs_trade}
\end{figure}
\begin{figure}[H]
    \centering
    \includegraphics[width=0.8\linewidth]{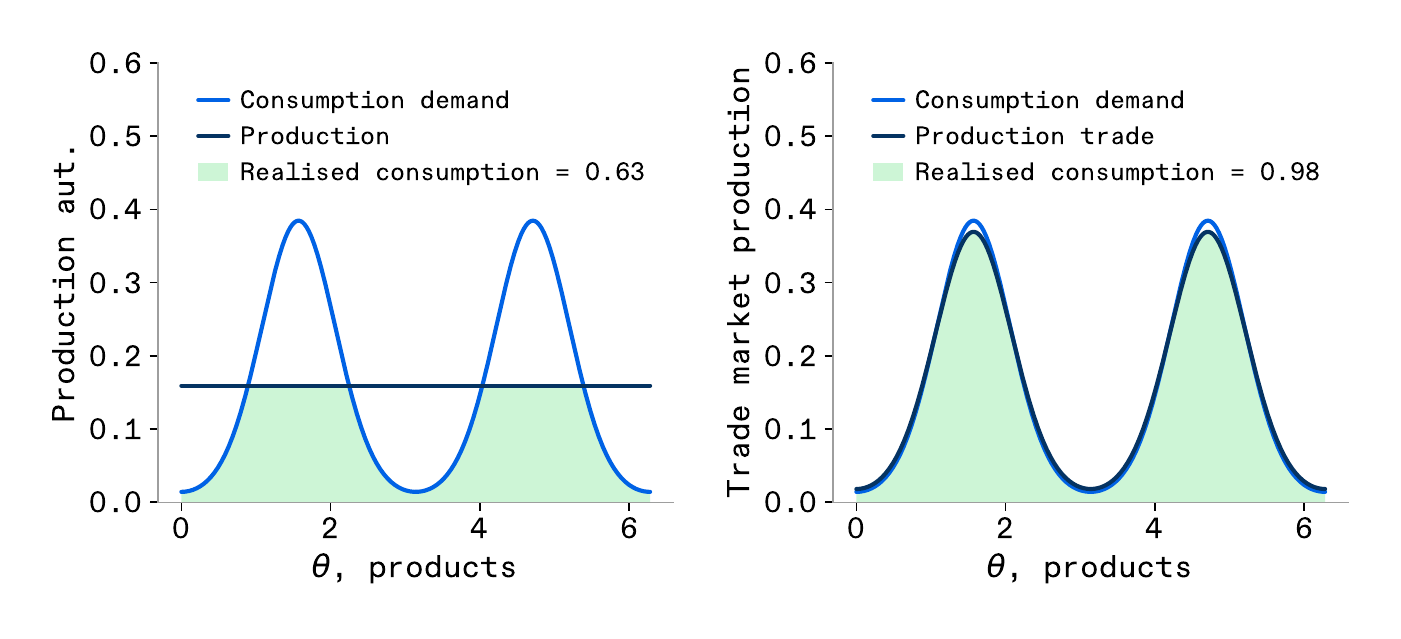}
    \caption{\textbf{Trading between countries increase realised consumption.} Realised consumption (shaded area) under autarchy (left) and trade (right). The integral of $\min(d, W)$ is strictly larger under trade, confirming collective gains from specialisation and exchange.}
    \label{fig:realised_consumption}
\end{figure}

These results demonstrate that our production model reproduces the classical trade theory prediction: countries with differing production capacities collectively gain from trade. The mechanism is consistent with Ricardian comparative advantage. When countries are connected and can coordinate production, each specialises in the region of the product space where it holds a relative efficiency advantage, and the resulting exchange yields higher aggregate consumption than isolated, self sufficient production.

\subsubsection{Firm-level production}
\label{sec:findings_firms}

On a smaller scale, foundations in organisational economics and social science emphasise that the optimal configuration of a firm relies heavily on how knowledge is distributed across agents. \cite{Garicano2000} demonstrates that diverse agents allow for the efficient organisation of knowledge: thanks to communication, generalists handle routine tasks while rare, complex problems are resolved by specialists, thereby minimising the costs of problem-solving. Complementing this,
\cite{HongPage2001, HongPage2004} establish that cognitive diversity between agents is a primary driver of performance in complex problem-solving environments. Their research concludes that a diverse group of problem-solvers can consistently outperform a group of high-ability identical individuals. Likewise, in the context of firms, research on productivity has shown that a firm of similarly talented workers can increase its output first by introducing diversity of skills across workers, and then further by increasing the degree to which individual workers specialise \citep{Freund2025}. The intuition is that workers with complementary, non-overlapping skills divide the production space efficiently: each unit of production capacity goes to the worker best suited to it, with no waste from redundant overlap. We argue that our model of production systems is capable of reproducing this finding.

To test this, we design a Monte Carlo experiment. We consider a firm modelled by a production system of 6 agents who collaborate in a circular topology ($Q = J_6$) and resource constraint equal to twice the number of agents. Each agent $i$ is characterised by its skill centre $\mu_i$ and skill breadth $\sigma_i$, optimised to minimise the local computation mismatch against a given task. To control the degree of workforce homogeneity, we use a second order prior $\mu^{(2)}_i = \pi$ for all $i$, and vary a stiffness parameter that penalises deviations from this common prior. The optimisation loss is therefore: 
$$\mathcal{L} = \mathcal{L}_m + \mathcal{L}_s,$$
with $\mathcal{L}_s\left(\mu, \mu^{(2)}\right) = \frac{\lambda_s}{N} \sum_{i = 1}^N d_{\circ}\left(\mu_i, \mu^{(2)}_i\right).$ At high stiffness, agents are constrained to similar skill density mean; as the constraint relaxes, they are free to diversify. We sample $50$ random tasks, each a Gaussian mixture with $1$ to $4$ modes drawn uniformly on $[0, 2\pi)$ with standard deviations in $[0.2, 1.5]$. For each task, we sweep across all stiffness values and record the resulting heterogeneity, average specialisation, and task productivity $1 / \mathcal{L}_m$. For each task, we compute the Spearman rank correlation between productivity and, respectively, heterogeneity and specialisation across the stiffness sweep. A one-sample $t$-test then assesses whether the mean correlation across tasks is significantly positive.

The results, shown in \cref{fig:econ2_mc}, are unambiguous. The mean Spearman correlation between heterogeneity and productivity is $0.822$ ($p = 9.12 \times 10^{-39}$), and between specialisation and productivity $0.763$ ($p = 1.58 \times 10^{-28}$). Across all $50$ tasks, relaxing the homogeneity constraint consistently allows agents to differentiate their skill centres and sharpen their expertise, which in turn reduces the mismatch with the task demand and raises productivity. The effect is robust: not a single task violates the positive association.
\begin{figure}[H]
    \centering
    \includegraphics[width=0.8\linewidth]{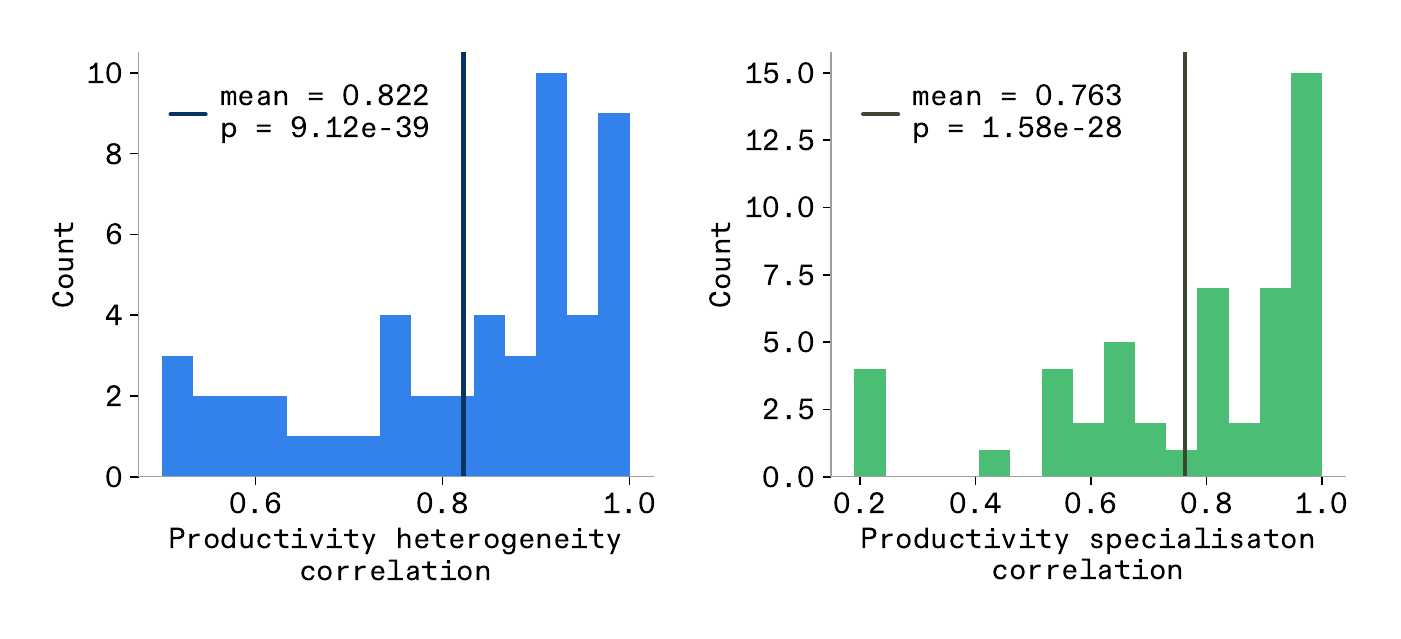}
    \caption{\textbf{Productivity is positively correlated with heterogeneity and specialisation.} Monte Carlo validation ($50$ random tasks). Distribution of Spearman rank correlations between productivity and heterogeneity (left) or specialisation (right) across a stiffness sweep. Vertical lines indicate the sample mean; both are significantly positive.}
    \label{fig:econ2_mc}
\end{figure}

These results confirm that our model recovers an established finding in economics: productivity grows when a workforce diversifies its competences and when individual workers deepen their specialisation. More broadly, this exercise illustrates that our system of $N$ optimised agents endowed with circular skill distributions, interacting through a communication matrix, and minimising a shared objective, behaves as a stylised firm whose production function exhibits the same qualitative properties documented in the organisational economics literature. The model can therefore serve as a microfounded framework in which classical results on the division of labour, comparative advantage, and team composition emerge endogenously from the optimisation of individual skill profiles.
The relationship between heterogeneity and specialisation is not symmetric, however. The reason is geometric: since individual production functions are wrapped Gaussian densities, workers with similar skill profiles will always have overlapping areas under their production functions, and any production unit in that overlap is wastefully competed for rather than efficiently divided.

\subsubsection{Labour market interactions}
\label{sec:findings_labour_market}

Economic organisations are not isolated systems as considered in \cref{sec:findings_firms} but embedded in an environment that acts on them. Research on organisational economics within labour markets highlights that the degree of worker specialisation depends on the efficiency of workers/firms matching in labour markets. Dating back to Adam Smith, it is acknowledged that larger markets increase worker specialisation, as workers can more easily find a matching firm for their specialised abilities, and reciprocally as firms can more easily find workers with suited specialised abilities \citep{Kim1989, GaricanoHubbard2009}. When firms actively search for workers who complement their existing workforce, the specialisation of hired workers increases as labour markets become more efficient and search becomes less costly \citep{Freund2025}. These connect to the findings of the previous \cref{sec:findings_trade}: just as network connectivity between countries enabled complementary specialisation to emerge, a frictionless labour market can be understood as a high-connectivity network in which workers and firms can readily reach the most complementary counterparts available. Conversely, a highly inefficient market introduces a cost on worker search, constraining firms to nearby, less specialised hires and suppressing the complementarity gains described in \cref{sec:findings_firms}.
Co-worker search is not a native mechanism of our model, but the core prediction can be approximated by treating communication between workers as costly. 

We fix a production system, subject to a resource constraint of 4, composed of two agents connected by $J_2$ and a task as a bi-Gaussian with peaks of equal heights at $(\pi/2, 3\pi/2)$. We consider a range of communication costs parameterised by a friction parameter $\lambda_c$ defined as follow, 
$$\mathcal{L}_c(s;Q) = \lambda_c \lVert Q \odot D(s)\rVert_1 = 2 \lambda_c \frac{d_{\circ}(\mu_1, \mu_2)}{\sqrt{\sigma_1^2 + \sigma_2^2}},$$
and optimise the production with our gradient-based optimisation algorithm. For each solution we measure the complementarity using the circular distance between the agents $d_{\circ}(\mu_1, \mu_2)$ as a proxy, with $\pi$ being the maximum complementarity, as it corresponds to maximum performance where agents are optimally located below the respective peaks of the bi-Gaussian.
We observe three regimes, depicted in \cref{fig:econ3_4}. In a low friction labour market, the firm converges on two maximally complementary specialised agents. As friction increases, the optimal agents are progressively less specialised, reflecting the firm settling for a more reachable but less complementary worker. In maximally inefficient markets, the two agents become generalists and productivity gains from specialisation described in the previous section are lost entirely, mirroring the autarchy regime in \cref{sec:findings_trade}.
\begin{figure}[H]
    \centering
    \includegraphics[width=0.5\textwidth]{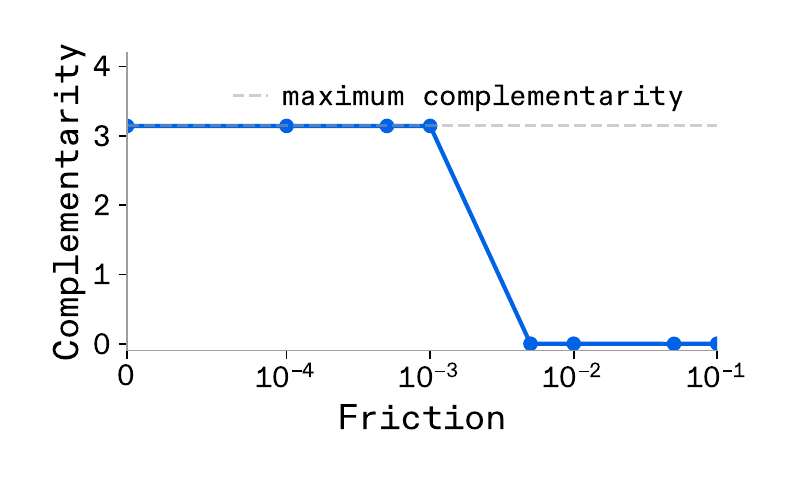}
    \caption{\textbf{Labour market efficiency determines the specialisation of hired
    co-workers.} Specialisation of the optimal hire as a function of labour market
    inefficiency $\alpha$ (right-to-left: decreasing friction). Three regimes are
    visible: a high-specialisation plateau under efficient markets, a transition region
    of declining specialisation as friction increases, and a low-specialisation plateau
    under maximally inefficient markets where search cost outweighs any complementarity
    gain.}
    \label{fig:econ3_4}
\end{figure}
Our model thus recovers the qualitative prediction of \cite{Freund2025}: labour market efficiency is a precondition for firms to realise the complementarity gains from specialised hiring.

\subsubsection{Optimal portfolio theory}
\label{sec:findings_portfolio}
A cornerstone of modern portfolio theory, formalised by Markowitz \citep{Markowitz1952},
is that diversification is the only free lunch available to a rational investor. By holding a portfolio of securities of diverse sectors, an investor
can reduce the temporal variability of returns without sacrificing the expected return. We verify whether our production system model, specifically its
composition mechanism (teamwork from multiple agents) and its external task (a drifting market evolution), can reproduce this diversification effect, thereby validating these model features in a financial context.

Each security is modelled as an agent whose skill density profile represents the return spectrum over financial sectors of the security. The market's return corresponds to a unimodal Gaussian task that drifts periodically as a wave with an additional Brownian-motion perturbation of small amplitude
($\sigma_{\mathrm{BM}}=0.15$), mimicking the stochastic component of real market evolution. The unimodality is justified by a modelling assumption: at any point in time there exists only one sector that provides more returns than any of the other sectors. We compare two portfolios of
$2$ securities: a homogeneous (non-diversified) portfolio with both securities centred at $\mu=\pi$, and a
heterogeneous (diversified) portfolio with securities at $\mu=\pi/2$ and $\mu=3\pi/2$ (all with
$\sigma=0.5$). To ensure robustness, we repeat the comparison across $16$ market initial conditions drawn from a task catalogue:
unimodal Gaussians whose standard deviations range from low ($\sigma=0.3$) to high ($\sigma=2.0$), reflecting the range from sector-specific shocks to economy-wide fluctuations; a visual of the catalogue is provided in \cref{app_fig:workloads_unimodal}. For each market evolution and for each portfolio we compute the temporal mean return $r = 1/(1+\mathcal{L})$, where $\mathcal{L}$ denotes the optimisation loss, and the coefficient of variation.
\Cref{fig:fin1_bars} displays higher mean returns and lower coefficients of variation for the diversified portfolio across the $16$ evolutions and \cref{tab:fin1_tests} verifies statistical significance via $t$-tests and $p$-values. Consequently, our production-system model reproduces the Markowitz diversification effect: the heterogeneous portfolio delivers comparable or superior expected returns while exhibiting lower return variability across all tested market evolutions.
\begin{figure}[H]
  \centering
  \includegraphics[width=0.9\linewidth]{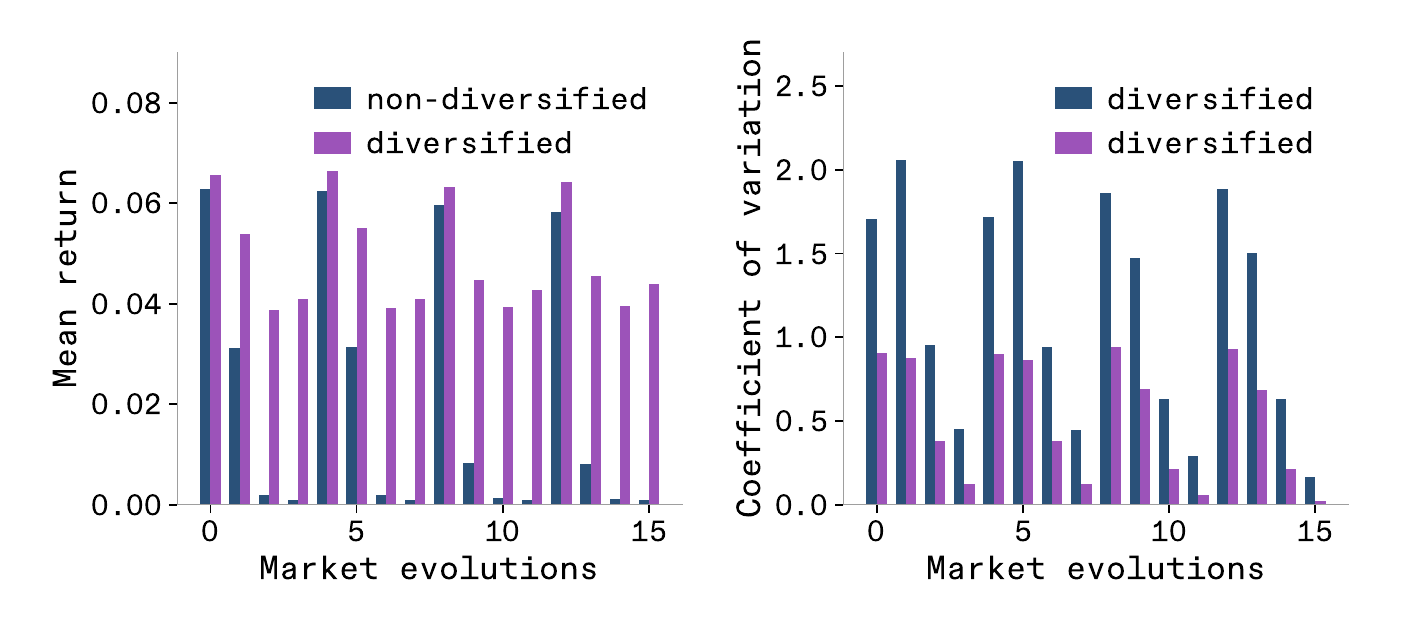}
  \caption{\textbf{Per-market-evolution mean return (left) and coefficient of
    variation (right)} for the homogeneous portfolio and the
    diversified portfolio across $16$ unimodal market conditions
    spanning narrow to broad sector spectra. The diversified portfolio achieves higher
    mean return and lower CV across all evolutions.}
  \label{fig:fin1_bars}
\end{figure}
\begin{table}[H]
  \centering
  \begin{threeparttable}
    \small
    \caption{Statistical tests across different market evolutions.}
    \label{tab:fin1_tests}
    \begin{tabular}{lccc}
      \toprule
      \textbf{Metric} & \textbf{Test} & \textbf{Statistic} & \textbf{$p$-value} \\
      \midrule
      Mean return
        & Paired $t$ & $t=7.31$ & $2.6\times10^{-6}$ \\
      Coeff.\ of variation\tnote{1}
        & Paired $t$ & $t=8.00$ & $8.7\times10^{-7}$ \\
      \bottomrule
    \end{tabular}
    \begin{tablenotes}
      \footnotesize
      \item[1] A Wilcoxon signed-rank test on the CVs further confirms significance ($W=0$, $p=3.1\times10^{-5}$).
    \end{tablenotes}
  \end{threeparttable}
\end{table}

\subsection{Computer Science and Neural Networks}

We have replicated a variety of findings across distributed production systems from ecology, neuroscience, and economics. We lastly want to expand this set with findings from computational sciences, as later sections will focus on applying our model in this specific context. We first focus on the central problem of function approximation as it naturally fits our model framing. Then on the language model scaling laws \cite{hoffmann2022}, as many data points have been captured in the context of that specific empirical relationship. These will be followed by a study of robustness of learning in a more computational neuroscience context.

\subsubsection{Performance and efficiency of learning}

\label{sec:findings_perf_eff_learn}
A wide literature, spanning fields such as AI, ML, and mathematics, focuses on approximation properties of families of functions. Most famous results take the form of convergence with scale, and sometimes lead to universal approximation theorems \citep{Cybenko1989}. Other studies, such as overcomplete-basis theory \citep{okoudjou2016}, take into account the composition or structure of the approximating class itself, and do not treat it as a monolithic homogeneous set. Different families of basis or dictionary elements can yield qualitatively different approximation behaviour: for example, nonlinear m‑term approximation theory studies how the error of best approximation depends not just on the number of terms but on the richness and redundancy of the underlying function dictionary and selection strategy \citep{devoreNonlinear1998, GribonvalNielsen2018}. These approaches highlight that approximation rates are sensitive to structural properties such as redundancy, sparsity, and the geometry of the dictionary rather than merely its scaling. In the context of neural-networks, a dictionary corresponds to a set of activation functions. In a recent work, \citep{perez2021neural} demonstrated that neural heterogeneity, defined as the diversity of intrinsic parameters across neurons, yields improvements in learning performance for spiking neural networks, artificial neural networks mimicking the firing mechanisms of biological neurons. Specifically, heterogeneous networks achieve lower test loss for a given architecture size, while their homogeneous counterparts require substantially more parameters to reach equivalent accuracy. We examine whether our production-system model, which encodes no neuroscience-specific mechanism, is sufficient to reproduce these two precise empirical findings.

We model an artificial neural network as a production system of $4$ agents connected in circular topology with interaction matrix $J_4$ whose individual learning functions are wrapped Gaussian distributions. The task, representing the target distribution, is modelled as a mixture of $5$ Gaussian distributions with different peak heights.
All agents share a second-order constraint that we model in a simple way as $\mu^{(2)} = \pi$ called the hardware.
In a first experiment, we sweep a stiffness parameter $\lambda$ that penalises deviation from the hardware, measured as:
$$\mathcal{L}_s\left(\mu, \mu^{(2)}\right) = \frac{\lambda}{2}\left(d_{\circ}\left(\mu_1, \mu^{(2)}\right) + d_{\circ}\left(\mu_2, \mu^{(2)}\right)\right),$$
thereby controlling the degree of post-optimisation heterogeneity while holding resource constraints fixed to $8$. For each stiffness value, the system is optimised, after which we record the heterogeneity and the learning performance $\Pi_m = 1/(1+\mathcal{L}_m)$, where $\mathcal{L}_m$ is the mismatch cost.
In a second experiment, we lock stiffness at its maximum tested value and progressively relax resource constraints, thereby increasing the production capacity provided to the homogeneous system.
The results are reported in \cref{fig:cs_perf} (left) and \cref{fig:cs_perf} (right). In the stiffness sweep (\cref{fig:cs_perf} (left)), learning performance is strongly and positively associated with heterogeneity: an ordinary least-squares regression yields $R^{2}=0.92$ ($\hat{P} = 0.106\,\mathcal{H} + 0.748$), with performance rising from $P=0.734$ at $\mathcal{H}\approx 0$ to $P=0.936$ at $\mathcal{H}=1.91$.
In the resource sweep (\cref{fig:cs_perf} (right)), the homogeneous network requires its resource-constraint parameter to be reduced by nearly half to recover the performance of the heterogeneous system, indicating that a homogeneous system requires a substantially larger resource budget to compensate for the absence of heterogeneity.
\begin{figure}[H]
    \centering
    \begin{minipage}[t]{0.48\textwidth}
        \centering
        \includegraphics[width=\linewidth]{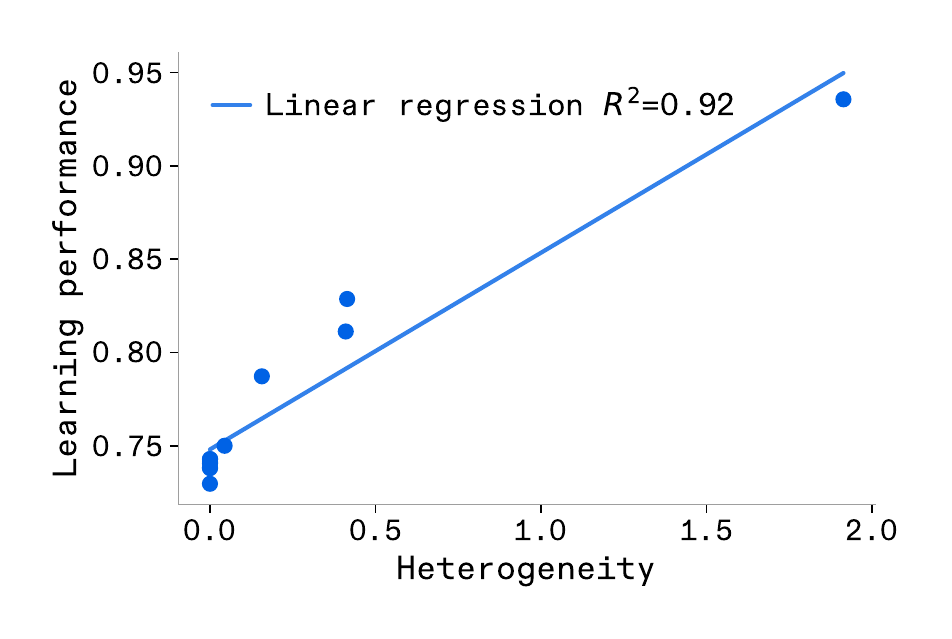}
    \end{minipage}
    \hfill
    \begin{minipage}[t]{0.48\textwidth}
        \centering
        \includegraphics[width=\linewidth]{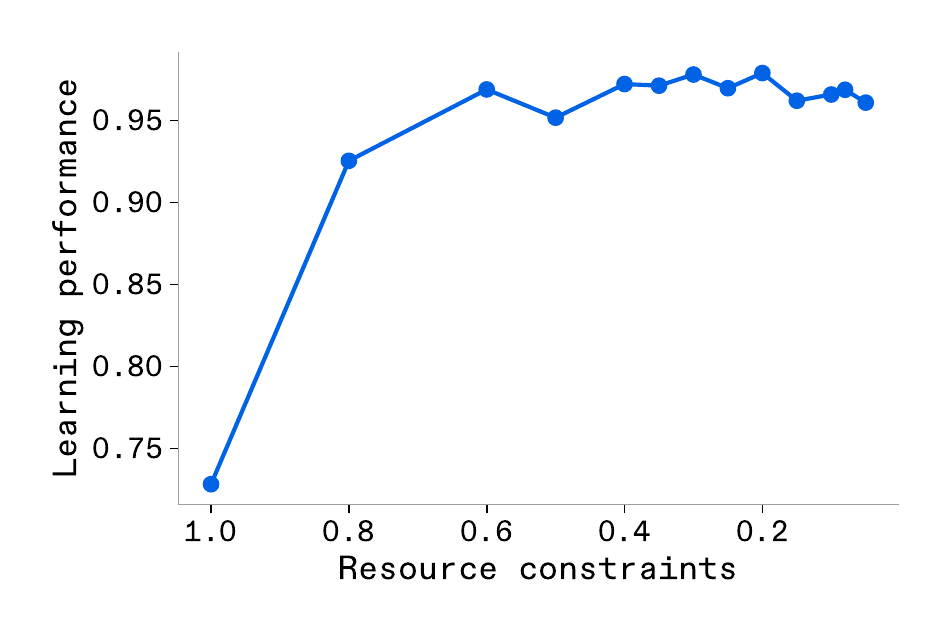}
    \end{minipage}
    \label{fig:cs_perf}
    \caption{\textbf{Learning performance evolution with heterogeneity and resource.} (Left) Learning performance as a function of heterogeneity under varying stiffness. Each point corresponds to a single stiffness value; the regression line ($R^{2}=0.92$) confirms a strong positive association. (Right) Learning performance of the homogeneous network ($s=20$, $\mathcal{H}\approx 0$) as resource constraints are relaxed. Recovering high performance requires a large increase in effective capacity. For readability, the resource constraints on the x-axis correspond to 8 times the resource constraints used in simulation.}
\end{figure}
These results jointly confirm that the production-system model reproduces two core findings of \citep{perez2021neural} without any neuroscience-specific assumptions: heterogeneity among agents directly improves learning performance, and an equivalent homogeneous system must be endowed with a larger resource budget to attain comparable accuracy. The underlying mechanism is generic: diverse agents partition the task space more efficiently, suggesting that the advantage of neural heterogeneity is an instance of a broader organisational principle rather than a property unique to spiking circuits. This will be investigated in \cref{sec:principles_workloads} and \cref{sec:principles_efficiency} more generally.

\subsubsection{Language model scaling laws}
\label{sec:findings_scaling_laws}
The Chinchilla scaling laws \citep{hoffmann2022} describe how the performance of a large language model depends on two resources that can be traded against one another under a fixed compute budget: the number of parameters $N$, which determines the model's capacity to represent knowledge, and the number of training tokens $D$, which determines how thoroughly that capacity is exercised during training. More parameters allow the model to store more distinctions; more data allow it to learn those distinctions more accurately. The empirical relationship between these quantities and the resulting loss $L$ takes the form
\[
\left\{
\begin{aligned}
    C &= C_0 DN \\
    L &= \frac{A}{N^{\alpha}} + \frac{B}{D^{\beta}} + L_0
\end{aligned}
\right.
\]
where $C = C_0 DN$ is the total compute cost, and $A$, $B$, $L_0$, $\alpha$, $\beta$ are constants fit to empirical data. The key insight of Chinchilla is that both terms in $L$ must be reduced jointly: adding parameters without sufficient data leaves capacity underutilised, while adding data without sufficient parameters leaves the model unable to absorb what it sees. In our distributed production model, agents play the role of parameters, and the number of subintervals $D$ used to discretise the demand function plays the role of training tokens, determining the resolution at which the system can match the workload. The compute cost maps onto the inverse of the resource constraints, since covering more of the demand space at finer resolution increases the resource requirement. The same trade-off therefore has a direct structural counterpart: too few agents leaves demand regions unmatched, while too coarse a discretisation of demand prevents the system from identifying where agents are needed. If this analogy is well-founded, the model should recover the Chinchilla functional form without it being imposed by construction.

To test if our model recovers the Chinchilla law, and its implication of joint scaling of data and parameters, we carry out two experiments. We first vary $N$ and $D$ across a range of values. The agents are interacting in a circle topology via $J_N$, subject to a resource constraint proportional to $1/(ND)$ and the task is fixed to a mixture of $5$ Gaussians with different peaks. We use gradient descent to solve for the optimal agents configuration at each $(N, D)$ pair, and record the resulting loss $L(N, D)$ corresponding to the final loss value of the descent. The Chinchilla functional form is then fitted to the data and we record the error. Then, to verify the joint-scaling implication, we consider a production system consisting of only one agent that optimises the same task. We increase the number of discretisation points and reduce the resource constraints proportionally to $1/D$.
\cref{fig:combined_scaling_results} (left) shows the results of the first experiment. The Chinchilla functional form provides a close fit to the simulated $(N, D, L)$ surface. The joint-scaling implication is also recovered, as shown in \cref{fig:combined_scaling_results} (right): with one agent, the loss eventually goes to zero as the number of data increases. However, this comes at the price of high overproduction (i.e. the area below the production function and above the task). The distributed production model thus recovers Chinchilla-type scaling as an emergent property.
\begin{figure}[H]
    \centering
    \begin{subfigure}[t]{0.38\textwidth}
        \centering
        \includegraphics[width=\linewidth]{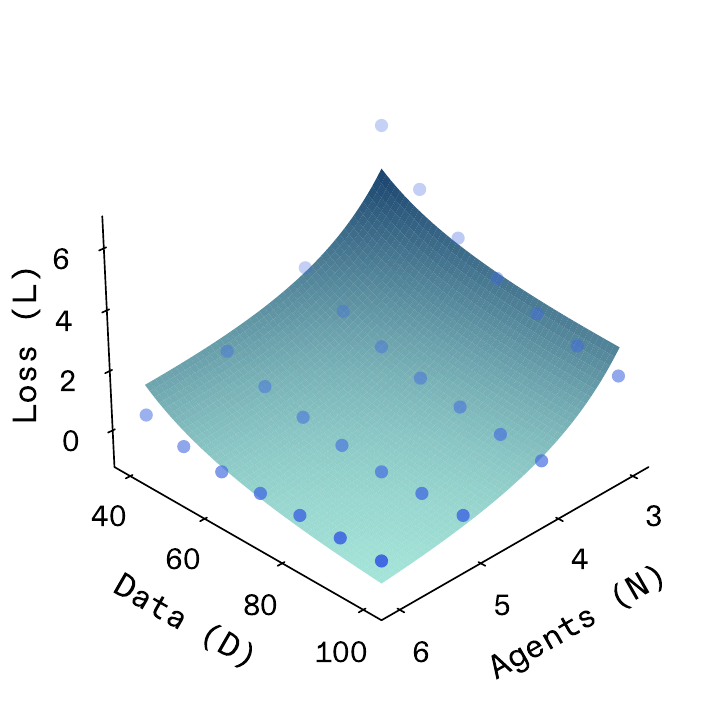}
    \end{subfigure}
    \hfill 
    \begin{subfigure}[t]{0.58\textwidth}
        \centering
        \includegraphics[width=\linewidth]{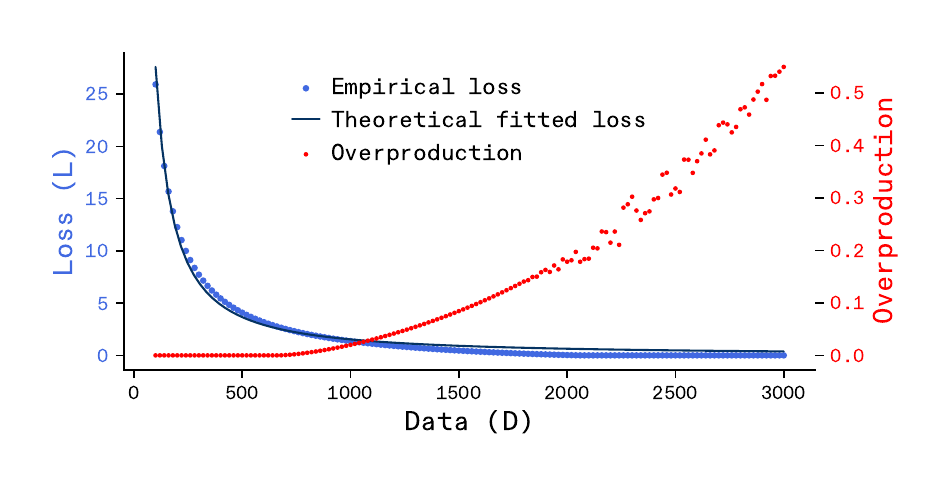}
    \end{subfigure}
    \caption{\textbf{Scaling behaviour of the distributed production model.} The model recovers multi-parameter scaling laws (left) while demonstrating the efficiency trade-offs of homogeneous scaling (right). (Left) Recovery of Chinchilla-type scaling. $L = A/N^{\alpha} + B/D^{\beta} + L_0$ fit to $(N, D, L(N,D))$. (Right) One-parameter scaling. $L = B/D^{\beta} + L_0$ fit to $(D, L(D))$, showing overproduction trade-offs.}
    \label{fig:combined_scaling_results}
\end{figure}

\subsubsection{Robustness of function approximation through heterogeneous processing time scales}
\label{sec:findings_robustness}
A recurring finding across small-scale computational studies, particularly in neuromorphic computing, is that heterogeneity in processing time scales improves the robustness of function approximation \citep{perez2021neural, sun2025algorithm, sun2025exploiting}.
Spiking networks endowed with diverse time constants maintain acceptable approximation quality under distributional shifts of the target signal, whereas homogeneous counterparts degrade sharply. We investigate whether the production-system model, which contains no neuroscience-specific mechanism, is sufficient to reproduce this empirical regularity.

The correspondence between the two frameworks is direct. In the cited works, a target signal is transformed into a distribution of synaptic delays that the network must learn; in our model, this distribution of delays is represented by the task density.
Each neuron maps to an agent, and the agent's skill density plays the role of the neuron's spiking delay distribution.
A homogeneous network, in which all neurons share identical time constants, corresponds to agents with equal skill parameters; a heterogeneous network corresponds to agents whose skill densities are spread across the task domain.
We instantiate a minimal system of $2$ agents communicating via $J_2$ and define two architectures: a homogeneous model ($\mu_1 = \mu_2 = \pi$, $\sigma = 0.5$) and a heterogeneous model ($\mu_1 = \pi/2$, $\mu_2 = 3\pi/2$, $\sigma = 0.5$).
Each architecture is evaluated on two target distributions, a unimodal Gaussian centred at $\pi$ and a bimodal Gaussian with modes at $\pi/2$ and $3\pi/2$, both before and after a distributional shift implemented as a rotation of each task by $+\pi/4$.
Performance is measured as $\Pi = 1/(1+\mathcal{L})$ where $\mathcal{L}$ is the optimisation loss (corresponding to the task/production mismatch cost), and robustness as the normalised absolute change $\rho = |\Pi_{\mathrm{init}} - \Pi_{\mathrm{rot}}|/(\Pi_{\mathrm{init}} + \Pi_{\mathrm{rot}})$, where lower $\rho$ indicates greater stability.
Results are reported in \cref{fig:cs_robustness}.
Both models achieve near-perfect performance on the task whose structure matches their own architecture, but degrade on the mismatched task.
After rotation, the homogeneous model collapses on its previously best-matched task (unimodal: $P$ drops from $1.000$ to $0.047$, $\rho = 0.91$), yielding a mean robustness index of $0.74$.
The heterogeneous model, while also affected by the shift, degrades more gracefully (bimodal: $P$ drops from $1.000$ to $0.095$, $\rho = 0.83$; unimodal: $\rho = 0.40$), yielding a $17\%$ reduction in sensitivity to distributional shift compared to its homogeneous counterpart.
These results confirm that the production system model reproduces the finding of \citep{perez2021neural, sun2025algorithm, sun2025exploiting}: heterogeneity in agent parameters confers robustness to function approximation under perturbation.
The mechanism is structural: diverse agents cover complementary regions of the task space, so that a local shift does not simultaneously degrade all contributions. This mirrors the robustness effects of heterogeneity we already observed in the ecology findings (\cref{sec:findings_ecology}), where community diversity stabilised biomass production under environmental pressure, and in the portfolio design finding (\cref{sec:findings_portfolio}), where heterogeneity of securities within a portfolio enables its robustness.
That the same relationship between heterogeneity and robustness appears across scales and substrates suggests that our model captures a general underlying mechanism linking heterogeneity to robustness in distributed production systems, rather than a domain-specific effect. \cref{sec:principles_robustness} investigates this mechanism systematically from the perspective of the model's general principles. This suggests that the production-system framework can serve as a minimal, analytically tractable model for understanding the functional advantage of heterogeneity in neuromorphic computing.
\begin{figure}[H]
    \centering
    \includegraphics[width=\linewidth]{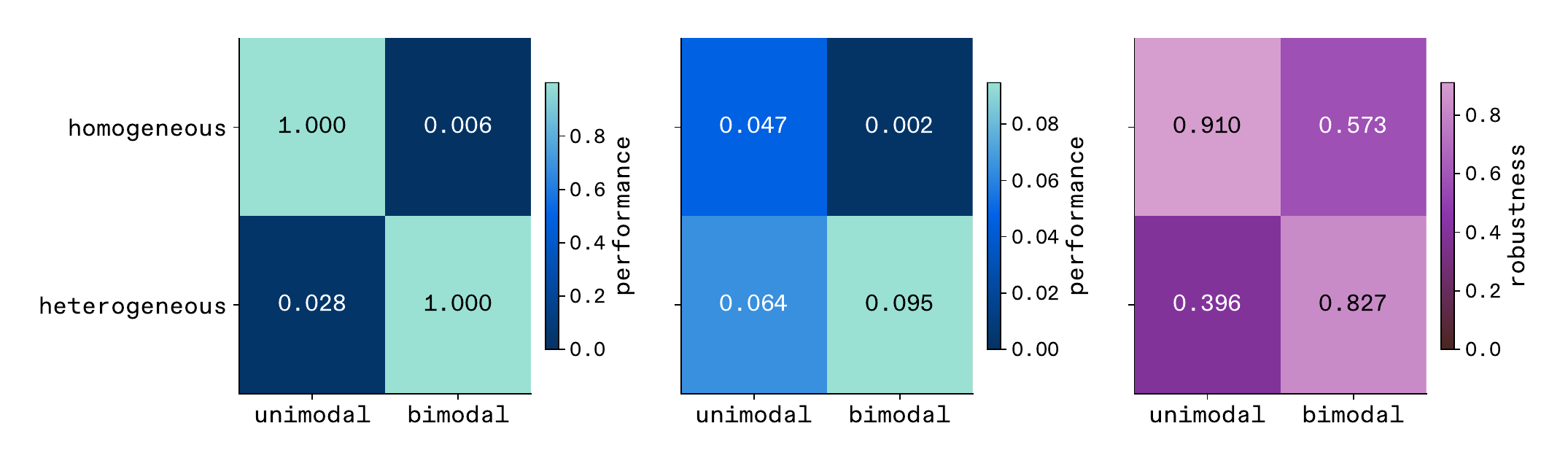}
    \caption{\textbf{Performance and robustness matrices for the homogeneous and heterogeneous models across two tasks.} Left: initial performance. Centre: performance after a $\pi/4$ rotation of the task distribution. Right: robustness index $\rho$ (lower is better). The heterogeneous model exhibits a lower mean robustness index, indicating greater stability under distributional shift.}
    \label{fig:cs_robustness}
\end{figure}

%% file: Text/4_principles.tex
It is now clear that our simple mechanistic model can replicate behaviours of a wide set of distributed processing systems but it is yet unclear which principles drive the behaviour of the model that are shared across the findings replicated in the prior section. To uncover these, we next want to study the model's attractors and fixed points, scaling behaviour, and characterise its main modulating variables, as these have proved useful to understand dynamics in modelling systems such as Game Theory \cite{vonneumann1944, nash1951}. In the following we use a combination of simulation-based and analytical approaches to derive the underlying principles that drive the behaviour of the model system.

Our model, on a fundamental level, tries to capture how agents through their interaction optimally produce an output, where the converged solution of the model tells us about the ideal system level setup given the number of agents, the work they ought to produce, and the means by which they can interact. In the following we hence want to study the principles that the model follows to arrive at the optimal solution based on the variables.

\subsection{Heterogeneous scaling laws: The drive towards heterogeneity in distributed production system}
\label{sec:principles_scaling}
The scaling behaviour of systems has played a prominent role across disciplines \cite{west1997general, kaplan2020scaling} and hence is the first focus of our analyses, where we want to understand what the optimal systems level solution looks like as we add more agents. To study this scaling behaviour we first consider a set of several mixtures of Gaussians with same peak size, forming a series of $8$ tasks with increasing number of peaks, from $1$ to $9$ (c.f. \cref{app_fig:workloads_pics}). We then gradually add increasingly more agents in a circular topology interacting via $J_N$ and $2N$ resource constraints and study the scaling behaviour of the overall system using heterogeneity and specialisation as metrics (see \cref{sec:model_metrics} for details). The results are depicted in \cref{fig:pics_monte_carlo_scaling}. What we observe is that, as the system increases in size, the optimal system's level solution is to increase the system's heterogeneity and then plateau, while the agents added take equally specialised roles after a sharp initial increase. Specifically, we find that
the optimal heterogeneity $\mathcal{H}^*$ and optimal specialisation scale with the number of agents as a log-logistic function: $$\mathcal{H}^*(x) = \frac{\mathcal{H}^{\infty}}{1 + \left(x / a\right)^{-\gamma}},$$
where $x$ is the variable denoting the number of agents, $a, \gamma, \mathcal{H}^{\infty}$ are three positive constants that depend on the shape of the workload and the collaboration between agents, as studied by the following sections. More precisely, $\mathcal{H}^{\infty}$ is the heterogeneity of the limit size system and corresponds to the optimal level of heterogeneity for large systems, and $a, \gamma$ are influenced by the transition dynamic from small to large systems.
For easier observation of the sigmoid scaling, \cref{fig:pics_monte_carlo_heterogeneity_fitted_logN} displays the scaling law as a function of the number of agents with a log scale. Additionally, \cref{app:metric_scaling_behaviour} verifies that this is not a confound of the metric used.
 \begin{figure}[H]
    \centering
    \includegraphics[width=0.9\textwidth]{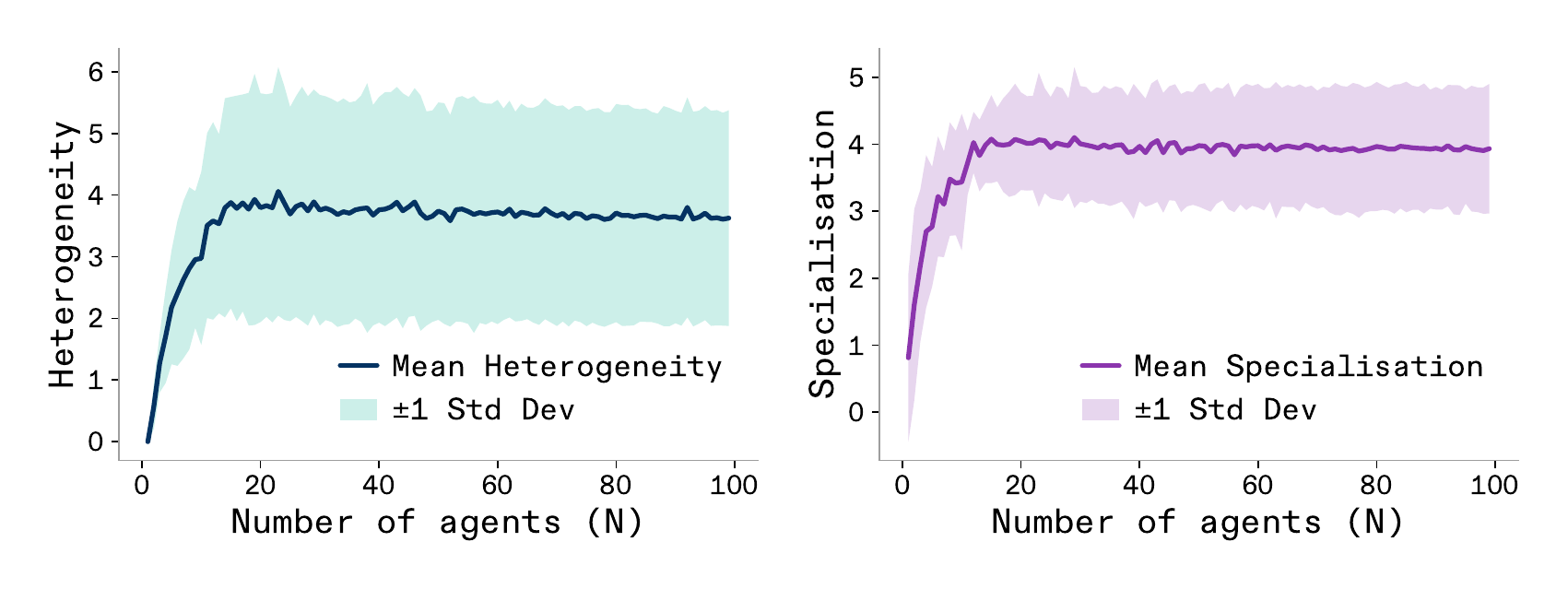}
    \caption{\textbf{Heterogeneity and specialisation scale with the system size.} The line shows the mean value for a system with a specific number of agents averaged across the 8 workloads studied. Shading represents one standard-deviation around the mean.}
    \label{fig:pics_monte_carlo_scaling}
\end{figure}
\begin{figure}[H]
    \centering
    \includegraphics[width=0.5\textwidth]{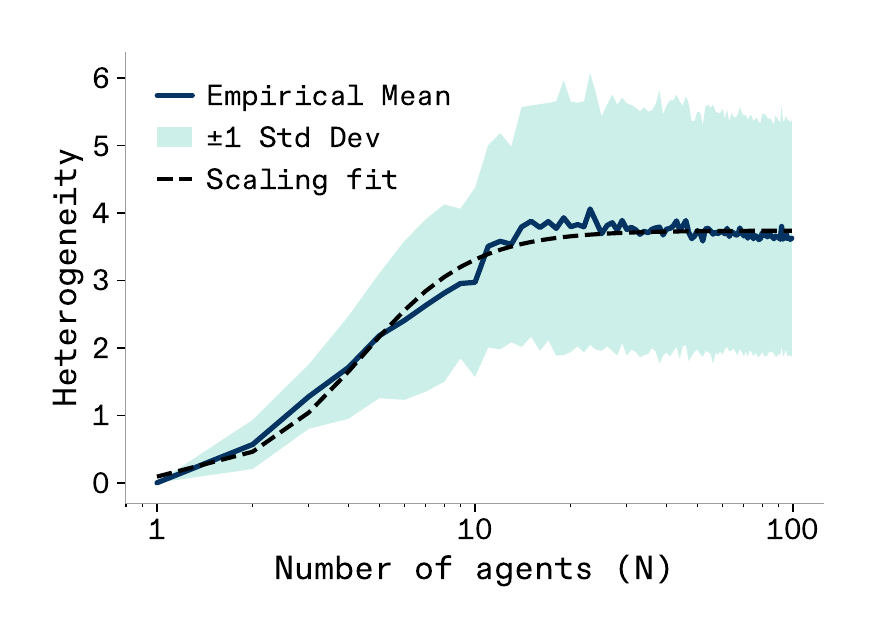}
    \caption{\textbf{Logistic scaling of heterogeneity with system size.} The dashed line is the fitted scaling relationship between heterogeneity and logarithm of system size. Shading represents one standard-deviation around the mean.}
    \label{fig:pics_monte_carlo_heterogeneity_fitted_logN}
\end{figure}
We note that the scaling of heterogeneity with production system size depends on the workload. Indeed the same scaling experiment with mixtures of Gaussians with different peak sizes (c.f. \cref{app_fig:workloads}) leads to different scaling constants, and the overall trend is less well approximated by log-logistic type functions as shown in \cref{fig:catalogue_monte_carlo_scaling}. We investigate in \cref{sec:principles_workloads} the influence of the workload shape on the scaling of system heterogeneity with size. It is clear that for a uniform task, no specialisation is required and a system of generalists, therefore leading to a system with low heterogeneity as generalists share a wide set of skills. Likewise, for a unimodal Gaussian with a high peak, a homogeneous system composed of agents with same high specialisation matches the task optimally, implying no benefits of heterogeneity and constant scaling near $0$. But these cases are extreme, \cref{app:dirac_workload} discusses the unimodal peak Gaussian case, and we focus on a narrower set of workloads (c.f. \cref{app_fig:workloads_pics}) to understand what features of the task influence the heterogeneity of large scale systems.

\begin{figure}[H]
    \centering
    \includegraphics[width=0.9\textwidth]{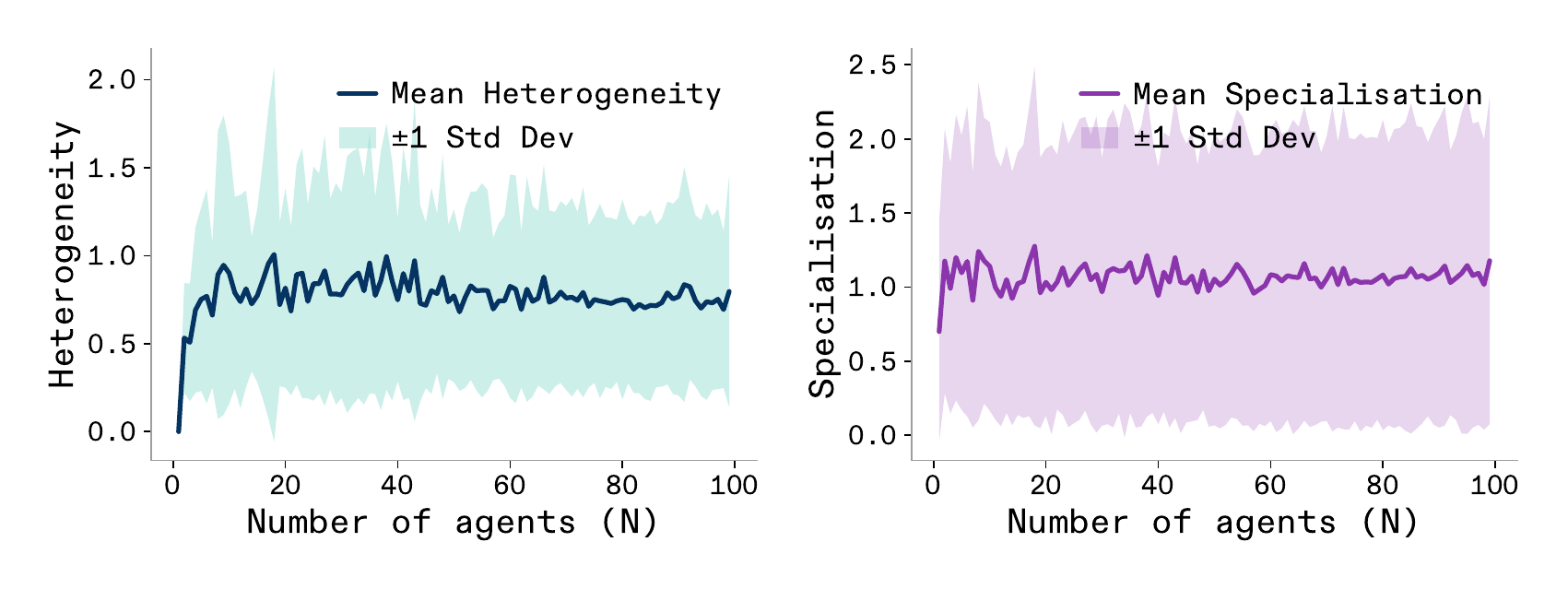}
    \caption{\textbf{Heterogeneity and specialisation scale with the system size.} Each dot is the mean value for a system with a specific number of agents averaged across the 16 workloads studied.}
    \label{fig:catalogue_monte_carlo_scaling}
\end{figure}

\subsection{Heterogeneous scaling laws: The influence of the workload on system-level heterogeneity}
\label{sec:MHP_workloads}
\label{sec:principles_workloads}
While the ideal system-level setup drifts towards heterogeneity across the Gaussian mixtures of the same peak sizes (c.f. \cref{app_fig:workloads_pics}) and other non-extreme cases of workloads, we want to understand how the shape of the task influences the ideal system-level heterogeneity solution. To do so, we utilise a selection of 8 workload shapes consisting of evenly spaced Gaussian peaks on the circular domain $[0, 2\pi]$, ranging from 1 to 8 peaks with fixed $\sigma = 0.25$, as depicted in \cref{app_fig:workloads_pics}. We then scale up the system size as in the prior section and extract the asymptotic heterogeneity value $\mathcal{H}_\infty$ for each workload by fitting $\mathcal{H}(N) = \mathcal{H}_\infty / (1 + (N/a)^{-\gamma})$. To understand how the shape of the workload influences the asymptotic heterogeneity of the system, we characterise it with a number of metrics: the number of peaks, the maximum spread (largest circular distance between peaks), the Shannon and R\'enyi entropies, the power in the first three Fourier modes and their concentration, the low-level heterogeneity $\mathcal{H}_\text{low}$ (a kernel based measure), the peak height, the flatness (kurtosis to variance ratio), and the circular variance (details on all metrics and their calculation are specified in \cref{app:principles_workload_metrics}). We then compute the Pearson correlation between each metric's value across all 8 workloads and the corresponding system asymptotic heterogeneity $\mathcal{H}_\infty$, resulting in one correlation per metric. The correlations across metrics are shown in \cref{fig:principle_metric_correlations} and reveal that the low-level heterogeneity of the workload $\mathcal{H}_\text{low}$ has the highest positive correlation ($r = 0.90$, $p = 0.002$), followed by peak height (negative, $r = -0.89$), circular variance ($r = 0.89$), and entropy ($r = 0.88$), suggesting that the intrinsic heterogeneity of the workload distribution is the single feature most predictive of the system-level heterogeneity that emerges from optimisation.

\begin{figure}[H]
    \centering
    \includegraphics[width=0.8\textwidth]{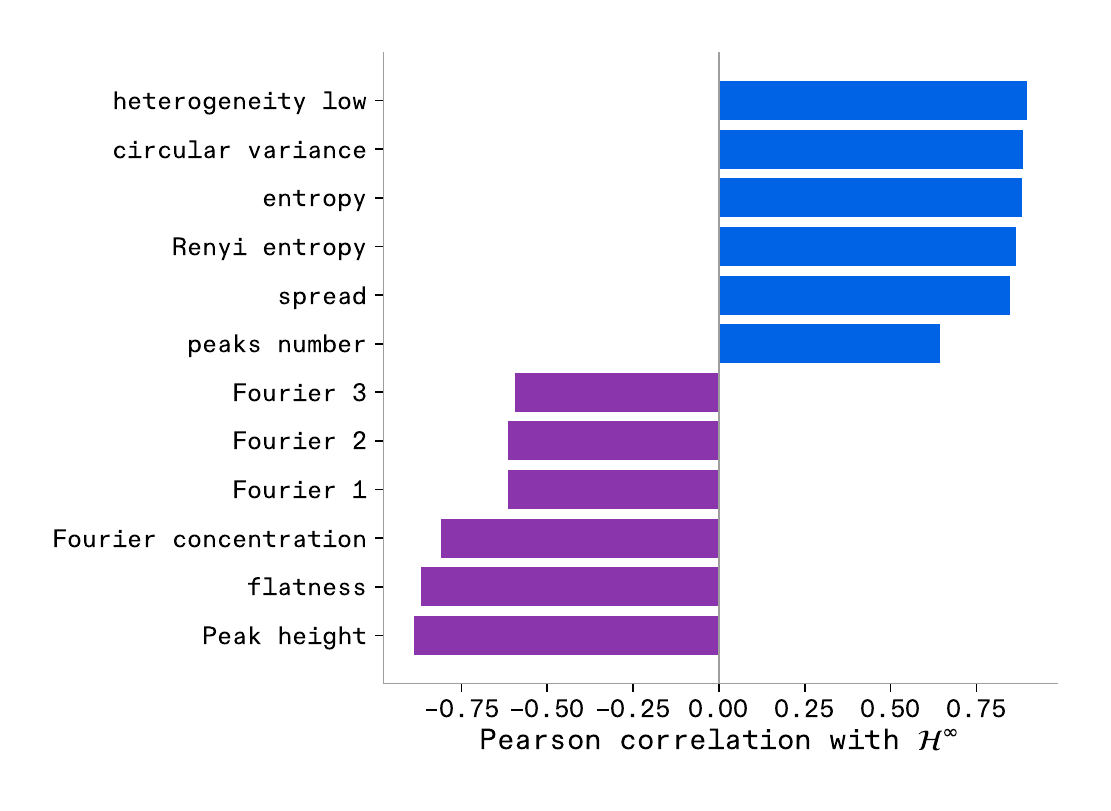}
    \caption{\textbf{The low-level heterogeneity of the workload has the highest correlation with the optimal system heterogeneity.} Pearson correlations between 12 workload shape metrics and the asymptotic system heterogeneity $\mathcal{H}_\infty$, computed across 8 workload profiles (1 to 8 evenly spaced Gaussian peaks). Positive correlations (blue) indicate features that increase with $\mathcal{H}_\infty$; negative correlations (purple) indicate features that decrease. Four features pass Bonferroni correction ($\alpha = 0.004$): $\mathcal{H}_\text{low}$, peak height, circular variance, and entropy.}
    \label{fig:feature_correlations}
    \label{fig:principle_metric_correlations}
\end{figure}
\begin{figure}[H]
    \centering
    \includegraphics[width=0.8\textwidth]{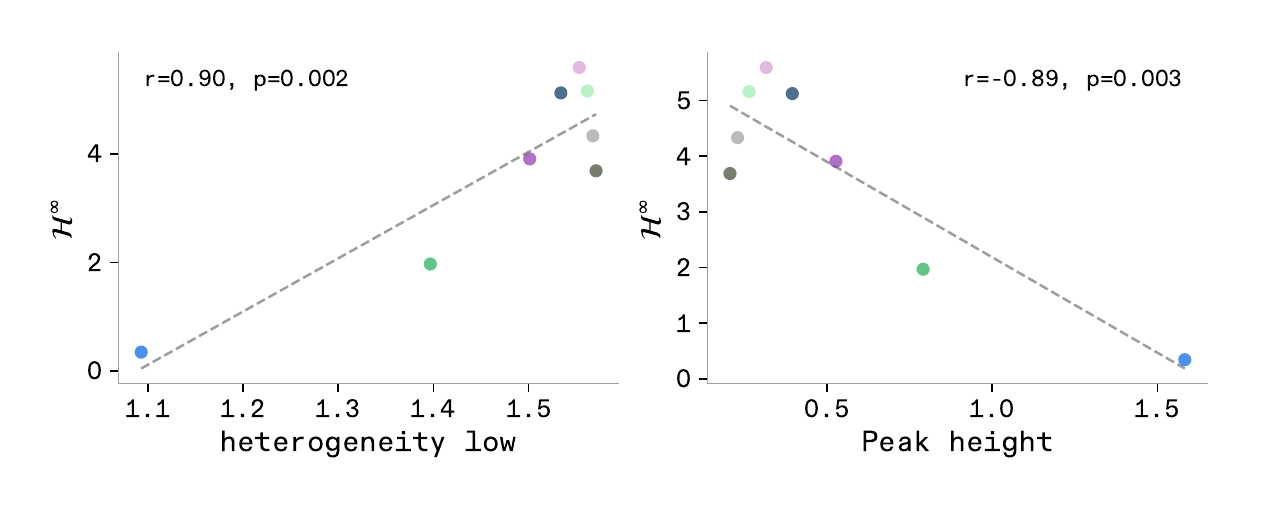}
    \caption{\textbf{The two strongest univariate predictors of system heterogeneity.} Each dot is one workload profile, coloured by its peak family (mono through oct). \textit{(Left)} Low-level heterogeneity $\mathcal{H}_\text{low}$ versus asymptotic system heterogeneity $\mathcal{H}_\infty$ ($R^2 = 0.81$). \textit{(Right)} Peak height versus $\mathcal{H}_\infty$ ($R^2 = 0.79$, negative relationship). Regression lines show the best linear fit. Together these two features confirm that workloads which are themselves more heterogeneous (high $\mathcal{H}_\text{low}$, low peak height) drive the system towards higher heterogeneity.}
    \label{fig:top2_features_vs_xi}
    \label{fig:principle_workload_regression}
\end{figure}

We have focused exclusively on how a non-exhaustive set of specific workload features affects the asymptotic heterogeneity, $\mathcal{H}_{\infty}$. While the workload also influences the transition regime of the scaling dynamics governed by the parameters $a, \gamma > 0$, investigating these effects is beyond the scope of this manuscript.

\subsection{Locality and topology in context of heterogeneity}
\label{sec:principle_topology}
A core quality of our model is that the set of agents that are jointly solving the workload interact via a network topology which allows collaboration between some but not all agents. In \cref{sec:findings_MD} we have already seen that the network topology has unique effects on the specific specialisations that individual agents converge on and here we want to more specifically explore the effect of topology in the context of heterogeneity.

\subsubsection{Fundamental topology and radius of collaboration}
We first investigate how the fundamental topology of the interaction graph, i.e. the decomposition of the system into independently
optimising connected components, constrains the heterogeneity that the system can express. Specifically, we argue that the heterogeneity of
the whole system is bounded above by the heterogeneity achievable within its largest connected component, so that a \emph{radius of
collaboration} (the size of the largest communicating subsystem) emerges and bottlenecks the diversity of the collective to meet the task's requirements as seen in \cref{sec:principles_workloads}.

We consider $N=8=2^{3}$ agents facing a bimodal Gaussian-type workload with peaks at $\mu_{\mathrm{task}} = (\pi/2,\,3\pi/2)$
and standard deviation $\sigma_{\mathrm{task}}=0.5$. The system faces a resource constraint of level $8$. The agents collaborate within their connected components based on the communication matrix $Q$, leading to each component minimising its own loss independently. We construct a hierarchy of four interaction graphs that progressively merge agents into larger collaborating groups, following a balanced binary tree:
\begin{center}
\renewcommand{\arraystretch}{1.2}
\begin{tabular}{clc}
\toprule
Level & Components & Max.\ component size $k$ \\
\midrule
0 & $\{0\},\{1\},\dots,\{7\}$ & 1 \\
1 & $\{0,1\},\{2,3\},\{4,5\},\{6,7\}$ & 2 \\
2 & $\{0,1,2,3\},\{4,5,6,7\}$ & 4 \\
3 & $\{0,1,\dots,7\}$         & 8 \\
\bottomrule
\end{tabular}
\end{center}
At each level, agents within the same component are fully connected ($Q_{ij}=1$) and agents in different components share no link
($Q_{ij}=0$). For each level, we run 20 independent optimisations with different random initialisations and record the system heterogeneity $\mathcal{H}_{\mathrm{sys}}$ (computed over all $N=8$ agents), the within-component heterogeneity $\mathcal{H}_{\mathrm{within}}$ (the mean heterogeneity computed within each connected component), and the termination optimisation loss. We report the median together with the $[q_{10},q_{90}]$ quantile interval over the 20~seeds.
Two behaviours emerge clearly (Figures~\ref{fig:hard_het}--\ref{fig:hard_loss}):
\begin{enumerate}[label=(\roman*)]
  \item System heterogeneity and within-component heterogeneity increase together monotonically with component size.
    At $k=1$ (singletons) every agent is independently incentivised to meet the full bimodal task, producing a homogeneous system (heterogeneity of $\mathcal{H}_{\mathrm{sys}}\approx 0.18$). As pairs form ($k=2$), the system heterogeneity jumps to $\approx 0.97$; it continues to grow at $k=4$ ($\approx 1.2$) and reaches $\approx 1.46$ at $k=8$. This monotonic increase confirms that the largest connected component sets a binding upper bound on the heterogeneity the system can express.
   
  \item Loss decreases with component size.
    The cost drops as shown in \cref{fig:hard_loss}. This confirms that isolated agents are severely suboptimal: a single agent with $1/N$ of the total resources cannot cover a bimodal task, whereas a group of~8 agents can allocate roughly four agents per peak and nearly eliminate the mismatch.
\end{enumerate}

\begin{figure}[H]
\centering
\begin{subfigure}[t]{0.48\linewidth}
  \centering
  \includegraphics[width=\linewidth]{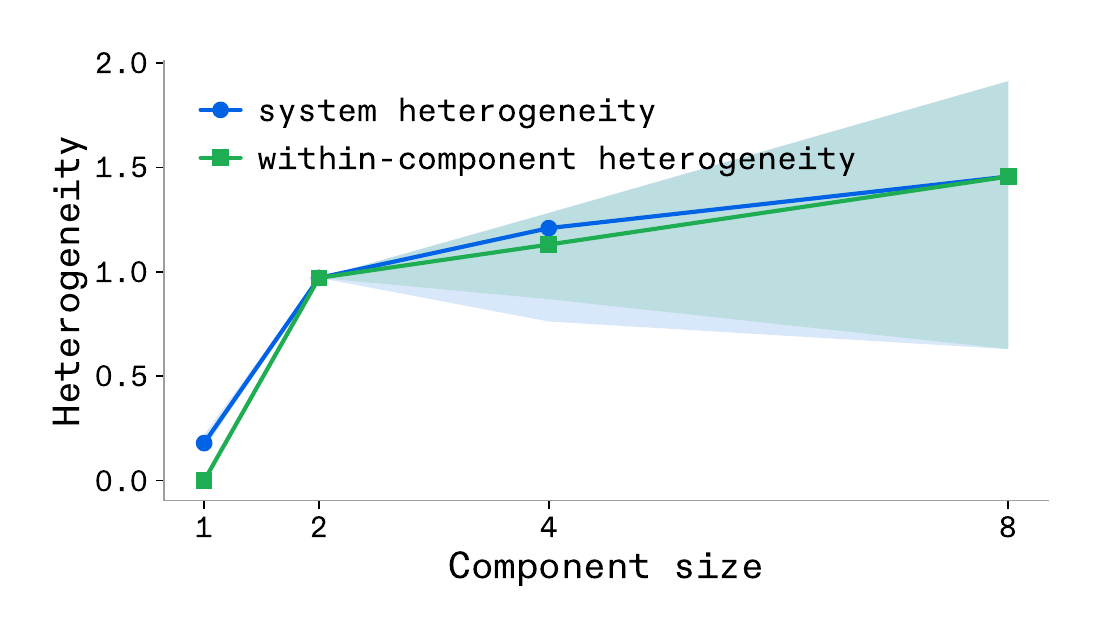}
  \caption{Heterogeneity vs.\ maximum component size.}
  \label{fig:hard_het}
\end{subfigure}
\hfill
\begin{subfigure}[t]{0.48\linewidth}
  \centering
  \includegraphics[width=\linewidth]{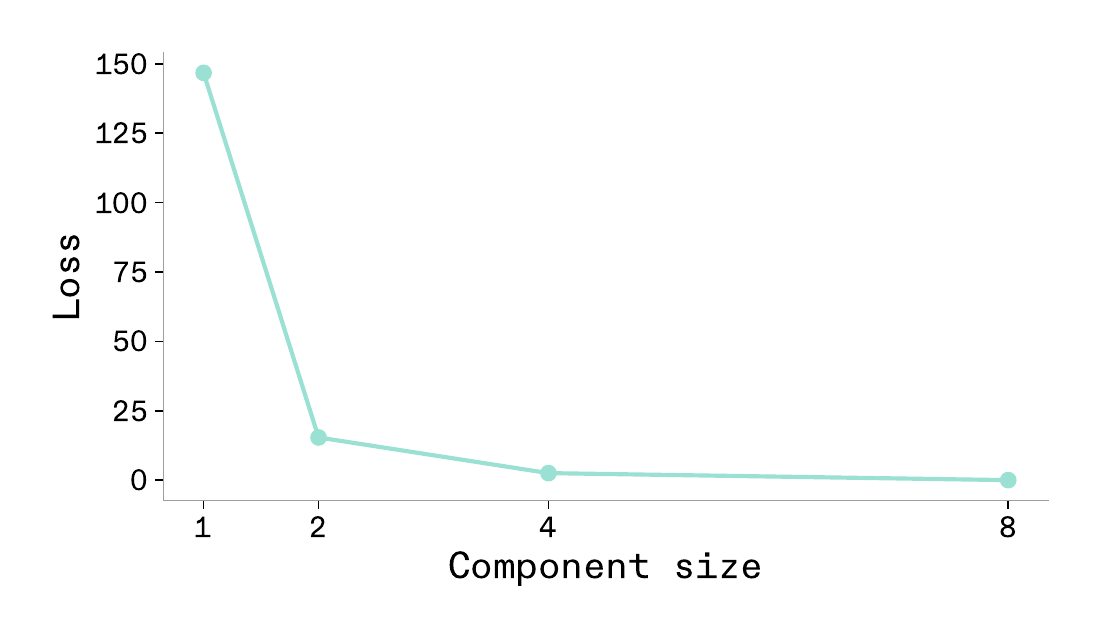}
  \caption{Optimisation loss vs.\ maximum component size.}
  \label{fig:hard_loss}
\end{subfigure}
\caption{\textbf{Effect of the maximum communication component size.} \textit{Left:} heterogeneity as a function of $k$. System heterogeneity (blue) and within-component heterogeneity (green) both increase monotonically with $k$; at $k=8$ they merge since a single component spans the entire system. Shaded bands show $[q_{10},q_{90}]$ over 20 seeds. \textit{Right:} Optimisation loss vs.\ $k$. The cost drops by nearly four orders of magnitude from isolated agents ($k=1$) to full connectivity ($k=8$), showing the large efficiency gain from collaboration.}
\label{fig:hard_combined}
\end{figure}

The specialisation structure underlying these aggregate statistics is visible in \cref{fig:hard_positions} and \ref{fig:hard_spec}, which plot every agent's optimised position~$\mu$ for each component size and their respective level of specialisation. At $k=1$, all agents are generalists, as displayed in \cref{fig:hard_spec} right panel, and thus the position of their means has no effect; they are interchangeable copies of one another. At $k=2$ and above, agents progressively spread to tile the task support, with each connected component developing its own internal division of labour with increasing specialisation. As the component size grows, two simultaneous effects appear: agents spread out, first under the task peaks and then more uniformly around the circle with generalists (\cref{fig:hard_positions} and \cref{fig:hard_spec} left-panel), and their specialisation deepens (c.f. \cref{fig:hard_spec} right-panel), indicating that they sharpen their coverage around their assigned part of the task. This joint increase in positional diversity and specialisation depth is precisely the synergy captured by the heterogeneity metric.
\begin{figure}[H]
  \centering
  \includegraphics[width=0.6\linewidth]{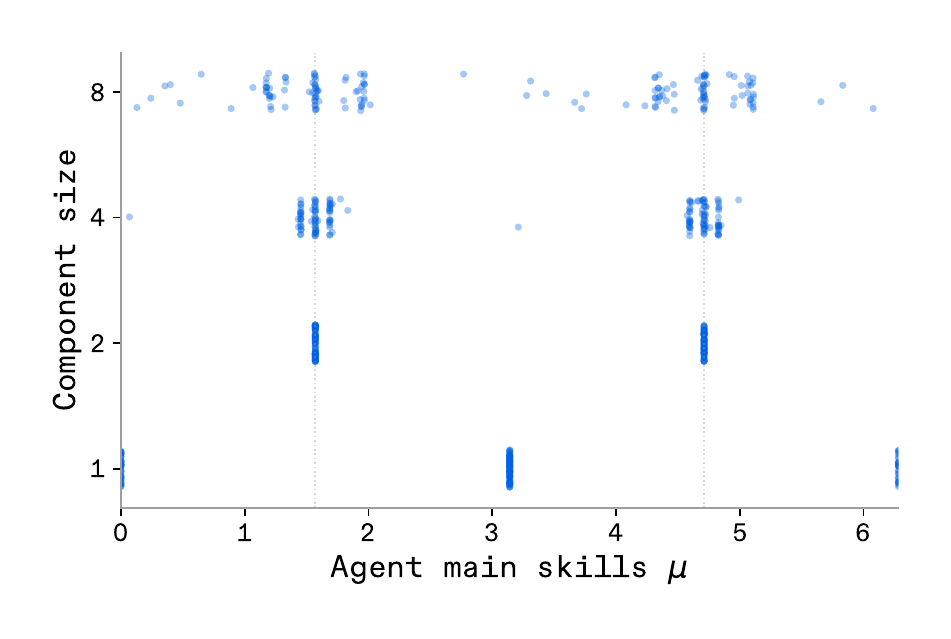}
  \caption{\textbf{Optimised agent positions for each component size.}
    Each dot is one agent from one of 20~Monte-Carlo seeds; vertical jitter separates overlapping points.
    Dotted lines mark the two task peaks.
    At $k=2$ agents are confined near the peaks; larger components enable a broader spread across the task support.}
  \label{fig:hard_positions}
\end{figure}
\begin{figure}[H]
  \centering
  \includegraphics[width=\linewidth]{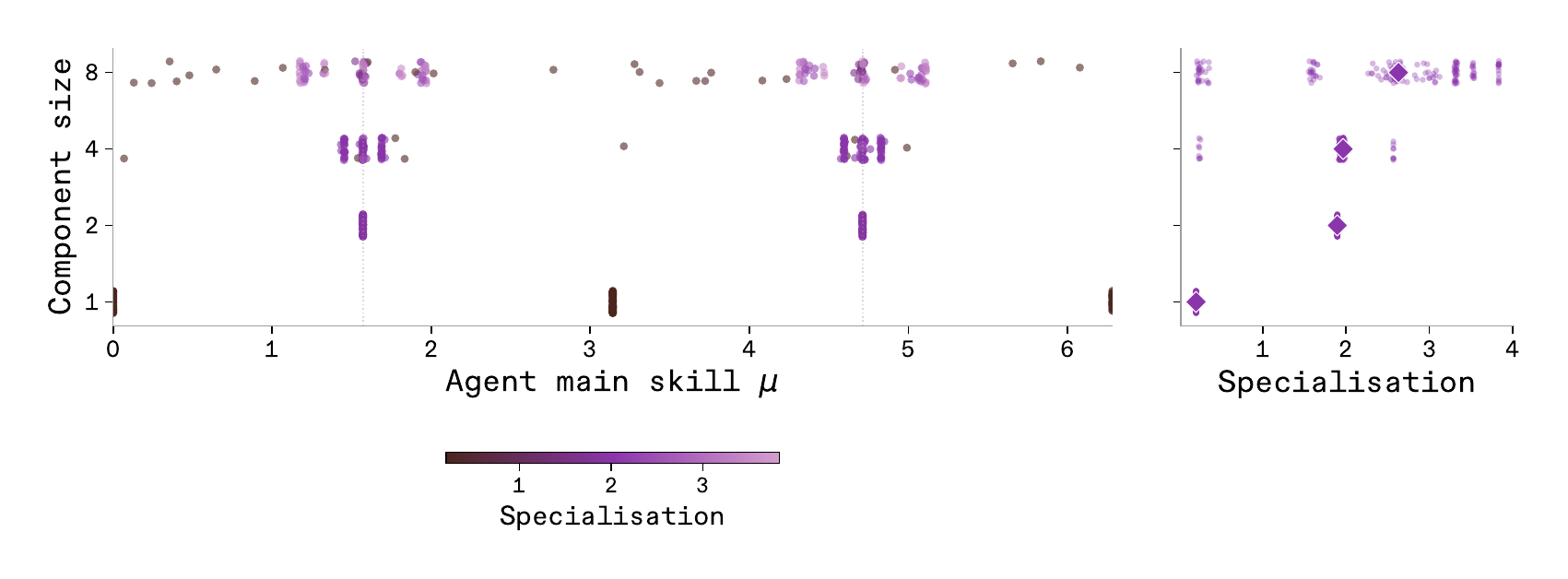}
  \caption{\textbf{Agent positions coloured by specialisation level.}
    \emph{Left:}~each dot is one agent, placed at its optimised~$\mu$; colour intensity encodes breadth $\sigma$
    (pale~$=$~generalist, dark purple~$=$~specialist).
    \emph{Right:}~distribution of $1/\sigma$ for each component size; diamonds mark the median.
    Larger components enable agents to become simultaneously more spread out and more specialised.}
  \label{fig:hard_spec}
\end{figure}
The fundamental topology of the interaction network imposes an effective \emph{radius of collaboration}: the heterogeneity of the system is bounded above by the heterogeneity that the largest connected component can sustain. The analysis reveals that this bound is not merely about positional diversity
($\mu$-separation) but also about specialisation depth ($\sigma$-narrowing): larger components enable agents to become
simultaneously more differentiated and more specialised up to a certain point where the task is covered and new generalists appear. In the extreme limit of complete isolation ($k=1$), all agents converge to the same broad skill profile and the system is homogeneous and highly
suboptimal. In the opposite limit of full connectivity ($k=N$), the bound is not binding and agents freely differentiate to match the task,
achieving both high positional spread and sharp specialisation up to the task requirements. In the intermediate regime the key question is whether the largest component is large enough to express the heterogeneity demanded by the workload: if yes, the lack of global collaboration has little effect on performance; if not, it
imposes a structural sub-optimality that no amount of within-component optimisation can overcome. This mechanism provides a formal rationale for the observation that the diversity of a production system optimised for a task is ultimately set by the span of its communication network, not merely by the number of its constituents.

\subsubsection{Communication cost and spatial distribution of heterogeneity}
In \cref{sec:findings_labour_market}, we have previously seen that we can model environmental effects on the production system with an additional cost of communication, based on the interaction matrix between agents. Next we study the more general effect of the communication cost between agents $\mathcal{L}_c$ on the optimal positioning of agents on the interaction network. 

We consider $N=6$ agents whose workload is a Gaussian mixture on the circle with seven modes placed at $\mu^{\mathrm{task}}_{k}=\tfrac{k\pi}{4}$ for $k\in\{1,\dots,7\}$, each with standard deviation $\sigma^{\mathrm{task}}=0.3$. The communication topology $Q$ consists of two fully connected clusters $\{0,1,2\}$ and $\{3,4,5\}$ of equal size linked by a weak bridge $Q_{2,3}=0.3$. The total cost minimised by the agents is
$$\mathcal{L}(\mu, \sigma) = \mathcal{L}_m(\mu, \sigma) + \lambda_c\mathcal{L}_c(\mu, \sigma),$$
where $\lambda_c\geq 0$ is the communication cost weighting coefficient. Note that we omit fixed parameters from the notation of the optimisation loss.
We sweep $\lambda$ over 21 log-spaced values from $0$ to $0.16$ and, for each value, run 20 independent optimisations with different random initialisations. For each run we record the within-cluster heterogeneity using the average pairwise circular distance between agents in cluster $c\in\{1,2\}$ as a proxy, and the between-cluster dissimilarity
$$\mathcal{H}_{\mathrm{within}}(c) =\frac{1}{3}\!\sum_{i<j\in c} d_{\circ}(\mu_i,\mu_j), \quad d_{\mathrm{between}}=\frac{1}{9}\!\sum_{i\in c_1,\,j\in c_2} d_{\circ}(\mu_i,\mu_j).$$
We report the median together with the $[q_{10},q_{90}]$ quantile interval over the 20~seeds.
The results displayed in \cref{fig:n6_mc_cluster_convergence} reveal a sharp phase transition. For $\lambda\lesssim 3\times10^{-4}$, agents remain diverse within each cluster ($d_{\mathrm{within}}\approx 2$, comparable to the no-cost baseline). Around $\lambda_{\mathrm{c}}\approx 5\times10^{-4}$ a rapid
transition occurs, reflected by wide quantile bands, after which within-cluster distances collapse to near zero ($d_{\mathrm{within}}<10^{-2}$), while the between-cluster distance saturates at $\pi$, its maximum possible value. This plateau persists up to $\lambda_{\mathrm{c}}\approx 5\times10^{-2}$, beyond which even
the between-cluster distance collapses, leading to full homogenisation of all six agents.

Hence, communication costs induce a \emph{locality of heterogeneity}: agents that interact more are pushed towards similar
specialisations, whereas agents interacting via weak links or even separated by the absence of links
preserve complementary skills. The transition is not gradual but phase-transition-like, suggesting the existence of a critical
communication cost above which the pressure to align overwhelms the incentive to specialise. This mechanism provides a parsimonious
explanation for why tightly connected subsystems (e.g. cortical columns, departmental teams, regional economies) tend to be internally
homogeneous yet collectively diverse.
\begin{figure}[H]
    \centering
    \includegraphics[width=0.6\textwidth]{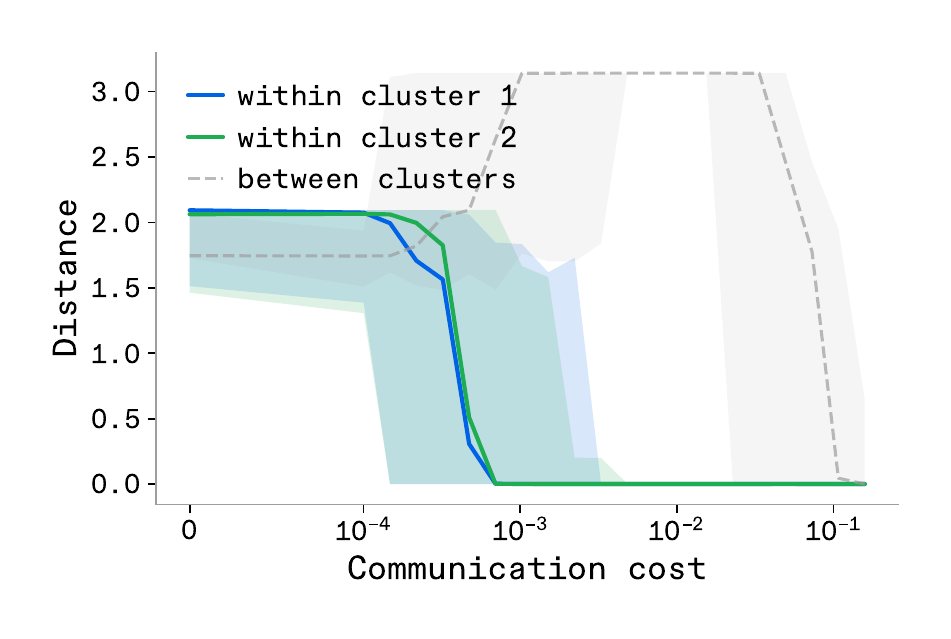}
    \caption{\textbf{Locality of heterogeneity induced by communication costs.} Plotting within- and between-cluster distances across varying communication costs reveals sharp phase transitions from maximal system heterogeneity, to local homogenisation within tightly connected clusters, and ultimately to complete global homogenisation.}
    \label{fig:n6_mc_cluster_convergence}
    \label{fig:principle_communication_cost}
\end{figure}

\subsection{The connection between heterogeneity and efficiency}
\label{sec:principles_efficiency}
The idea of the optimal system-level solution that our model converges on rests on an implicit statement of 'optimal performance given the resources available to the system'. The resources available to the system can for example come in the number of agents and fundamentally the concept of optimal allocation implies the efficient utilisation of such resources. In \cref{sec:findings_perf_eff_learn}, we have observed, in the context of learning functions, that homogeneous production systems needed more resources for the same level of productivity. We want now to more generally test the role of resource constraints on our system. 

To test this we actively control the resource constraints and observe how this changes the optimal system-level composition. We choose a bi-Gaussian task and gradually increase the resource constraints on the system, by dividing the overall output of all agents by an increasingly large number. The system needs to use its production capacity efficiently. The number of agents is set to $N= 16$ and they collaborate via the interaction $J_N$.
\cref{fig:principle_ressource_constraints} shows how, when forcing agents to find the efficient solution by limiting their resources, the system increases its heterogeneity. In any scenario where the agents are forced to find the efficient solution, they converge on a heterogeneous configuration to ensure maximum performance given the resources. We can further highlight this effect by actively controlling the heterogeneity via the second order constraint. We introduce a stiffness parameter $\lambda_s$ that acts as an attracting force towards $\mu^{(2)} = \pi$. The optimisation is therefore,
$$\mathcal{L} = \mathcal{L}_m + \mathcal{L}_s,$$
with, 
$$\mathcal{L}_s\left(\mu, \mu^{(2)}\right) = \frac{\lambda_s}{N} \sum_{i = 1}^Nd_{\circ}\left(\mu_i, \mu^{(2)}\right).$$
In \cref{fig:efficiency_stiffness_log_only}, where we have aggregated the results across resource constraint levels, we observe that heterogeneity and performance decrease concurrently.
To confirm this observation we compute the Pearson correlation coefficient between heterogeneity and performance aggregated across all resource constraint levels. It yields a correlation of $r = 0.976$ ($p < 0.001$). Furthermore, analysing this correlation across specific resource constraint levels reveals a crucial regime-dependent dynamic. When resource constraints are very low (meaning resources are abundant), there is no particular incentive for the system to match the shape of the task. Because resources are plentiful, even a homogeneous system suffices to meet the demand; the optimisation loss is nearly always zero, resulting in a task-agnostic system whose final configuration depends on initialisation. However, as resource constraints increase (meaning resources become increasingly scarce), the system must adapt its compositionality to reproduce the task in order to ensure maximum performance. It is forced to converge on the heterogeneity dictated by the task's specific characteristics. In these constrained regimes, a robust positive correlation emerges, peaking at $r = 0.997$ and remaining strong as constraints tighten further. This confirms that while heterogeneity is linked to performance globally, it becomes strictly necessary for maximising performance when resources are scarce. Thus, heterogeneity emerges from the model under resource constraints, as a way to efficiently optimise the production capacity. This result resonates with the brain being a fundamentally resource-constrained computing system and heterogeneity being the key property of its structure to efficiently process the information demand \citep{vijay2015}.

\begin{figure}[H]
    \centering
    \includegraphics[width=0.55\textwidth]{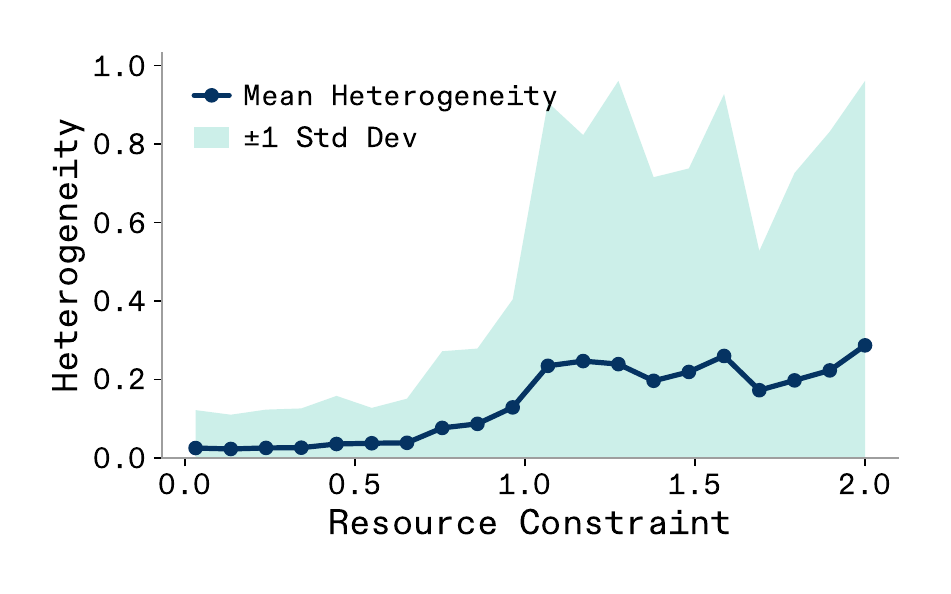}
    \caption{\textbf{The efficient system-level architecture is a heterogeneous architecture.} When forcing agents to find the efficient solution by imposing pressure on their resources, the system increasingly converges upon a heterogeneous setup, across workloads. The shaded area represents one standard deviation around the mean.}
    \label{fig:efficiency_monte_carlo_heterogeneity}
    \label{fig:principle_ressource_constraints}
\end{figure}
\begin{figure}[H]
    \centering
    \includegraphics[width=0.9\textwidth]{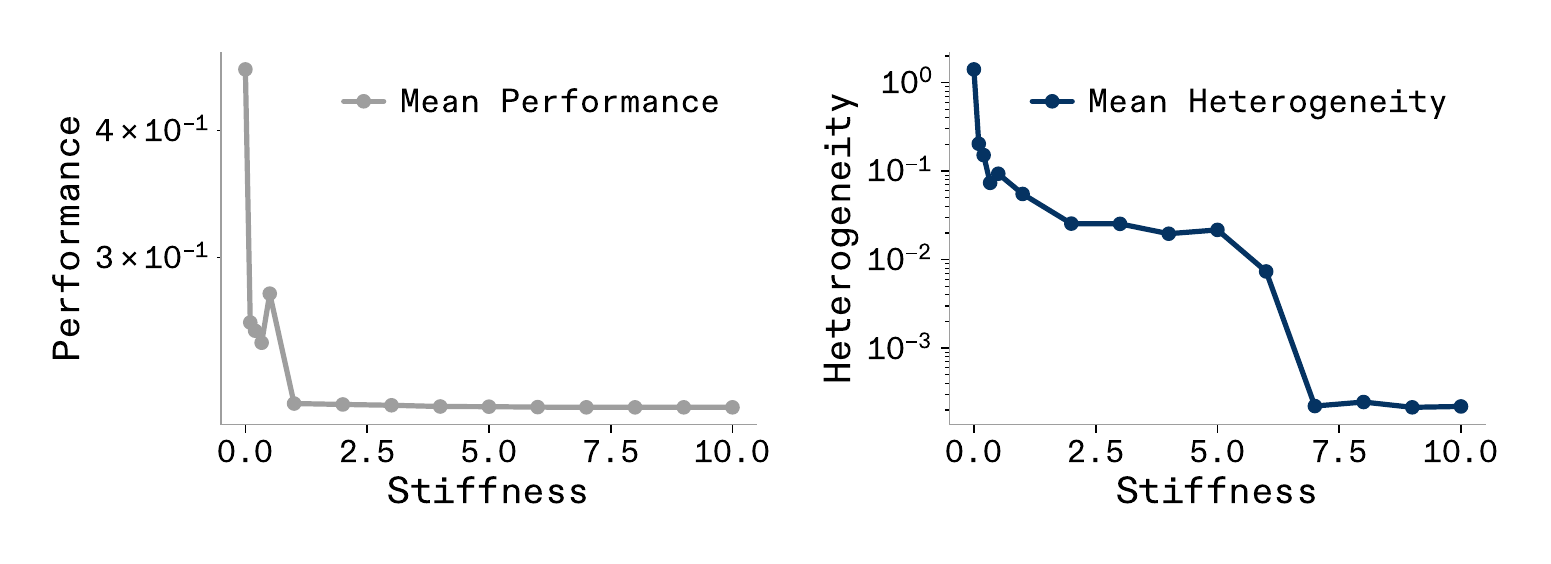}
    \caption{\textbf{Performance and heterogeneity decrease concurrently with stiffness.} Aggregated across varying resource constraints, the system's performance and heterogeneity exhibit joint degradation in response to applied stiffness.}
    \label{fig:efficiency_stiffness_log_only}
\end{figure}

\subsection{Heterogeneity for future risk mitigation and robustness}
\label{sec:principles_robustness}

Part of good system-level design is not only optimising for the problem that is observed right now, but also anticipating how it might change in the future.
Across a wide range of disciplines, prior empirical results suggest that heterogeneity is specifically powerful for such robust future performance.
In ecology (c.f. \cref{sec:findings_ecology}), the spatial insurance hypothesis holds that biodiversity buffers ecosystem functioning against environmental fluctuations because functionally distinct species respond differently to the same perturbation \citep{yachi1999insurance, Loreau2003, schindler2010population, shanafelt2015biodiversity, eisenhauer2023heterogeneity}. In finance (c.f. \cref{sec:findings_portfolio}), \citep{Markowitz1952, Choueifaty2008} showed that combining diverse assets with heterogeneous return profiles reduces portfolio variability, the foundational insight of modern diversification theory. In computational neuroscience (c.f. \cref{sec:findings_robustness}), \citep{perez2021neural} demonstrated that networks with heterogeneous processing time scales approximate functions more robustly than their homogeneous counterparts. These converging observations raise the question of whether heterogeneity specifically links to future risk mitigation as a general structural principle, rather than a domain-specific artifact. We therefore investigate whether heterogeneity of a production system provides robustness under several evolution of the task, abstracting the uncertain evolution of compute demand, the incremental modifications of the demand asked of a firm, or an extreme rare event in the environment imposing a new pressure on a community of species.

We construct two minimal production systems of $N=2$ agents interacting via $J_2$: a homogeneous system ($\mu_1=\mu_2=\pi$, $\sigma=0.5$) and a heterogeneous system ($\mu_1=\pi/2$, $\mu_2=3\pi/2$, $\sigma=0.5$), both operating under identical resource constraints level of $4$. Production is computed as $\Pi(t)=1/(1+\mathcal{L}(t))$, where $\mathcal{L}(t)$ is the cost of unmet time-dependent demand. We subject both systems to three task-evolution regimes that isolate distinct sources of uncertainty: a wave that periodically drift of the task centre ($\mu(t)=\mu_0-\omega t$, $\omega=2\pi$), modelling predictable cyclical shifts in demand, Brownian motion (BM), a pure random walk of the task centre ($\mu(t)=\mu_0+B(t)$, $\sigma_{\mathrm{BM}}=0.15$), modelling unpredictable, chaotic evolution of the task, and
extreme event corresponding to BM augmented by a transient demand spike of duration $0.5$~periods at $t=5$, whose angular location is drawn uniformly on $[0,2\pi)$ and averaged over $10$ independent draws, modelling a rare exogenous shock at an unpredictable position.
Each regime is evaluated on three task catalogues of $16$~tasks each:
a unimodal catalogue  (c.f. \cref{app_fig:workloads_unimodal}) (Gaussian peaks at equally spaced angles with varying breadths), a diverse catalogue (c.f. \cref{app_fig:workloads}) (uniform, unimodal, bimodal, trimodal, quadmodal, and quintmodal configurations), and a catalogue suited to the homogeneous system in which all tasks are initially centred at $\mu=\pi$, the exact location of both homogeneous agents, so that the homogeneous system is maximally well matched at $t=0$.
For each (regime $\times$ task) combination we record the full production time series over $10$~periods ($1\,000$~steps) and compute summary statistics:
mean production, coefficient of variation~(CV), and minimum production. Paired $t$-tests and Wilcoxon signed-rank tests across the $16$~tasks quantify
the statistical significance of the differences.

A more detailed description of the results is given in \cref{app:add_principles_robustness}, we give here a condensed version. Evaluating the unimodal catalogue reveals that the heterogeneous system maintains a higher and steadier production floor. During wave evolution, the heterogeneous system achieves a 134\% higher mean production and a 56\% lower coefficient of variation ($p<10^{-6}$ for both results). This operational stability persists under Brownian motion, showing a mean minimum production $7\times$ higher ($p<10^{-3}$ for both results). Results from the diverse catalogue confirm these advantages are structurally robust. Wave evolution favours the heterogeneous system across all 16 tasks for both mean and minimum production. Under Brownian motion and extreme events, the heterogeneous system outperforms in 14 of 16 tasks for mean and minimum production, with the minimum production advantage remaining highly significant across all evaluated regimes ($p<2 \times10^{-3}$). Finally the catalogue suited to the homogeneous system (\cref{fig:homosuited_summary_robustness}) offers a rigorous test by initially centering all tasks at the exact specialisation of the homogeneous system. While the homogeneous system achieves a $4\times$ higher mean production under Brownian motion ($p=0.04$), the heterogeneous system proves significantly more stable, securing a 41\% lower coefficient of variation ($p=0.014$). Extreme events exacerbate this divergence: the heterogeneous system yields a 44\% lower coefficient of variation ($0.20$ versus $0.36$, $p=0.004$) and a superior minimum production floor. Ultimately, the homogeneous system suffers severe output collapses when tasks shift away from its narrow competence, whereas the heterogeneous system ensures consistent and stable production.

\begin{figure}[H]
    \centering
    \includegraphics[width=\linewidth]{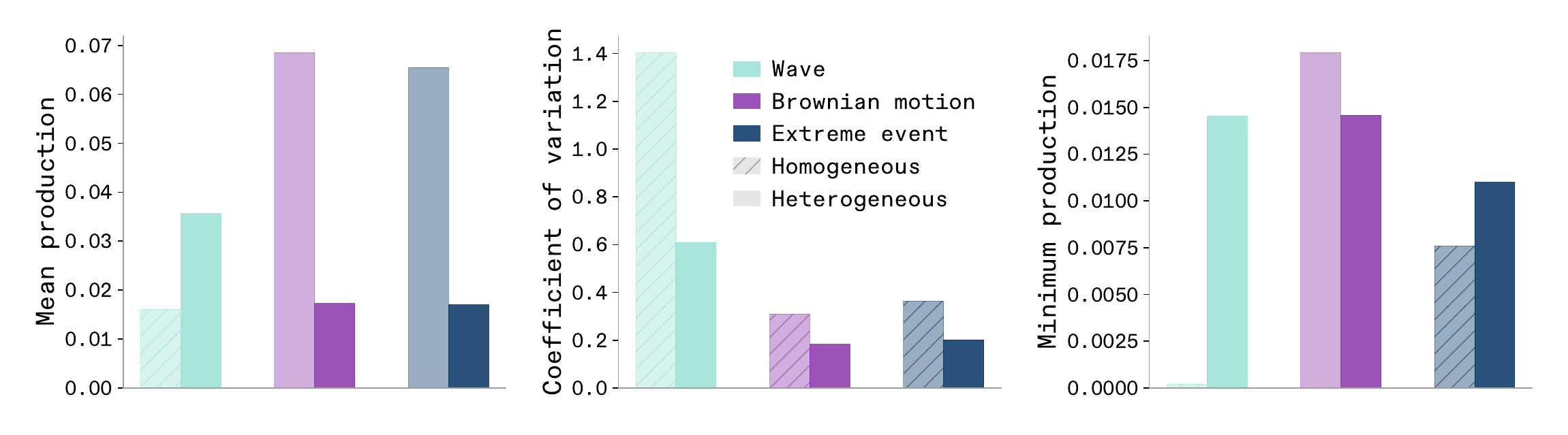}
    \caption{
        \textbf{Summary robustness for the catalogue suited to the homogeneous system}.
        Despite lower mean production, the heterogeneous system exhibits significantly lower CV and higher minimum production under extreme events.}
    \label{fig:homosuited_summary_robustness}
\end{figure}

Across $48$ tasks, three evolution regimes, and both favourable and unfavourable initial conditions, the heterogeneous production
system consistently outperforms the homogeneous system on robustness metrics.
When the environment is not specifically tailored to the homogeneous system,
the heterogeneous system dominates on mean production, variability, and minimum of production.
When the environment is initially tailored to the homogeneous system,
the heterogeneous system still provides significantly lower variability and
better protection against extreme events.
These results unify, within a single formal framework, the spatial insurance
effect in ecology, the diversification benefit in finance, and the robust
approximation advantage in learning.
They establish \emph{robustness under evolving demand} as a general principle
of heterogeneous production systems: heterogeneity acts as an insurance policy
whose premium, a modest reduction in peak performance when conditions are not
perfectly matched, buys substantially lower variance and a higher floor when
conditions inevitably shift.

\subsection{The recursive property of heterogeneity maximisation}
\label{sec:MHP_recursive}

The Maximum Heterogeneity Principle applies not only to a single layer of agents but propagates recursively through any hierarchical system in which the output of one layer constitutes the workload of the layer below. To see this, consider a system with $n$ layers, each defined over its own skill space $\mathbb{T}_k$, with agents parametrised by $(\mu^{(k)}, \sigma^{(k)})$ and connected through a network $Q^{(k)}$. The output of layer $k$ is a production function $W^{(k)}$ over $\mathbb{T}_k$, which serves as the workload demand that layer $k+1$ must satisfy:
\[
\mathbb{T}_0
\xleftarrow{\;\phi_1\;}
\mathbb{T}_1
\xleftarrow{\;\phi_2\;}
\cdots
\xleftarrow{\;\phi_{n-1}\;}
\mathbb{T}_{n-1}
\xleftarrow{\;\phi_n\;}
\mathbb{T}_n,
\]
where each $\phi_k$ is the mapping between adjacent layers. In the three-layer case that describes a cognitive system running on algorithms executing on hardware, this takes the concrete form:
\[
\mathbb{T}_{\mathrm{cog}}
\xleftarrow{\;\phi_1\;}
\mathbb{T}_{\mathrm{algo}}
\xleftarrow{\;\phi_2\;}
\mathbb{T}_{\mathrm{hardware}}.
\]

The key argument is as follows. At the top layer, the Maximum Heterogeneity Principle establishes that the optimal agent configuration is the maximally heterogeneous one. This optimal configuration produces a joint output $W^{(1)}$ that, by the nature of the heterogeneous coverage solution, is itself a broad and distributed function over $\mathbb{T}_1$. That distributed output then acts as the workload for layer 2. Since a broad workload is precisely the condition under which the Maximum Heterogeneity Principle applies (\cref{sec:principles_workloads}), the optimal configuration at layer 2 is again maximally heterogeneous. The same logic applies at every subsequent layer: a heterogeneous solution at layer $k$ produces a workload at layer $k+1$ that is broad enough to drive heterogeneity there too. Heterogeneity at the top therefore induces heterogeneity all the way down.

This argument holds as long as the inter-layer mappings $\phi_k$ are order-preserving on the region of skill space where the inter-agent variance lies, the condition established in \cref{app:between_layer_mapping}. Under that condition, the breadth of the workload produced at one layer is preserved as it is passed to the next, and the drive toward heterogeneity is not lost in translation between layers. Partial bottlenecks, where the mapping collapses parts of the skill space outside the region of meaningful variance, do not disrupt this: as long as the mapping preserves the structure of the workload where it matters, the recursive propagation of heterogeneity holds. Only a complete collapse of the workload to a near-uniform or Dirac-delta function at some layer would break the chain, as that would remove the distributional pressure that drives heterogeneity at the layer below.

\subsection{The resulting \textit{Maximum Heterogeneity Principle}}

What the results of this section jointly point to is an overall dynamic whereby the ideal solution for distributed production system is to converge on a maximally heterogeneous solution, only constrained by the maximum range of the task it is presented with, with the exact topological order specialisations within systems influenced by the internal communication structure of the system. As already shown in the discipline specific findings, the trend towards heterogeneity supports both overall system productivity as well as efficiency and robustness. Lastly, the heterogeneity maximising property stays true in multi-level system setup, due to recursive property of our observations. We hence suggest that the overall system dynamic of distributed production systems can be summarised through what we call \textit{The Maximum Heterogeneity Principle}.

Beyond the all findings described in \cref{sec:findings}, the now derived \TMHP also more specifically aligns with observation already made in biology and economics and hence reconciles them \cite{yoder2010ecological, carroll1985concentration, carroll2000microbrewery}. More interestingly, the principle does not only reconcile empirical observations but also connects naturally to several established mathematical frameworks, suggesting it may serve as an integrative theoretical principle more broadly. The most direct link is to the Maximum Entropy Principle \cite{banavar2010applications}: when the similarity structure between agents is uniform, maximising heterogeneity reduces to maximising the Shannon entropy of the agent distribution, since the two quantities are related by a strictly increasing transformation. The Maximum Heterogeneity Principle can therefore be understood as a distributed agent-focused view of the Maximum Entropy Principle, extending it to settings where agents carry structured pairwise similarities rather than being exchangeable. A related connection arises with overcomplete bases in signal processing and sparse coding: an overcomplete basis achieves robustness and flexibility by spanning a space with more elements than strictly necessary, mirroring the logic by which heterogeneous systems outperform minimal ones under changing workloads.

%% file: Text/5_forcing.tex
The prior section has made clear that distributed production systems across many fields will achieve optimal performance through building up heterogeneous systems-level architecture. As these conclusion were drawn from a theoretical model that was validated to empirical phenomena, it is important to test for its edge cases to reveal the model assumptions and show the specific necessary conditions are for \TMHP to be observed. In the following we explore edge cases of the workload definitions, communication networks and task assignment, and conclude that the conditions to break \TMHP are outside what is to be expected for any large distributed system.

\subsection{Absolutely homogeneous workloads}
\label{sec:forcing_workloads}

The heterogeneity observed across the system's level architecture is influenced by the specific workload / demand function of the environment, for cases when such a functions is present. The results from the prior section make this explicit as \cref{sec:MHP_workloads} shows how the kind of heterogeneity that develops is related to properties of the workload function and the this property ought to recursively permeate through the different layers of the model as shown in \cref{sec:MHP_recursive}. This also means that with relatively narrow and low stochasticity workloads, the ideal heterogeneity is going to be lower, but importantly, the system overall will still converge upon a heterogeneous solution within the bound of expected task heterogeneity. What about the extreme case when a single-valued Dirac Delta function describes the workload? This is an edge case that does break the optimisation of the model, so our model does not make specific prediction for such cases. However, this also not a realistic scenario, not because Dirac Delta distributed workloads do not exist at all, but because no distributed production system would ever face them. In \cref{app:dirac_workload} we provide a more detailed discussion of why that is, but in short, a Dirac Delta workload would imply that a single irreducible function (so it could not be decomposed) needed to be performed with a fixed (never varying) rate of executions per unit time. The closest example would likely be a clock oscillator, such as those found in CPUs or quartz crystals, which performs a single irreducible operation, toggling a bit, at a fixed frequency. However the kind of economic, biological, and computing systems that we are concerned with our model are supposed to cover workloads of higher complexity that naturally distribute across actors, as the many cases described in \cref{sec:findings} make clear.

\subsection{Only allowing communication between identically skilled agents}
\label{sec:forcing_communication}

The analyses in the prior principles section has revealed how good communication between agents is important to realise the needed levels of heterogeneity for good performance (\cref{sec:principle_topology}). On the flip-side that also means that any system with very poor communicability across the system will not be able to benefit from heterogeneous architectures and hence also not converge on it. A specific example of this was discussed in more close detail in context of non-trading countries (\cref{sec:findings_trade}), where countries without trade relationship converge on a homogeneous production of goods. Hence one can force a homogeneous solution by not allowing for collaboration between agents. The most direct route to this was of course to not allow for any network connections between agents. This could either be because no network connections are present, but that of course defeats the point of a model of ideal distributed production. A more nuanced scenario could be found for a constraint on communication which only allowed agents to collaborate if they were very similarly specialised. One could imagine a case of human collaboration (\cref{sec:findings_firms}) where differently skilled humans struggle to efficiently collaborate due to jargon, where increasing specialisation would also cause a decrease in the effectiveness of communication and hence act as an upper level constraint on the optimal level of heterogeneity. While this would decrease the optimal heterogeneity, it would only make the system to converge on a homogeneous solution if one did not allow for any communication as soon as agents were just minimally differently specialised. This does not seem like a reasonable effect of specialisation in any realistic distributed production system.

\subsection{Task to agent assignment and the orchestration problem}
\label{sec:random_task_assignment}
\label{sec:orchestration_probem}

An implicit assumption of the model is that task assignment is optimal: each agent $i$ is allocated the subworkload it is best placed to cover, such that the full workload is partitioned as $W_0 = \sum_i W_0^{(i)}$. Here we argue that the heterogeneity result is robust to violations of this assumption, even under fully random task assignment.

To see this, consider what random task assignment means for the system. Rather than each agent receiving its ideal subworkload, each agent receives a subworkload drawn from a distribution $\mathcal{P}$ over possible partitions of $W_0$. The system must therefore be configured to perform well in expectation over $\mathcal{P}$, rather than for one specific assignment.

This is the central trade-off that optimal portfolio theory addresses \cite{Markowitz1952, Choueifaty2008}. An investor who must allocate capital across assets before knowing their future returns faces an analogous problem: concentrating all capital in a single asset maximises return if that asset performs well, but produces large losses otherwise. The optimal response under uncertainty is diversification \citep{Choueifaty2008}, spreading exposure so that performance is robust across the range of possible outcomes. This logic does not require knowing future outcomes precisely; it only requires that outcomes are uncertain and no single one is guaranteed.

The same reasoning applies here. A homogeneous configuration concentrates all capacity near a single region of skill space. Under a favourable assignment this performs well, but the expected loss $\mathbb{E}_{\mathcal{P}}[\mathcal{L}(\mu, \sigma)]$ over random partitions will be high because the probability that every agent consistently receives a subworkload near its single shared specialisation approaches zero as the system grows. A heterogeneous configuration, by distributing agent positions $\mu_i$ broadly across $T$, reduces this expected loss at the cost of a modest reduction in best-case performance, exactly as a diversified portfolio trades peak return for robustness. We do not claim that the optimal configuration under random assignment is identical to that under perfect assignment. What the portfolio argument \ref{sec:findings_portfolio} establishes is the weaker but sufficient result: a homogeneous configuration is never optimal under an unbiased $\mathcal{P}$, because concentrating capacity is always the worst hedge against uncertainty in task assignment. The Maximum Heterogeneity Principle therefore holds not only under perfect orchestration but across the distribution of possible task assignments.

%% file: Text/6_compute_systems.tex
After having established \TMHP and its boundary cases, we next want to demonstrate how a cross-disciplinary principle can be applied to a new context where it can make specific predictions and act as a design blueprint for optimal systems-level design. Specifically we focus on large scale AI and computing systems, where the status quo looks very unlike what we would expect from our simulations, as generally speaking, large scale AI workloads are executed on high-performance computing clusters where the majority of operations is run on a single type of computing chip (GPUs). These systems hence show a much more homogeneous architecture, despite many of the cognitive skills that they are trying to achieve via large scale AI certainly being best captured by a wide workload distribution (arguments for a wide workload distribution are discussed in \cref{app:dirac_workload}). Here we model the specific reasons for this and show what we should expect to be possible if we build compute systems differently. In this context it is worth noting that following us first publicly sharing this manuscript on the \nth{26} February 2026, NVIDIA announced their Vera Rubin POD on the \nth{16} March 2026, containing multiple kinds of accelerators combined in a real heterogeneous AI computing system \cite{bhargava2026verarubin}. We see this as validation for our prediction that the benefit of heterogeneity will fundamentally transform how we best run large heterogeneous workloads on computing systems.

\subsection{Breaking \textit{The Hardware Lottery}}
Why do large scale computing systems seem to behave very differently from what is predicted by our principles, with relatively homogeneous scaling observed in current systems? The reason for this has been coined `The Hardware Lottery' \cite{hooker2021hardware} and is in fact predicted by our model. Specifically The Hardware Lottery discusses that any compute system is strongly constrained by the available hardware, as any function needs to be executed by chips and circuits. The chips we currently have which can execute operations at a large scale in parallel do so specifically focused on matrix multiplication, so that all systems are strongly constrained by having to go through a narrow set of functional representations. Our simulations show that such second order constrains severely limit innovations that could otherwise result in increased performance of computing systems. However, \TMHP can also go beyond these previous observations \cite{hooker2021hardware} by making specific predictions in how these constraining second order pressures can best be minimised. Specifically, the recursive property suggests (\cref{sec:MHP_recursive}) that to avoid any constrain, the level of heterogeneity should permeate through the individual layers of the system. Hence, in a currently second order constrained system, the ideal new hardware pieces are place in a way that mirror the heterogeneity of the upper level.

We can specifically show this in our model by running simulations where we use a high stiffness parameter on the second layer constraint that limits the densities  agents can converge on (see \cref{model_second_order} for details). Having a strong second order constraint is equivalent to having a strong hardware lottery force, where agents are strongly limited in which function they can execute well. We run a set of four optimisation experiments using our agent-based model. We consider a system of $N=6$ agents interacting via $J_6$ facing a mixture of Gaussian type workload, with a resource constraint of twice the number of agents. We optimise agent parameters using two control variables: the stiffness of the second-order constraint (the strength of the hardware lottery force) and the hardware prior $\mu_{\mathrm{hw}}$ that anchors each agent's preferred function. Hence the loss becomes: 
$$\mathcal{L}(\mu,\sigma) = \mathcal{L}_m(\mu,\sigma) + \mathcal{L}_s(\mu,\sigma;\mu_{\mathrm{hw}}),$$
where only the hardware parameter dependency is written.
\begin{enumerate}
    \item Homogeneous hardware.
We impose a strong second-order constraint (high stiffness) with a
homogeneous hardware prior
$\mu_{\mathrm{hw}}=\pi\,\mathbf{1}_N$, so that every agent is
pulled toward the same operating point. This mirrors the situation described
in \cite{hooker2021hardware}: all processors are optimised for the same
narrow class of operations (e.g.\ matrix multiplication), leaving the system
unable to cover the full breadth of the workload.
    \item Ideal (no hardware constraint).
We set the stiffness to zero, removing the second-order constraint entirely.
Agents are free to specialise without any hardware-imposed penalty. This
serves as the theoretical upper bound on system performance.
    \item Perfectly aligned heterogeneous hardware.
We take the optimal agent positions $\mu^{\star}$ obtained in Experiment~2) and use them as the
hardware prior, $\mu_{\mathrm{hw}}=\mu^{\star}$, while re-imposing a strong stiffness. This simulates a hypothetical hardware
stack whose heterogeneity is perfectly tailored to the workload, each chip is specialised for a different part of the computation space, and together
they mirror the optimal allocation of compute resources.
    \item Quasi-perfectly aligned heterogeneous hardware. We perturb the ideal hardware prior with a small random misalignment,
$\mu_{\mathrm{hw}}=\mu^{\star}+\varepsilon$ where each $\varepsilon_i$ is drawn uniformly from $\{-0.2,\,+0.2\}$, and again optimise under strong stiffness. This represents a more realistic scenario in which heterogeneous hardware is available but not perfectly
matched to the workload.
\end{enumerate}

\begin{figure}[H]
    \centering
    \includegraphics[width=0.9\linewidth]{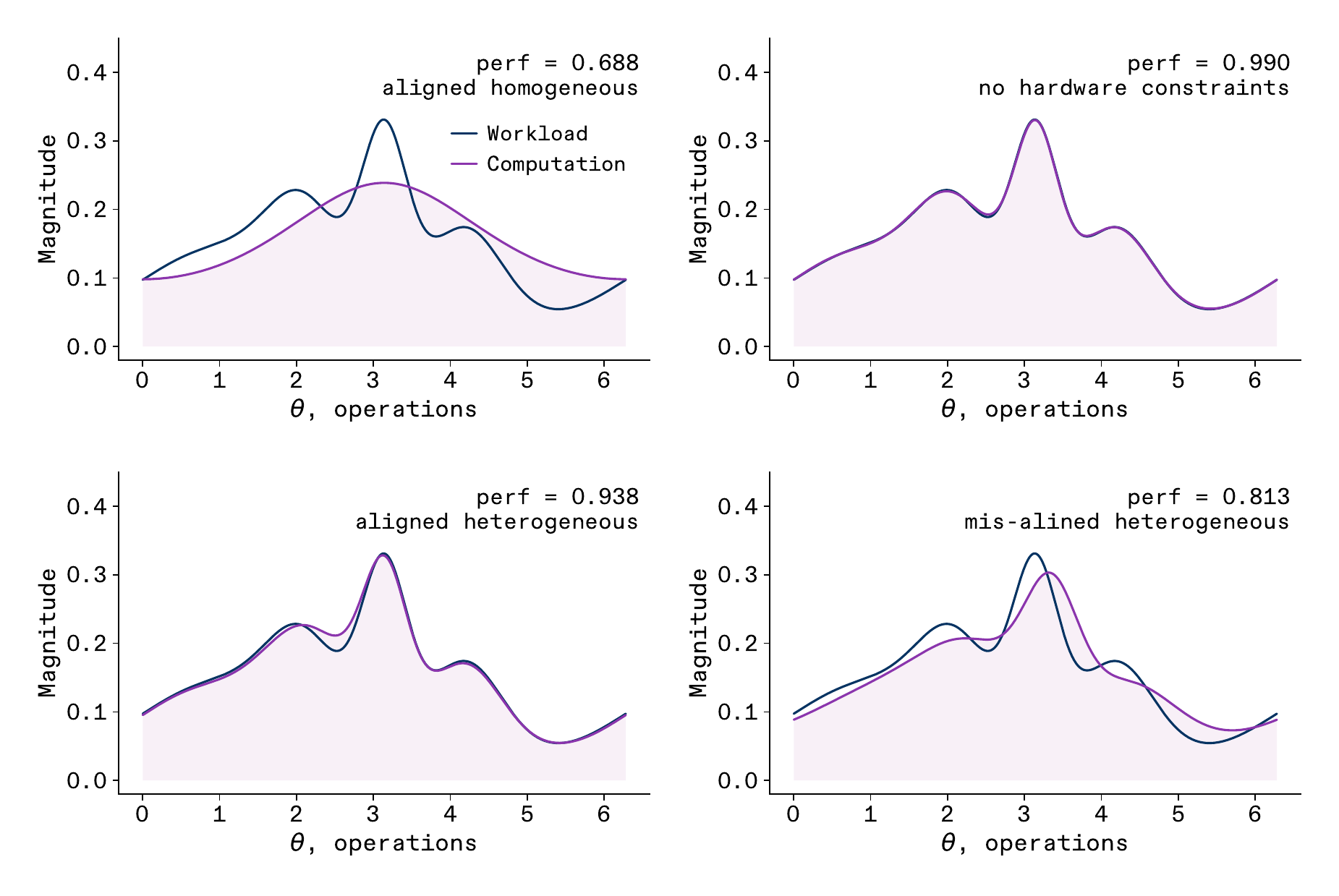}
    \caption{\textbf{Workload-production comparison across different hardware instantiations.}
        Workload is represented in blue and production functions are coloured in purple. Top-left, homogeneous hardware: The
        production function is narrow and poorly matched to the workload
        (perf.\ $\approx 0.69$). Top-right, no hardware constraint:
        agents freely specialise and nearly perfectly cover the workload
        (perf.\ $\approx 0.99$). Bottom-left, perfectly aligned
        heterogeneous hardware:strong constraints are present but aligned
        with the optimal specialisation, largely restoring performance
        (perf.\ $\approx 0.94$). Bottom-right, quasi-perfectly
        aligned heterogeneous hardware: small misalignment degrades
        performance but remains substantially better than the homogeneous
        case (perf.\ $\approx 0.81$).}
    \label{fig:lottery_grid}
\end{figure}

\Cref{fig:lottery_grid} displays the workload alongside the resulting production function for each experiment. Under homogeneous hardware, the production function collapses into a narrow peak that leaves most of the workload uncovered. Removing all hardware constraints allows the agents to distribute themselves optimally across the operational space, yielding near-perfect coverage. Crucially, the third demonstrates that strong hardware constraints need not be
detrimental: when the hardware heterogeneity is aligned with the workload optimal structure, the system recovers most of the ideal performance despite the
high stiffness. Experiment 4) shows that even approximate alignment yields a substantial improvement over the homogeneous baseline, although the residual
misalignment incurs a measurable performance loss. \Cref{fig:lottery_bar} quantifies these differences. The gap between task
performance and full performance reflects the cost of the hardware mismatch
itself. For aligned heterogeneous hardware this gap is small, confirming
that well-chosen specialised components introduce little additional penalty.
By contrast, homogeneous hardware both limits the achievable task coverage
and imposes a large mismatch cost, compounding the performance loss. These
results provide a concrete, quantitative illustration of the prediction
derived from the recursive property: the most effective strategy to mitigate
the hardware lottery is to introduce heterogeneous components whose
specialisation mirrors the structure of the workload at the level above.

\begin{figure}[H]
    \centering
    \includegraphics[width=0.8\linewidth]{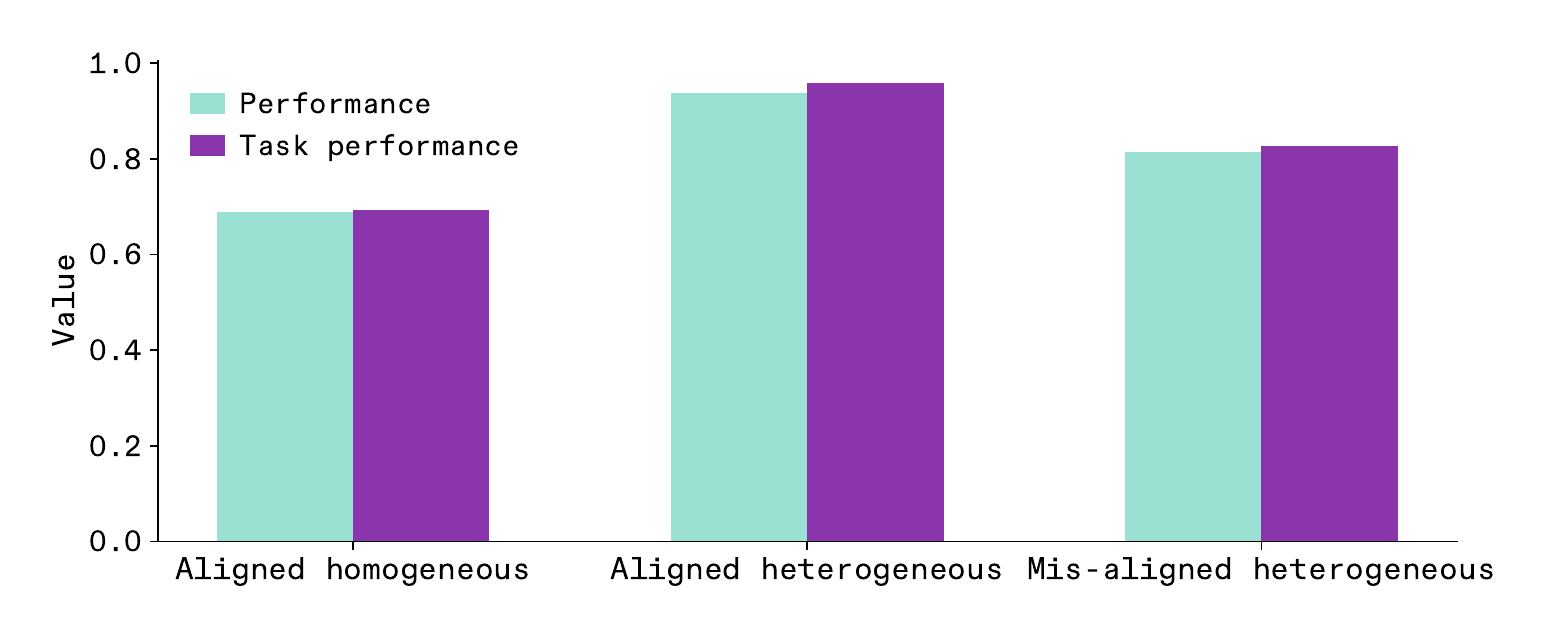}
    \caption{\textbf{Summary of system performance across hardware configurations.}
        Performance denotes the full objective $1/(1+\mathcal{L})$, task performance isolates the workload-coverage component $1/(1+\mathcal{L}_m)$,
        excluding the hardware mismatch penalty. Aligned heterogeneous hardware nearly matches the task performance of the
        unconstrained case, while homogeneous hardware suffers a large deficit. Misalignment degrades both metrics but remains preferable to homogeneity.}
    \label{fig:lottery_bar}
\end{figure}

Consequently, \TMHP predicts that new hardware pieces will unlock new level of performance in our compute systems, as long as they play complimentary roles with regards to the overall workload function. It is now an increasingly interesting time to track the outcome of this prediction, as new kinds of accelerators are being developed which can complement the existing hardware stack \cite{silvano2025survey, sun2025algorithm, orchard2021efficient, jouppi2023tpu, modha2023neural, kagan2025cl1}.

\subsection{Redefining \textit{Scaling Laws}}
As already discussed in prior sections, scaling laws have been observed across many disciplines \cite{west1997general} and have also played a particular role in the scaling of large language models because of the observed neural language model scaling laws \cite{kaplan2020scaling} which observe increasingly improved performance of models as a function of compute, data, and number of model parameters (discussed in more detail in \cref{sec:findings_scaling_laws}). Critically, it has given us the license to scale up our compute systems as targeting increased parameter numbers and amounts of compute in FLOPs were predicted to have a direct effect on the performance of the resulting model. Our simulations predict that as we move towards models which ought to be truly domain general \cite{achterberg2023building}, we need not only to consider the FLOPs but also the kind of operations that are realisable through these FLOPs as we otherwise risk imposing a strong second-order constraint on the system-level design which will continue to degrade the efficiency of the overall system-level design (\cref{sec:principles_efficiency}). This is not to say that one could never achieve domain-general cognition without heterogeneity, but that doing so at acceptable and economical compute budgets might remain an elusive goal under homogeneous scaling. Adding the correct kind of heterogeneity promises to continue not only scaling the computational capacity of our systems but also doing so while keeping the marginal benefit per addition in compute high.

To illustrate this point concretely, we run a scaling experiment in which team size $N$ grows from 2 to 25 agents on a fixed bimodal Gaussian workload ($\mu = \{\pi/2,\, 3\pi/2\}$, $\sigma = 0.5$) under a per-agent resource constraint $(2 + \log N)/N$ modelling better access to resources with scale, and a high hardware stiffness ($\lambda = 10$). We compare two hardware regimes: a homogeneous configuration in which every agent starts with its hardware prior centred at $\pi$ (equidistant from both task modes), and a heterogeneous configuration in which agents alternate between priors centred at $\pi/2$ and $3\pi/2$, thus pre-aligned with the two task modes. For each $N$ we run 20 independent Monte Carlo optimisations and measure productivity ($\Pi = 1/(1+\mathcal{L})$ where $\mathcal{L}$ is the total loss including the hardware mismatch penalty). The results (\cref{fig:h_vs_e_scaling}) show that the heterogeneous system reaches near-optimal productivity already at $N = 4$ and remains flat thereafter, while the homogeneous system climbs steadily but does not match the heterogeneous curve until $N \approx 11$. Beyond that crossover both regimes converge to the same plateau, suggesting that a sufficiently large homogeneous team can eventually compensate for its workload mismatch. This is precisely the inefficiency highlighted above: scaling FLOPs without considering whether those FLOPs are of the right kind leads to a second-order constraint that degrades the marginal return of each additional unit of compute. Adding the appropriate heterogeneity removes that constraint early and keeps the marginal benefit per agent high across the entire scaling curve.
\begin{figure}[H]
    \centering
    \includegraphics[width=0.5\linewidth]{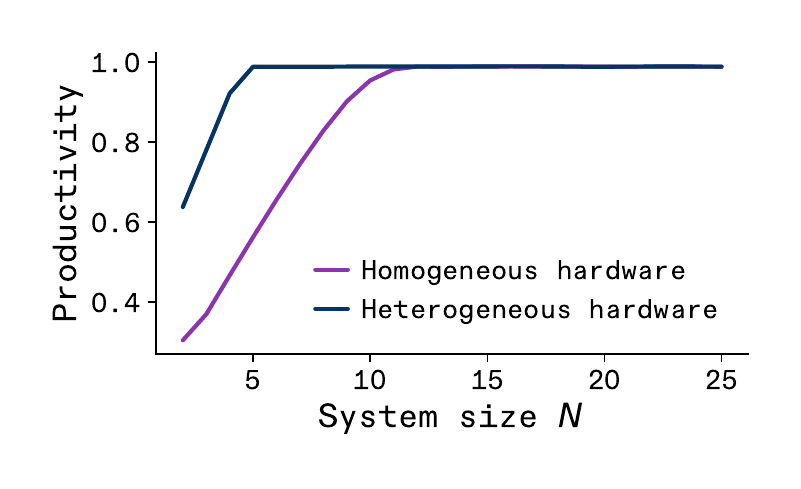}
    \caption{\textbf{Heterogeneous hardware reaches peak productivity at smaller system sizes.} Smoothed mean productivity ($\pm 1$\,SE, 20 runs) versus system size $N$. The heterogeneous system (blue) saturates near its maximum already at $N \approx 4$, whereas the homogeneous system (purple) requires roughly $N \approx 11$ to reach the same level.}
    \label{fig:h_vs_e_scaling}
\end{figure}

Of course the units representing system size in \cref{fig:h_vs_e_scaling} are arbitrary and do not map onto actual real-life units. As a result, our results do not suggest that from system size $\approx 11$ a homogeneous system is always as performant as a heterogeneous one. Instead they point to the more general heterogeneous scaling principle which is stating that in the limit with infinite resources both systems are as performant, but that for most resource-constrained systems, heterogeneous systems will be more performant. \cref{fig:h_vs_e_scaling_schematic} provides a schematic illustration of heterogeneous versus homogeneous scaling performance in the general case.

\begin{figure}[H]
    \centering
    \includegraphics[width=0.6\linewidth]{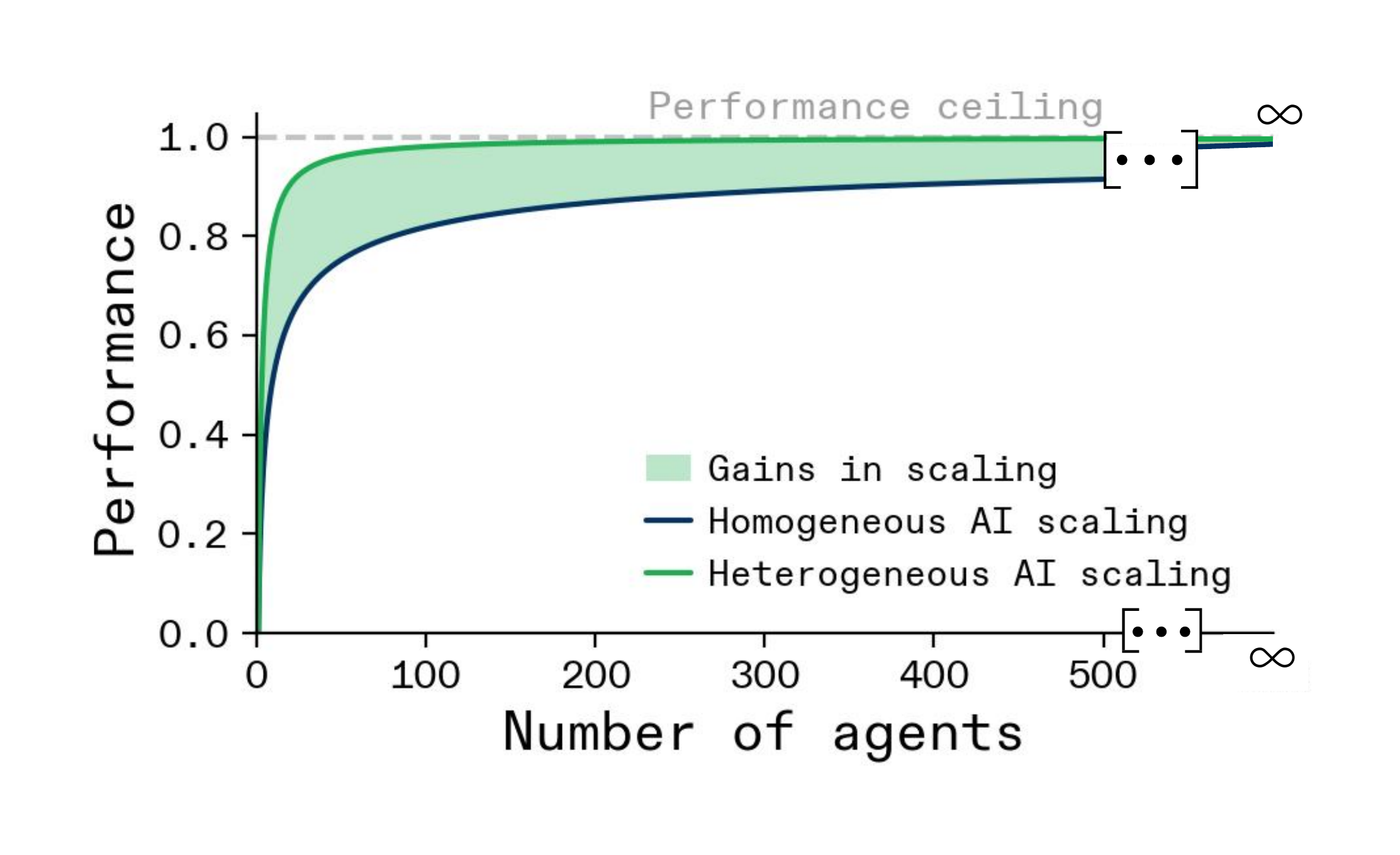}
    \caption{\textbf{Heterogeneous hardware is more performant than homogeneous hardware with performance meeting in the limit of infinite resources.} The purple curve represents homogeneous scaling and the green curve heterogeneous scaling. Figure shows a schematic and does not show an actual analysis based on model simulations.}
    \label{fig:h_vs_e_scaling_schematic}
\end{figure}

\subsection{Expanding \textit{The Bitter Lesson}}
As achieving increasing performance of models via a pure scaling strategy resonated with Sutton's `Bitter Lesson' \cite{sutton2019bitter}, stating that models which rely on scalable strategies to acquire their skills, so through learning and search, would always end up outperforming methods that relied on `cleverly hand-designed systems invoking smart priors'. The heterogeneous scaling laws that follow from \TMHP are not in violation of that but instead highlight how important it is that we expanding the scalable system-optimisation strategies to cover not only well-established model parameter training via gradient descent, but instead move to a world where learnable algorithm-hardware co-design is possible at scale \cite{plagge2022athena, cardwell2024device, mirhoseini2021graph, sun2025algorithm, lai2023chipformer, jiang2024circuitnet, tschand2026genai}.

%% file: Text/7_discussion.tex
The main contribution of this work is the discovery of the \textit{Maximum Heterogeneity Principle}. We do so by developing a new theoretical model which generalises across scientific disciplines and is tractable enough to allow us to derive what laws it follows to converge on optimal system's level design. We summarise this set of laws in \textit{The Maximum Heterogeneity Principle} of distributed production systems, which says: when a system develops towards optimal performance it needs to optimise towards heterogeneity. Optimal performance here means both maximum efficacy and efficiency, as well as robustness. Fixed environmental demands can provide an upper bound on the need for heterogeneity, and the locality and spread of heterogeneity across a system is defined by the system's internal communication structure, more specifically the radius of effective communication between agents defines the spread of heterogeneity throughout the system. In this, \textit{The Maximum Heterogeneity Principle} has a recursive property. This means that when one production systems underlies another production system, as can be seen in computational problems being solved by algorithms on the top level which in turn are being executed by computer chips on the bottom level, that the heterogeneity of the highest level system needs to permeate across all layers of the system for optimal performance. Hence heterogeneity optimises performance of a system across all layers of a system. The discovery of \textit{The Maximum Heterogeneity Principle} has important implications both for understanding large distributed systems, as well as constructing new ones.

Capturing the complex dynamics and relationships that underlie phenomena observed across nature is the fundamental goal of the sciences and with \textit{The Maximum Heterogeneity Principle} we reveal a new attractor point of any distributed production system, making it to converge on a heterogeneous system level setup as a product of optimising its own productivity. Admittedly it seems that only very rarely systems across scale seem to agree on their inner workings, such as previously describe by the $x^{\frac{3}{4}}$ scaling law \citep{west1997general}, and \textit{The Maximum Heterogeneity Principle} seems to be one of these rare but clear cases. We are far from the first to spot the importance of heterogeneity across systems \cite{dahmen2025heterogeneity, joshi2025neuronal, cook2025brainlike, achterberg2023building, Freund2025}, and through our model we were able to pin down why it is so prevalent in systems all around us.

A large part of the scientific value of `general principles' comes from their applicability to new situations. Game theory as example has not only answered scientific questions but has also been a helpful tool for navigating real world conflicts \cite{matsumoto2016game}. The \textit{Maximum Heterogeneity Principle} promises the same applicability to new systems. In our work we discussed its applicability to the design of radically different computing systems already, predicting that new levels of performance could be achieved if we scaled them in a heterogeneous than a homogeneous way. Specifically, our work proposes new heterogeneous scaling laws which would break the hardware lottery that we currently face in computing systems \cite{hooker2021hardware}. This quantitatively supports perspectives on new computing systems designs which have been described by other \cite{cardwell2020truly, hagleitner2021heterogeneous, siegel1995heterogeneous}. This would open up a whole new world of large scale computing that would be unachievable today and \textit{The Maximum Heterogeneity Principle} can act as a blueprint towards constructing these new computing systems.

The question remains how broadly the \textit{Maximum Heterogeneity Principle} generalises beyond computing. We believe it applies to any distributed production system, which we define as any system in which agents jointly produce an output through collaborative and competitive interactions, such that appropriate coordination can increase total output. This encompasses the full range of systems studied here: ecosystems producing biomass, neural circuits producing cognition, and economies producing GDP. The boundary of the principle's applicability is therefore the boundary of this class. In systems that are not distributed production systems, increasing heterogeneity need not be beneficial. A useful contrasting case can be found in physical processes aimed at reducing entropy, such as cooling or crystallisation, where energy is actively expended to decrease heterogeneity. That such reductions require an energy cost is itself consistent with the second law of thermodynamics, and underscores that the drift towards heterogeneity we describe is a property of systems optimising joint output, not of systems in general.

\subsection{Limitations}
\label{sec:limitations}
Our model treats workloads as flat distributions over tasks, but in practice workloads have internal dependency structure: operations must often be executed in a specific order, forming graphs rather than independent draws. This limitation is relevant across the systems we study. In computing, workload graphs are standard; in economics, production chains impose ordering constraints on inputs; in neuroscience, perceptual processing decomposes complex features in a fixed hierarchical sequence. Extending the workload representation to graphs is a natural next step. We include a first investigation of the more complex representation of the workload in \cref{app:workload_as_graph}, showing how the principles uncovered in this work still hold true for this case, but we a future version of the model could handle this workload representation more directly. A related point is that in real computing systems some operations can be hidden through parallelisation, effectively reducing their weight in the distribution; this is consequential primarily in peak-heavy workload distributions, where it may shift the optimal agent skill profile.

On the agent side, our model does not currently account for resource depletion at the lower level, nor does it handle cases where agents differ substantially in overall skill level rather than in the shape of their skill distribution. Both are natural extensions. More broadly, we also observe that the principle does not straightforwardly account for cases of strong within-feature homogeneity, such as the uniform long neck across giraffes, where a single trait converges to a fixed optimum across all agents in a system.

\subsection{Conclusion}

\textit{The Maximum Heterogeneity Principle} describes how in their goal of optimising their performance, all distributed production systems ought to drift towards increasingly heterogeneous system's level designs. We uncover this principle by integrating scientific studies across 6 disciplines and at least 80 years of scientific findings and it can now guide us to construct better large scale distributed systems.

%% file: Text/heterogeneity_measure.tex
In this appendix we give insights, details, and motivation for the framework we use to measure heterogeneity in a way that enables discrimination between production systems of different nature: economic production systems (firm of workers, countries trading together), neural networks or biological production systems (communities of plants producing biomass).

\subsection{Plenty heterogeneity measures} We need a tangible way to assess the heterogeneity. The literature provides a vast amount of different measures, each emphasising different forms of difference within a system they tend to capture. We provide a few illustrative examples. The simplest measure is the richness, which is the count of distinct types in a sampling unit,
\[
S = \sum_{i=1}^{K} \mathds{1}_{\{n_i>0\}},
\]
where $n_i$ is the abundance of type $i$ in the sample. From information theory, the Shannon entropy provides an interesting way to measure heterogeneity as the spread of a probability distribution (the average amount of surprise when sampling from this distribution),
\[
H = -\sum_{i=1}^{K} p_i \log p_i,
\]
where $p_i = n_i / \sum_j n_j$ is the proportion of type $i$ in the sample. There are also numerous statistics, such as the sample variance, the coefficient of variation, the Simpson index \cite{Simpson1949},
\[
D = 1- \sum_{i=1}^{K} p_i^2,
\]
the Hill numbers \cite{Hill1973} which correspond to the effective number of elements in a system (Hill numbers of order $q$),
\[
{D}^{q}(p) = \left(\sum_{i=1}^{K} p_i^{q}\right)^{1/(1-q)}.
\]
Statistical tests have also been used to assess heterogeneity. E.g. Cochran's $\chi^2$ test or the $Q$-test assess clinical heterogeneity which describes the differences between studies' results addressing a common question \cite{higgins2003measuring, Cochran1954, Whitehead1991}.

\subsection{Toward a better measure of heterogeneity}
Many of these measures have issues \cite{Jost2006, Jost2007, Jost2008, HardyJost2008} as they do not verify core principles or axioms that a heterogeneity measure function should verify $\mathcal{H}$. We reformulate them from \cite{Jost2009} more generally. Let us consider a system containing a set of different elements endowed with an abundance distribution $p$ that measures the proportion of each element of the set in the system. Note that an element in the set can have zero abundance and is consequently not visible within the system. The axioms are formulated as follows:
\begin{enumerate}
    \item \textit{Symmetry.} The heterogeneity function is symmetric in its arguments, i.e. for any permutation $\sigma:\{ 1, \dots, n \} \to \{1, \dots, n \},$
    $$\mathcal{H}(p_1, \dots, p_n) = \mathcal{H}(p_{\sigma(1)}, \dots, p_{\sigma(n)}).$$
    \item \textit{Expandability (Zero Output Independence).} Adding another element in the set that has zero abundance does not change the heterogeneity of the system.
    \item \textit{Output Transfers Principle.} Transferring a unit of abundance from a common element to a rarer element should not decrease heterogeneity.
    \item \textit{Homogeneity.} The heterogeneity depends only on species' relative frequencies and not on their absolute abundances.
    \item \textit{Replication Principle.} Suppose $K$ systems have identical abundance $p$, but no elements are shared between any of the systems. All $K$ systems necessarily have the same heterogeneity $\mathcal{H}(p)$. Suppose we pool all $K$ systems. If the heterogeneity measure obeys the replication principle, then the heterogeneity of the $K$ pooled equally diverse, equally large, completely distinct communities must be $K \mathcal{H}(p)$.
    \item \textit{Normalization.} If the heterogeneity measure is applied to $N$ equally common elements, its value is $N$.
\end{enumerate}

Verifying them thus provides a criterion to discriminate whether a measure is acceptable or not. From the examples listed above, the Hill numbers \cite{Hill1973} verify this set of properties \cite{Nunes2020, Jost2009}. There exist other measures that verify all of these axioms \cite{Hoffmann2008}, but the Hill numbers, from our observation, have the greatest support \cite{Nunes2020} to be the reference to quantify heterogeneity. 

However, we argue that this measure is slightly incomplete because it only takes into account the abundance distribution and not the pairwise dissimilarity between the elements. Indeed, take two boxes, one with one red shoe, one green shoe, and one blue shoe, and the other with one red shoe, one green shoe, and one blue ice cream. Intuitively, we would like to say that the second box is more heterogeneous than the first (unless the colour is the only discriminating factor we care about). None of the Hill numbers would enable one to discriminate the heterogeneity of these two systems as they do not take into account the differences between elements \cite{Jost2009}. That is to say, as soon as we have a grasp on the dissimilarity between the elements within the system, it should be taken into account in the measure. At the axiom level we argue that the \textit{output transfers principle} is too imprecise; it works only if the classes of elements within the system are equally different. As such, we support the claim of Tom Leinster and Christina Cobbold \cite{LeinsterCobbold2012} and use the formalism developed by Tom Leinster and co-authors \cite{Leinster2021, LeinsterRoff2021, LeinsterMeckes2016} to define heterogeneity.

\subsection{Heterogeneity measure}
This appendix subpart is largely inspired by Tom Leinster's book \cite{Leinster2021}. We give here the intuition and definition of heterogeneity and adapt it to our context.

\subsubsection{Heterogeneous system} First of all, we need to define the system on which we want to measure heterogeneity. The different systems stemming from the literature are finite sets with elements and an abundance measure that counts the number of elements within the system. Dividing the abundance measure by the total number of elements within the set, we get a probability distribution. So if we are not taking into account between-element differences, the system is formally a finite probability space $(S, p)$ where $p$ is the probability distribution. To model the between-element differences, or conversely, if we have some sense of similarities between elements we can define a square matrix $Z$ that encodes in each $(i,j)$-entry the similarity between the $i$-th and $j$-th elements. The formalism of a system becomes a finite probability space $(S, p, Z)$ endowed with a square similarity matrix.
\begin{example}
    Let us consider a set $S = \{x_1,x_2,x_3\}$, and a pairwise distance $d$ between elements. $(S,d)$ can be viewed as a metric space or as a graph with vertices $x_1,x_2,x_3$ and undirected edges having values $d(x_1,x_2)$, $d(x_1,x_3)$ and $d(x_2,x_3)$. From the distance $d$ we can construct a $3 \times 3$ similarity matrix $Z \in [0,1]^{ 3 \times 3}$ as follows
    $$\forall\; i,j \in \{1,2,3\} \quad Z_{ij} = e^{-d(x_i,x_j)} \in [0,1].$$
    We denote $p = (p_1, p_2, p_3)$ the probability distribution over $S$.
    The heterogeneous system is then defined as the triplet $(S,p,Z)$ or, as the triplet $(S,p,d)$ if one can construct a distance. 
\end{example}

\subsubsection{System heterogeneity} Building on Hill's numbers \cite{Hill1973} we define the heterogeneity measure of a finite (heterogeneous) system. The following definition corresponds to the diversity measure introduced in \cite{Leinster2021}.
\begin{definition}[System's heterogeneity measure]
    Let $S = \{x_1, \dots, x_n\}$ be a finite system with similarity matrix $Z \in \R_+^{n \times n}$ and a probability distribution $p = (p_1, \dots, p_n)^T \in (\R_+^*)^{n}.$ For any $q \in [0, \infty]$ we define the $q$-th heterogeneity measure of $S$ as,
    $$\mathcal{H}^q(S) := \left(\sum_{i=1}^n p_i(Zp)_i^{q-1}\right)^{1/(1-q)}.$$
\end{definition}

\begin{proposition}
    Let us fix a system $S$ with notations as in the previous definition. For any $q \in [0, \infty]$, the $q$-th heterogeneity number of $S$ is well defined and we have,
    $$\mathcal{H}^1(S) = \prod_{i =1}^n (Zp)_i^{-p_i}, \quad \mathcal{H}^{\infty}(S)= \frac{1}{\max_{1 \leq i \leq n}(Zp)_i}$$
\end{proposition}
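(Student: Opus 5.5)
The plan is to write everything in terms of the vector $x := Zp \in \R^n$, with components $x_i = (Zp)_i$. The statement then becomes a claim about weighted power means of $x$ under the weights $p$. First I check that every term in the definition makes sense. Then I treat the two exceptional orders $q=1$ and $q=\infty$ as limits, since the defining formula has an indeterminate exponent there.

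\textbf{Well-definedness.} Since $Z$ has nonnegative entries and $p_i>0$, we have $x_i \ge Z_{ii}p_i$. I will use that a similarity matrix has strictly positive diagonal; in our setting $Z_{ii}=1$ because $d_\circ(\mu_i,\mu_i)=0$, and this is the one hypothesis I will state explicitly. It gives $x_i>0$ for every $i$, so $x_i^{q-1}$ is a well-defined positive real for every real $q$. For $q\in[0,\infty)\setminus\{1\}$ the sum $\sum_i p_i x_i^{q-1}$ is therefore strictly positive, and raising it to the power $1/(1-q)$ is legitimate. The orders $q=1$ and $q=\infty$ are then defined by continuity, as is standard for Hill numbers, and the remaining work is to show that these limits exist and take the stated values.

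\textbf{The case $q=1$.} I take logarithms: $\log \mathcal{H}^q(S) = \frac{\log f(q)}{1-q}$ with $f(q) := \sum_i p_i x_i^{q-1}$. Since $\sum_i p_i = 1$, we get $f(1)=1$, so the quotient has the form $0/0$ as $q\to 1$. Because $f$ is smooth and positive, l'Hôpital's rule applies. Using $f'(1) = \sum_i p_i \log x_i$, the limit of $\log \mathcal{H}^q$ is $-\sum_i p_i \log x_i$. Exponentiating gives $\prod_i x_i^{-p_i}$, which is the claimed formula for $\mathcal{H}^1$.

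\textbf{The case $q=\infty$.} For $q>1$, set $r=q-1>0$. Then $\mathcal{H}^q(S) = \bigl(\sum_i p_i x_i^{r}\bigr)^{-1/r} = 1/M_r(x)$, where $M_r$ is the $p$-weighted power mean of order $r$. The main obstacle is the classical fact $M_r(x)\to\max_i x_i$ as $r\to\infty$, and I will prove it by a sandwich. Let $m=\max_i x_i$, attained at some index $j$. On one side, $p_j^{1/r} m \le M_r(x) \le m$, using $\sum_i p_i = 1$ for the upper bound. On the other side, $p_j^{1/r}\to 1$ as $r\to\infty$ because $p_j>0$. This is exactly where strict positivity of all the $p_i$ matters: otherwise the limit would be the maximum over the support of $p$ only. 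Together these give $\mathcal{H}^\infty(S) = 1/\max_i (Zp)_i$. Positivity of the $x_i$ shows this value is finite, which completes the well-definedness claim on all of $[0,\infty]$.
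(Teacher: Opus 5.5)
Your proof is correct and takes essentially the paper's route. For $q=1$, your l'H\^opital step is equivalent to the paper's first-order expansion of $\sum_i p_i e^{(q-1)\log(Zp)_i}$; for $q=\infty$, the paper factors out the maximising indices' contribution $m^{q-1}\sum_{i\in I}p_i$, and your sandwich $p_j^{1/r}m\le M_r\le m$ does the same job more cleanly, without the case split on $I^c$.

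Your explicit hypothesis $Z_{ii}>0$, which gives $(Zp)_i>0$, makes precise a positivity condition the paper uses only tacitly when it takes $\log(Zp)_i$ and divides by $m$.
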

  
\begin{proof} For $q \neq 1$ the associated heterogeneity number is well defined and the function of $q$ is smooth on $\R_+ \setminus \{1\}.$
\begin{enumerate}
\item There exist equal left and right finite limits at point 1. To show it, we perform a limited expansion around 1.
    \begin{align*}
        \mathcal{H}^q(S) &=  \left(\sum_{i=1}^n p_i(Zp)_i^{q-1}\right)^{1/(1-q)} \\
        &=  \left(\sum_{i=1}^n p_i e^{(q-1)\log(Zp)_i}\right)^{1/(1-q)} \\
        &= \left(\sum_{i=1}^n p_i( 1 + (q-1)\log(Zp)_i + o(q-1))  \right)^{1/(1-q)} \\
        &= \left(\sum_{i =1}^n p_i + (q-1)\sum_{i =1}^n p_i\log(Zp)_i + o(q-1)  \right)^{1/(1-q)} \\
        &= \exp\left( \frac{\log\left( 1 + (q-1)\sum_{i =1}^n p_i\log(Zp)_i + o(q-1) \right)}{1-q}\right) \\
        &= \exp\left( \frac{(q-1)\sum_{i =1}^n p_i\log(Zp)_i + o(q-1)}{1-q}\right) \\
        &= \exp\left( -\sum_{i =1}^n p_i \log(Zp)_i + o(1)\right) \\
        &= \prod_{i =1}^n (Zp)_i^{-p_i} + o(1)
    \end{align*}
\item When $q$ tends to $\infty$ we have also a finite limit. To prove this claim we introduce some notations. $m = \max_{1 \leq i \leq n}(Zp)_i$, $I = \{1 \leq i \leq n \; | \; (Zp)_i=m \}$, $I^c = \{1, \dots, n\}\setminus I$, $\widetilde{p_i} = p_i / \sum_{j \in I} p_j$ and $x = 1 /(q-1) \in \R_+^*$ that tends to $0^+$ when $q$ tends to infinity. We use the variable $x$ in the heterogeneity expression,
$$\mathcal{H}^q(S) =  \left(\sum_{i=1}^n p_i(Zp)_i^{q-1}\right)^{1/(1-q)} = \left(\sum_{i \in I} p_i(Zp)_i^{1/x} + \sum_{i \in I^c} p_i(Zp)_i^{1/x}\right)^{-x}.$$
We factorise by the first term within the parentheses $\sum_{i \in I} p_i(Zp)_i^{1/x} = m^{1/x}\sum_{i \in I} p_i$,
$$\mathcal{H}^q(S) =m^{-1}\left(\sum_{i \in I} p_i\right)^{-x} \left(1 + \sum_{i \in I^c} \widetilde{p_i} \left( \frac{(Zp)_i}{m}\right)^{1/x}\right)^{-x}.$$
If $I^c =\emptyset$, since $p$ is a probability distribution $\sum_{i \in I} p_i = 1$ and for any $q \geq 0$, $\mathcal{H}^q(S) = m^{-1}.$ Otherwise, for all $i \in I^c$, $0 < (Zp)_i /m < 1$ and since $1/x \to + \infty$, $\left((Zp)_i /m\right)^{1/x} \to 0.$ Consequently, $1 + \sum_{i \in I^c} \widetilde{p_i} \left( \frac{(Zp)_i}{m}\right)^{1/x}$ tends to 1 as $x$ tends to $0^+$. Besides, for any strictly positive real number $a$, $x \in \R_+^* \mapsto a^{-x}$ is continuous and tends to one as $x$ tends to $0$. So first of all, $\left(\sum_{i \in I} p_i\right)^{-x} \to 1$ and by composition of continuous functions, $\left(1 + \sum_{i \in I^c} \widetilde{p_i} \left( \frac{(Zp)_i}{m}\right)^{1/x}\right)^{-x} \to 1$ and we obtain $\lim_{q \to \infty} \mathcal{H}^q(S) = m^{-1}.$ Therefore, we can define, for any system $S$, a continuous function on $[0, \infty]$ endowed with the usual (extended real line) topology.
\end{enumerate}
\end{proof}
The system's heterogeneity measure extends the family of Hill numbers on the system. Indeed, replace $Z$ by the identity matrix $I$ and the $q$-th heterogeneity measure equals the $q$-th Hill number. Consequently, this measure verifies the set of axioms listed above. The parameter $q$ acts as a sensitivity modulator toward rare elements or dominant elements. When $q$ is small (approaching $0$), the measure gives relatively greater weight to rare elements. For example, when $Z = I$ the measure approaches the richness of the system which is simply the number of different elements (the cardinality of the set). As $q$ increases, the measure increasingly emphasises common species and reduces the influence of rare ones, making the index more sensitive to dominance. In the limit $q = \infty$ the heterogeneity measures the inverse frequency of the most dominant element in the system. Another interesting result is that, at $q = 1$, the index corresponds to the exponential of Shannon entropy, weighting species proportionally to their frequencies.
\begin{remark}
In this study we have been using the system heterogeneity measure of order $2$ extensively, which has the following simplified expression:
$$\mathcal{H}^2(Z,p) = \frac{1}{p^TZp}.$$
It resembles the squared inverse of a Mahalanobis norm of $p$, if the similarity matrix were positive semi-definite, which it need not be.
In the particular case where $p$ is the uniform distribution, it becomes:
$$\frac{n}{1 + \frac{2}{n} \sum_{1 \leq i < j \leq n}Z_{ij}}$$
\end{remark}
We give a short proof of the remark.
\begin{proof}
    For the simplified expression, in the case of $q = 2$, it suffices to note that $\sum_{1 \leq i \leq n} p_i(Zp)_i = p^T Z p$. When $p$ is the uniform distribution we then have,
    $$\mathcal{H}^2(Z,p) = \frac{n^2}{1^TZ1} = \frac{n^2}{\sum_{ij} Z_{ij}},$$
    and since $Z_{ii} = 1$ for all diagonal coefficients, and since $Z$ is symmetric,
    $$\mathcal{H}^2(Z,p) = \frac{n^2}{n + 2\sum_{1 \leq i < j \leq n} Z_{ij}} = \frac{n}{1 + \frac{2}{n}\sum_{1 \leq i < j \leq n} Z_{ij}}.$$
\end{proof}
We have defined system heterogeneity; in the next subsection we extend it towards a more general framework in which we abstract away from the system to focus on the mathematical objects so as to define the heterogeneity measure.

\subsubsection{Heterogeneity measure}
We have previously seen heterogeneity as a measure on a system. Mathematically it only depends on two objects: the probability distribution $p$ and the similarity matrix $Z$. It suggests we can forget about the elements within the systems (their true nature), and only consider what is of interest when accounting for heterogeneity: correspondingly, the relative proportions $p$ and the dissimilarity (or conversely, the similarity) between the elements. Hence, we can define the heterogeneity as a function over the class of finite probability distributions times the family of similarity matrices.

\begin{definition}[Heterogeneity measure on finite systems]
Let $\mathcal{Z}_n$ denote the class of similarity matrices of size $n$, and $\mathcal{P}_n$ the family of probability distributions on a set of size $n$. We define the $q$-th heterogeneity measure function as follows:
\[
\left[
\begin{array}{ccl}
\mathcal{H}^{q} : \bigcup_{n \in \mathbb{N}}& \mathcal{Z}_n \times \mathcal{P}_n &\to \R_+\\
 &\left(Z,p \right) &\mapsto \left(\sum_{i=1}^n p_i(Zp)_i^{q-1}\right)^{1/(1-q)}
\end{array}
\right].
\]
\end{definition}
By forgetting about the nature of the elements, we obtain a general measure that can enable us to compare the heterogeneity between systems. It is not only defined for a particular system but for all systems defined with a similarity matrix and a distribution. This framework is suitable for understanding the effect of heterogeneity on production, and conversely how a demand can shape the heterogeneity of a system. The key takeaway here is that the prism from which we look at heterogeneity is the one of similarity and distribution only.

\subsubsection{Heterogeneity measure on infinite space}
A natural extension is to go from finite heterogeneous systems to infinite heterogeneous systems. Close to \cite{Leinster2021} formalism we develop here the concept of heterogeneous spaces and define the heterogeneity measure on such spaces. This measure is used in \cref{sec:principles_workloads} to compute the heterogeneity of the task, a feature of the task highly related to the heterogeneity of the system optimised to perform this task. To define a heterogeneous system it suffices to extend each of the elements of a finite heterogeneous system. The finite set $S$ becomes infinite, the similarity matrix becomes a symmetric kernel,
$Z: S^2 \to [0, 1],$
and the distribution of abundance proportions becomes a probability measure (provided a $\sigma$-algebra is realistically constructible). With these extensions we can define the concept of heterogeneous spaces:
\begin{definition}[Heterogeneous space]
Let $(S, \mathscr{F}, \mu)$ be a probability space, and $Z:S^2 \to [0,1]$ be a similarity kernel. A heterogeneous space is a quadruplet $(S,\mathscr{F}, \mu,Z)$ composed of a probability space $(S, \mathscr{F}, \mu)$ and a similarity kernel $Z$. When no confusion is possible we write $(S, \mu, Z)$, $(S,Z)$, or $S$ to refer to the heterogeneous space $(S,\mathscr{F}, \mu, Z).$
\end{definition}
Analogously to the finite system, if there is a natural distance on the space the similarity kernel can be defined using this distance. Consequently, one can formulate an analogous definition of heterogeneous infinite system for a metric space.
\begin{definition}[Heterogeneous space endowed with a metric]
    Let $(S,\mathscr{F},\mu,d)$ be a metric space. The distance $d$ endows a similarity kernel,
    \[
\left[
\begin{array}{ccl}
Z_d : & S^2 &\to \R_+\\
 & \left(x,y \right) & \mapsto e^{-d(x,y)}
\end{array}
\right]
\]
and the heterogeneous space endowed with the metric $d$ is defined as a pair $(S,Z_d)$ composed of a measured space $S$ and a similarity kernel $Z_d$ induced by the distance $d$.
\end{definition}
From the definition of heterogeneous space we can refer to the class of heterogeneous spaces and denote it $\mathcal{H}$ which is a subclass of the set of all measured spaces endowed with a similarity kernel. With this definition, we may now define the heterogeneity measure as a function defined on the class of heterogeneous spaces (c.f. \cite{Leinster2021} for complementary details).
\begin{definition}[Heterogeneity measure]
    Let $(S, Z) \in \mathcal{H}$ be a heterogeneous space and $(\mathscr{F},\mu)$ the measurable structure and the probability on $S$. We define the heterogeneity measure on $S$ as follows:
    $$\mathcal{H}^q(S,\mu, Z) = \left(\int_{x \in S} \left(\frac{1}{(Z \mu)(x)}\right)^{1-q}d \mu(x) \right)^{1/(1-q)},$$
    where the similarity measure $Z \mu$ is defined as:
    $$\forall x \in S \quad \left(Z\mu\right)(x) := \int_{y \in S} Z(x,y)d \mu(y).$$
\end{definition}

\subsection{Measuring heterogeneity in production systems}
We have introduced the heterogeneity measure and the classes of systems on which it is defined. We want now to apply this concept in our precise case of production systems.

\subsubsection{Heterogeneity of a production system: definition}
In the context of a production system, the set corresponds to the agents. There is a subtlety here. As the agents in the production system are organised in a network, they are all uniquely identified, therefore we always use the uniform distribution. We now construct the similarity matrix. The pairwise similarity between agents is key. The similarity here corresponds to a measure of the amount of skills shared by two agents. Because they are represented as wrapped Gaussian densities over the torus, we can use their mean and standard deviation to encode the amount of shared skills. On the torus, we have the usual distance $d_{\circ}$ which we use to define the similarity kernel. First we write,
$$\widetilde{d}^\mu_{ij} = d_{\circ}(\mu_i, \mu_j) =  \min\left(|\mu_i - \mu_j|,\ 2\pi - |\mu_i - \mu_j|\right).$$
We now use the standard deviations of the two agents to define a Fisher-type distance between two agents' skill densities:
$$d_{ij} = \frac{\widetilde{d}^\mu_{ij}}{\sqrt{\sigma_i^2 + \sigma_j^2}}.$$
The similarity kernel is then,
$$K_{ij} = \exp(- d_{ij}).$$
It is a proxy to measure the area of shared skills between agents $i$ and $j$.
For simplicity we fix $q = 2$. The heterogeneity
$$\mathcal{H}^2(Z,N) = \frac{N}{1 + \frac{2}{N}\sum_{1 \leq i < j \leq N} Z_{ij}},$$
yields a value of 1 for a completely homogeneous system (where all elements are identical). However, in many physical and economic contexts, it is more intuitive to define a perfectly homogeneous system as having zero heterogeneity. We therefore subtract one, and obtain the heterogeneity measure in our precise case:
$$\mathcal{H}(\mu, \sigma) = \frac{N}{1 + \dfrac{2}{N}\displaystyle\sum_{1 \leq i < j \leq N} \exp\left(-\frac{\widetilde{d}^\mu_{ij}}{\sqrt{\sigma_i^2 + \sigma_j^2}}\right)} - 1.$$

\subsubsection{Scaling behaviour and redundancy limits}
\label{app:metric_scaling_behaviour}

A critical requirement for our study is that the heterogeneity measure remains robust to changes in $N$ and reflects the overall dissimilarity skills. We verify that our metric does not present pathological scaling behaviours.

\paragraph{Homogeneous Scaling}
A pathological behaviour to avoid is increasing heterogeneity when adding similar agents. We consider a system that undergoes homogeneous scaling, where all $N$ agents are identical ($\mu_i = \mu_j$ and $\sigma_i = \sigma_j$ for all $i,j$). In this case, the distance $d_{ij} = 0$ for all pairs, and the similarity matrix $Z$ becomes the all-ones matrix. The heterogeneity is then
$$ \mathcal{H}(N) = \frac{N}{1 + \frac{2}{N}\frac{N(N-1)}{2}} -1 = \frac{N}{1 + N - 1} - 1 = 0.$$
This justifies that simply increasing the number of agents without introducing new skills (i.e., scaling a uniform workforce) correctly maintains zero heterogeneity.

\paragraph{Convergence of Redundant Systems}
Starting from a heterogeneous system, we demonstrate that adding "more of the same" agents (redundancy) always decreases the heterogeneity, converging to zero in the limit. Consider a simple system of two dissimilar agents ($N=2$) with a pairwise similarity $z = \exp(-d_{12}) < 1$. The initial heterogeneity is $\mathcal{H}(2) = \frac{2}{1+z} - 1  > 0$. Now, suppose we expand the system by adding $k$ clones of agent $1$. The new population size is $N = k+1$. Without loss of generality we assume that the $k$-clones are indexed by $(2, \dots, k+1)$. The heterogeneity is then, 
$$\mathcal{H}(k+1) = \frac{k + 1}{1 + \frac{2}{k+1} \left( \sum_{j = 2}^{k+1} z + \sum_{2 \leq i < j \leq k+1} 1\right)} - 1.$$
Arranging the term and we obtain: 
$$\mathcal{H}(k+1) = \frac{1}{1- \frac{2k(1-z)}{(k+1)^2}} - 1.$$
The sequence $(\mathcal{H}(k+1))_{k \geq 1}$ is a decreasing sequence that tends to 0.  As the system becomes increasingly dominated by a single type of agent, the relative heterogeneity vanishes. This confirms that our metric properly captures how compositionality "dilutes" in the face of redundant scaling.

%% file: Text/model_design_discussion.tex
\subsection{Global optimiser}
\label{app:model_design_optimiser}

Notably our model relies on a global optimiser which is in contrast to many models of distributed systems which have a tendency to model the overall system dynamic (at least partially) as a function of the direct pairwise competition of actors \cite{nash1951, fakhar2025human}. Our choice for a global optimiser has both a practical mathematical and systems dynamics reason. 

On the \textbf{practical mathematical side}, using a global optimiser makes it feasible for us to include complex workload and interaction formulations while still keeping a model system which has a clearly converging overall gradient that makes the model a tractable system to study as it is robust to trivial changes of parameters or initial conditions. So while the model overall is not fully analytically solvable, the convergence characteristics under global optimisation allows for the model to be analysed nearly as if it as analytically tractable. Adding complex pairwise competition interactions as a function of individual strategies across complex network configurations would push the model far away from stable convergence characteristics.

The more interesting reason can be found in the specific \textbf{systems dynamics} we are interested in. Note that the question we care about is what principles best describe the trajectory of the overall systems level setup of distributed production systems, to see whether these generalise across disciplines. As such, we do not need to make any statements about which individual agent will be the overall most productive one given a starting position or its strategy. Our model does not claim that the exact local competition does not matter for the system or is not an interesting topic of study, it does however assume that all the complex pairwise collaborative and competitive interactions ultimately accumulate to an overall systems level dynamics which looks like, and can be described as, a global optimisation process. An example for how local interactions can jointly act like a global optimisation process can for example be observed in the context of learning in the brain \cite{richards2023study} or the discussion of resource allocation and market efficiency in economics \cite{Hurwicz1977}.

\subsection{Arguments against a absolutely homogeneous \textit{Dirac Delta function} workload}
\label{app:dirac_workload}

In \cref{sec:forcing_workloads} we mentioned the model assumption that workloads are not following a Dirac Delta function and argued that this was a realistic assumption for the work faced by any large distributed system, not because they do not exist at all, but because they are extremely rare. Here we analyse what a Dirac Delta workload would mean and why it would not be relevant for large distributed systems.

\subsubsection{A Physics Perspective on Dirac Delta workloads and larger dimensional workload characteristics}
A Dirac Delta workload would imply that a single irreducible function (that could not be further decomposed) needed to be performed with a fixed (never varying) rate of executions per unit time. As mentioned in the main text, such tasks do exist and an example can be found in a clock oscillator in CPUs or quartz crystals as they perform a single irreducible operation, toggling a bit, at a fixed frequency. However, they are extremely rare, especially for the conjoined case where both criteria are true. To make this clear, we explain both criteria in more detail.

To start with irreducibility, this is necessary because if a workload was still reducible or decomposable into subcomponents, then it would not be single-valued in our description of workloads, as it would instead cover the width of all its subcomponents in the space that workloads are defined in. This is the case because now the workload would be made of multiple sub-skills. There may be an agent that perfectly matches the distribution of that decomposable workload which might make it seem to not require decomposing, but once solved with a large distributed system, the standard system-level heterogeneity of \cref{sec:principles} would follow in order to find the optimal solution across agents.

The above effect gets amplified through the fixed rate criteria which has a very related effect. For the sake of argument, we may assume that non decomposable were common or that there existed a group of workload for which decomposing was not possible. To have this workload then appear as a Dirac Delta workload, its rate of execution would also need to be constant. Our current description of the spectrum of tasks does not currently take time into account but one could do so by describing all workloads in 2D space, where one axis is the task characteristic and another is the speed of execution. We know both from basic material science \cite{vig1994quartz} but also system level design such as compute systems \cite{li2025towards}, that unique execution speeds demand their own unique physical function realisation. As such, the varying execution speed is just another 'task characteristic' that needs to be realised through the system's level heterogeneity, and so again the standard system-level heterogeneity of \cref{sec:principles} would follow, just now with respect to execution speed of the underlying material. We represented workloads and agents' skill to be represented on one dimension but factors like execution speed simple represent additional dimensions in which \TMHP would apply. Additional physical factors such as stochasticity of execution would likely similarly expand the task space.

\subsubsection{Additional workload dimensions from the perspective of computer architecture}
In the physic perspective we have already seen how even if a workload was single-valued in one dimension, the cases in which this would continue across all task dimension that are to be considered to find an optimal systems level design are incredibly rare. This is true even on a very low level of analysis of Physics. Naturally this effect compounds into very large dimensional space of workloads for systems in a more complex design space. We can consider computing systems as a well-understood example. Workload descriptions for computing systems would include the description of the actual function to be executed and also the execution frequency, as discussed above, but would include various additional factors. An example for computing systems would be factors relating to memory: functions executed may be of the same basic type but process different volumes of input variables which in turn require different memory access patterns that the systems architecture needs to be optimised for through memory bandwidth optimisation \cite{li2025towards}, which in turn means that any variance over volume of input would invoke \TMHP. To show that this considering spans into many more dimension would could also refer to irregular and sparse memory access pattern \cite{xie2021spacea, ahn2015scalable}. That this is not just limited to the computer architecture but also the algorithm layer can be seen in the effects of network dimension on processing ability \cite{safran2017depth} and the benefit of delay-based architectures for temporal processing \cite{sun2025algorithm, sun2025exploiting}.

\subsubsection{Cognitive consideration for the dimensionality of workloads}
The prior two sections made clear that even for cases where the systems-level heterogeneity predicted by \TMHP still follows in cases where the workload itself may look relatively narrow, due additional workload modulation factors that expand the effective dimensionality of workloads. We should however highlight, that this level of nuance is not relevant for most real-world processes that most cases of science, computing, and engineering are concerned with. As example studies replicated in \cref{sec:findings} highlight, most large scale distributed systems are faced with complex trade relationships, exploration and exploitation of ecological niches, or producing the complex behaviour of mammals navigating the wild. While for these complex that deal with the physical world it may be less tractable to fully describe the 'skill space' that they live in, this solely because of their intrinsic complexity which directly imply heterogeneity. This complexity of operations can be observed both in analyses of task solving in cognitive science and neuroscience \cite{simpson2020neurocognitive, kievit2020sensitive, duncan2025construction, smith2022fluid} and in analyses of production processes in economics \cite{carvalho2019production}.

\subsection{Mapping between layers of the model}
\label{app:between_layer_mapping}

The model formalises the relationship between the resource space, the skill space, and the operations space as identity mappings,
\[
\T_{\mathrm{res}} \xrightarrow{\;\mathrm{id}\;} \T_{\mathrm{skills}} \xrightarrow{\;\mathrm{id}\;} \T_{\mathrm{ops}},
\]
so that a position $\theta \in \T$ in one layer corresponds directly to the same position in the adjacent layer. This implies that skills in one layer translate one-to-one into resources or operations at the layer below, which is unlikely to hold in any realistic system: the cognitive operations produced by a neural network, for instance, do not map positionally onto the transistor-level operations executing them.

The key observation is that the Maximum Heterogeneity Principle does not depend on this positional identity, but only on the ordering structure of the skill space. To see this, let $\phi: \T \to \T$ denote the true mapping from skills in one layer to resources in the next, and let $\Omega \subseteq \T$ be the region of skill space over which the system's heterogeneity is concentrated, i.e.\ the support of the inter-agent variance. The heterogeneity measure $\mathcal{H}_q$ (Definition~1.2) depends on agent positions only through the pairwise similarity matrix $Z$, with $Z_{ij} = e^{-d(\mu_i, \mu_j)}$, so what matters is whether $\phi$ preserves the ordering of inter-agent distances on $\Omega$. Any strictly monotone mapping on $\Omega$ does exactly this: it relabels the skill axis without collapsing the distinctions between agents, leaving $\mathcal{H}_q$ invariant up to that relabelling. This means that a wide class of inter-layer mappings, including inverted or non-linearly rescaled ones, leave the heterogeneity ordering of agent configurations intact, and the conclusion that the optimal configuration is the maximally heterogeneous one carries through unchanged.

This reasoning also tolerates partial bottlenecks. If $\phi$ collapses some region $\T \setminus \Omega$ to a point, but remains order-preserving on $\Omega$ where the inter-agent variance lies, the effect on $\mathcal{H}_q$ is negligible by construction. What would invalidate the argument is a mapping that collapses $\Omega$ itself, a complete bottleneck on the region carrying the system's heterogeneity. Short of that, the heterogeneity scaling conclusion is robust to the specifics of how layers relate to one another.

\subsection{Why universal function approximators still have non-uniform approximation skills}
\label{app:ufa}

A natural objection to modelling neural networks as agents with Gaussian skill densities over a bounded space is that neural networks are universal function approximators: given sufficient capacity, they can approximate any function arbitrarily well. This might seem to imply flat coverage of the skill space, contradicting the model's premise. The same objection could in principle be raised for any system that carries universal function approximation properties, such as kernel methods or certain classes of symbolic systems.

The objection conflates existence with cost. Universal approximation is an existence result: for any target function and any $\varepsilon > 0$, a sufficiently expressive system can approximate it to within $\varepsilon$. It makes no claim about how much capacity, compute, or data that approximation requires, and crucially, those costs are not uniform across function classes. For any fixed architecture or system design and parameter budget, approximation error varies substantially across targets: some functions are approximated cheaply, others only at disproportionate cost. We would expect this to be a general property of any universal function approximator, since the existence guarantee says nothing about the geometry of the approximation cost landscape, and there is no reason to expect that landscape to be flat. This is precisely what a skill density captures. The mean $\mu_i$ encodes where in function space a given system is most efficient; the standard deviation $\sigma_i$ encodes how rapidly that efficiency degrades away from that optimum. Universal approximation guarantees the tails of the density are never exactly zero, but does not prevent them from being negligibly small under realistic resource constraints. In our model, this is directly captured by the choice of wrapped Gaussian skill densities, 
which are strictly positive everywhere on $\T$: every agent assigns non-zero density to every point in skill space, meaning any agent can in principle approximate any skill, however distant from its specialisation. Universal approximation is thus preserved in the model, but the Gaussian shape ensures that the cost of doing so grows with distance from $\mu_i$, recovering the non-uniformity that the existence guarantee alone does not preclude.

In the case of neural networks, two empirical signatures confirm this non-uniformity directly. Depth and width do not improve approximation uniformly: compositional and hierarchical functions benefit strongly from depth in ways that width cannot easily replicate \cite{safran2017depth}, revealing an architecture-dependent bias over function space. Similarly, networks with temporal processing capabilities approximate functions over sequential inputs far more efficiently than standard architectures \cite{sun2025algorithm, sun2025exploiting}, again demonstrating that the effective skill profile is shaped by architectural choices. While analogous systematic evidence is less comprehensively documented for other universal approximators, the structural argument applies equally: any system operating under finite resource constraints will approximate some function classes more efficiently than others, and the existence guarantee of universal approximation does not remove that constraint.

The Gaussian parametrisation is therefore not a simplification in tension with universal approximation, but a model of the non-uniformity that universal approximation leaves unaddressed. Because approximation cost is heterogeneous across function classes, a system of agents collectively covering a broad workload faces exactly the trade-off the model describes: specialisation is efficient but brittle, and heterogeneous coverage is the robust solution. This holds for neural networks specifically, and we expect it to hold for any class of universal function approximators operating under realistic constraints.

\subsection{The Maximum Heterogeneity Principle under graph-structured workloads}
\label{sec:workload_as_graph}
\label{app:workload_as_graph}
In \cref{sec:limitations}, we note that our model treats the workload as a flat density over the operations space. However, real workloads often carry internal dependency structure: operations must be executed in a specific order, forming a directed acyclic graph (DAG). We argue that this does not invalidate the \emph{Maximum Heterogeneity Principle} but instead maps onto mechanisms the model already contains.

Any DAG admits a topological decomposition into a sequence of levels $L_0, L_1, \dots, L_K$, where the operations within each level $L_k$ share no mutual dependencies and can be executed concurrently; only the inter-level edges impose ordering. Within each level, the workload is therefore a distribution over independent operations --- precisely the flat density that the model takes as input. Between levels, the completion of $L_k$ enables $L_{k+1}$: the aggregate output of the agents serving level~$k$ becomes the effective workload that level~$k{+}1$ must meet. This is structurally identical to the multi-layer chain $\mathbb{T}_0 \xleftarrow{\phi_1} \mathbb{T}_1 \xleftarrow{\phi_2} \cdots \xleftarrow{\phi_K} \mathbb{T}_K$ formalised in \cref{sec:MHP_recursive}, where the production function of one layer serves as the demand for the next. The recursive property of \cref{sec:MHP_recursive} then applies directly: because the optimal solution at each level is heterogeneous, the output it produces is itself a broad, distributed function over the operations space, which is exactly the condition under which the principle propagates to the subsequent level. Heterogeneity at the top of the DAG therefore induces heterogeneity all the way down, just as it does across the $\mathbb{T}_{\mathrm{cog}} \leftarrow \mathbb{T}_{\mathrm{algo}} \leftarrow \mathbb{T}_{\mathrm{hardware}}$ hierarchy.

An additional observation reinforces this conclusion. Inter-operation dependencies act as a structural constraint on the system, restricting which agents can process which operations and in what order. This is formally analogous to the communication topology constraints studied in \cref{sec:principle_topology}: topology bounds the heterogeneity the system can express via the radius of collaboration, but it never eliminates the drive towards it. Workload dependencies similarly bound the scheduling freedom of agents without removing the fundamental incentive to partition the operation space efficiently. Only a fully serial, single-operation chain --- the graph-theoretic analogue of a Dirac Delta workload (\cref{app:dirac_workload}) --- would collapse all levels to singletons and suppress heterogeneity, and such a chain is the degenerate case already identified as unrealistic for any large distributed system. The \emph{Maximum Heterogeneity Principle} therefore not only survives the transition to graph-structured workloads but is, if anything, strengthened by the additional structural differentiation that dependencies introduce among operations.

%% file: Text/technical_verifications.tex
In this appendix section we discuss the well-posedness of the gradient descent. We focus our discussion on the special case where there is only one agent. In this case the production function writes:
$$W_1(\theta; \mu, \sigma) = \frac{1}{\sqrt{2 \pi} \sigma} \sum_{k \in \Z} \exp\left(-{\frac{(\theta - \mu + 2 k \pi)^2}{\sigma^2}}\right).$$
The last subsection of this appendix draws some comments stating that the special case is extensible to the general case of multi-agents setting with more complicated production function.

\subsection{Preliminaries on the Wrapped Gaussian}
For details on the wrapped Gaussian distributions we refer the reader to \citep{Mardia2000}. We give here all the technical results that will be useful in the next subsections to study the well-posedness and convergence of the gradient descent. 

\subsubsection{Analyticity of the wrapped Gaussian}
In this paragraph, we prove the analyticity of the wrapped Gaussian density. We need this property to ensure the domain $W_0 = W_1(\mu, \sigma)$ has no accumulating points. Let's fix $\mu \in \T$, $\sigma > 0$ and write
$f :=W_1(\cdot; \mu, \sigma)$ the $2\pi$-periodic function. The Fourier coefficients of $f$ are: 
$$c_n(f) = \frac{1}{2 \pi} \int_0^{2 \pi} f(\theta) e^{-in \theta} d \theta = \frac{1}{2 \pi} e^{-i n \mu - \frac{1}{2}\sigma^2 n^2} .$$

\begin{proof}
    Here we compute the Fourier coefficient of the $2\pi$-periodic function $f$.
    $$c_n(f) =  \frac{1}{2 \pi} \int_0^{2 \pi} f(\theta) e^{-in \theta} d \theta =  \frac{1}{2 \pi} \int_0^{2 \pi} \frac{1}{\sqrt{2 \pi} \sigma} \sum_{k \in \Z} \exp\left(-{\frac{(\theta - \mu + 2 k \pi)^2}{\sigma^2}}\right) e^{-in \theta} d \theta.$$
 By inverting the integral and series, 
    $$c_n(f) = \frac{1}{2 \pi} \frac{1}{\sqrt{2 \pi} \sigma} \sum_{k \in \Z} \int_0^{2 \pi}   \exp\left(-{\frac{(\theta - \mu + 2 k \pi)^2}{\sigma^2}}\right) e^{-in \theta} d \theta.$$
    Using an affine change of variables $x = \theta + 2k \pi$, the $n$-th Fourier coefficient becomes,
    $$c_n(f) =  \frac{1}{2 \pi} \frac{1}{\sqrt{2 \pi} \sigma} \sum_{k \in \Z} \int_{2 k \pi}^{(2k + 1) \pi}   \exp\left(-{\frac{(x - \mu)^2}{\sigma^2}}\right) e^{-in (x + 2 k \pi)} d x.$$
    Since $e^{-2ik \pi n} = 1$ for all $n,k$, we obtain,
    $$c_n(f) = \frac{1}{2 \pi}   \int_{-\infty}^{\infty}  \frac{1}{\sqrt{2 \pi} \sigma} \exp\left(-{\frac{(x - \mu)^2}{\sigma^2}}\right) e^{-in x} d x.$$
    Noticing that $\int_{-\infty}^{\infty}  \frac{1}{\sqrt{2 \pi} \sigma} \exp\left(-{\frac{(x - \mu)^2}{\sigma^2}}\right) e^{-in x} d x $ is the characteristic function of the normal distribution evaluated at $-n$ allows us to finalize the proof.
\end{proof}
The sequence of Fourier coefficients of $f$, $\left(c_n(f) = \frac{1}{2 \pi}  e^{-i n \mu - \frac{1}{2}\sigma^2 n^2}\right)_{n \in \Z}$ decays ultra-fast at infinty, 
$$\forall n \in \Z, \quad |c_n(f)| \leq  e^{- \frac{1}{2}\sigma^2 n^2} \leq e^{- \frac{1}{2}\sigma^2 |n|}.$$
The Paley-Wiener theorem asserts that $f$ is analytic. 

\subsubsection{Fourier series of the production function}
The previous demonstration of the analyticity of the production function $f$ via the fast-decay of the Fourier coefficients sequence enables us to write the production in the spectral domain:
$$\forall \theta \in \T, \quad f(\theta) = \frac{1}{2 \pi} \left( 1 + 2\sum_{n =1} ^{+ \infty} e^{-\frac{1}{2}\sigma^2 n^2} \cos(n (\theta - \mu))\right).$$

\begin{proof}
Since $f$ is a density, it equals its Fourier series: 
$$f(\theta) = \sum_{n \in \Z} c_n(f) e^{in \theta}.$$ 
Replacing the Fourier coefficients by their expressions,
$$f(\theta) = \sum_{n \in \Z} \frac{1}{2 \pi} e^{-i n \mu - \frac{1}{2}\sigma^2 n^2} e^{in \theta}  = \frac{1}{2 \pi} \sum_{n \in \Z} e^{i n (\theta - \mu) - \frac{1}{2}\sigma^2 n^2}.$$
We separate from the symbol sum the case $n = 0$.
$$f(\theta) =\frac{1}{2 \pi} \left( 1 + \sum_{n =1} ^{+ \infty} e^{\frac{1}{2}\sigma^2 n^2} \left( e^{i n (\theta - \mu)} +  e^{-i n (\theta - \mu)}\right)\right).$$
The use of Euler formulas finishes the proof.
\end{proof}

\subsubsection{On the strict unimodality of the Wrapped Gaussian}
A property of the wrapped Gaussian, on which we will rely on, is its strict unimodality. We do not prove this result here, we only state the result. The reader can refer to \cite{wintner1933shape, mardia2000directional} for a more analysis statistician approach and to \cite{levy1939addition, polya1930quelques} for a more mathematical physics approach. For any $\sigma > 0$ and $\mu \in \mathbb{T}$, the wrapped Gaussian density on $\mathbb{T}$, $W_1(\cdot; \mu, \sigma)$ possesses exactly two critical points: 
\begin{enumerate}
    \item A unique global maximum reached at $\theta = \mu$, and
    \item a unique global minimum reached at the antipode, $\theta = \mu + \pi$.
\end{enumerate}
Consequently, the wrapped Gaussian density is strictly decreasing on $[\mu, \mu + \pi]$, and strictly increasing on $[\mu + \pi, \mu + 2\pi]$.

With the expression of the density via its Fourier series, we can directly compute the exact values of the extrema of $W_1$. By evaluating the series at $\theta = \mu$ and $\theta = \mu + \pi$, we have: 
$$\max W_1 = W_1(\mu; \mu, \sigma) = \frac{1}{2 \pi} \left( 1 + 2\sum_{n=1}^{+\infty} e^{-\frac{n^2 \sigma^2}{2}} \right)$$
and
$$\min W_1 = W_1(\mu + \pi; \mu, \sigma) = \frac{1}{2 \pi} \left( 1 + 2\sum_{n=1}^{+\infty} (-1)^n e^{-\frac{n^2 \sigma^2}{2}} \right)$$

\subsection{Regularity of the loss}
The previous subsection has made clear some properties of the wrapped Gaussian densities, we are going to use them to prove the well definition of the loss and study its regularity.
We recall the loss definition here: 
$$\mathcal{L}: (\mu, \sigma) \in \T \times \R_+^* \mapsto  \left\lVert \frac{W_0 - W_1(\mu,\sigma)}{W_1(\mu,\sigma)}\,\mathbf{1}_{\{W_0 > W_1(\mu,\sigma)\}} \right\rVert_{p},$$
with $p > 1.$ First we prove that the loss is well defined and then we study the regularity of the loss. We assume that the demand $W_0$ is an analytical function on the torus.

\subsubsection{Definition of the loss}
Based on the analyticity and the strictly positivity of the the wrapped Gaussian density, for any parameters $(\mu, \sigma) \in \T \times \R_+^*$, the function
$$\theta \mapsto \frac{W_0(\theta) - W_1(\theta; \mu, \sigma)}{W_1(\theta; \mu, \sigma)},$$
is analytical. According the theorem of isolated zeros, the function,
$$\theta \mapsto W_0(\theta) - W(\theta; \mu, \sigma),$$
has a finite number of isolated zeros on the compact $\T$. Consequently, the indicator function has finitely many discontinuities, and the the integrand function,
$$ \theta \mapsto \left( \frac{W_0(\theta) - W_1(\theta; \mu, \sigma)}{W_1(\theta; \mu, \sigma)} \mathbf{1}_{\lbrace W_0(\theta) - W_1(\theta; \mu, \sigma) > 0 \rbrace } \right)^p,$$
is piecewise continuous.

To rigorously apply the dominated convergence theorem and ensure the integral is mathematically sound, we must show that the integrand is upper-bounded by an integrable function independent of the parameters $\mu$ and $\sigma$. To prevent the denominator from vanishing, we restrict our parameter space to a compact set $\mathcal{K} = \T \times [\sigma_{\min}, \sigma_{\max}]$ with $\sigma_{\min} > 0$. From our previous analysis of the strict unimodality, we know the minimum of $W_1$ with respect to $\theta$ is reached at the antipode $\mu + \pi$. Furthermore, as the variance $\sigma^2$ decreases, the density becomes more concentrated around $\mu$, causing the value at the antipode to drop. Thus, the absolute lowest value the denominator can take on our compact set $\mathcal{K}$ occurs precisely at the lower bound $\sigma = \sigma_{\min}$:
$$\forall (\mu, \sigma) \in \mathcal{K}, \forall \theta \in \T, \quad W_1(\theta; \mu, \sigma) \geq W_1(\mu + \pi; \mu, \sigma_{\min}) := \delta_{\min} > 0.$$
Therefore, $\delta_{\min}$ explicitly defines a global lower bound on $\mathcal{K}$ independent of $\mu$ and $\sigma$. By using this lowerbound on $W_1$ on $\mathcal{K}$, we can obtain the following upper bound: 
$$\left| \left( \frac{W_0(\theta) - W_1(\theta; \mu, \sigma)}{W_1(\theta; \mu, \sigma)} \right)^p \mathbf{1}_{\{W_0 > W_1(\mu,\sigma)\}} \right| \leq \left( 1 + \frac{W_0(\theta)}{\delta_{\min}} \right)^p,$$
which is an integrable function on the torus independent of $(\mu, \sigma) \in \mathcal{K}$. By the dominated convergence theorem, the loss $ \mathcal{L}$ is well-defined, strictly finite, and continuous on any compact $\mathcal{K} = \T \times [\sigma_{\min}, \sigma_{\max}]$ with $\sigma_{\min} > 0$, so on $\T \times \R_+^*$.

\subsubsection{Regularity of the loss}

We want to prove now that the loss function is globally $\mathcal{C}^1$ with respect to the parameters $(\mu, \sigma)$ on the compact space $\mathcal{K} = \T \times [\sigma_{\min}, \sigma_{\max}]$ to guarantee the stability of gradient descent methods. Equivalently, we study $\mathcal{L}(\mu, \sigma) := \mathcal{L}^p(\mu, \sigma)$. 

Let us write the loss as such,
$$\mathcal{F}(\mu, \sigma) = \int_{\T} \left( \max\left(0, \frac{W_0(\theta) - W_1(\theta; \mu, \sigma)}{W_1(\theta; \mu, \sigma)}\right) \right)^p d\theta,$$
and note $\eta \in \{\mu, \sigma\}$ and $E(\theta, \eta) = \frac{W_0(\theta) - W_1(\theta; \eta)}{W_1(\theta; \eta)}$ be the relative error. Our integrand is $g(\theta, \eta) = \max(0, E(\theta, \eta))^p$. Because $W_1 \geq \delta_{\min} > 0$ on $\mathcal{K}$, the function $E(\theta, \eta)$ is analytically differentiable with respect to $\eta$. Furthermore, because $p > 1$, the mapping $x \mapsto \max(0, x)^p$ is continuously differentiable on $\R$, with derivative $x \mapsto p \max(0, x)^{p-1} \mathbf{1}_{\{x>0\}}$. By composition, the integrand $g(\theta, \eta)$ is globally continuously differentiable ($\mathcal{C}^1$) with respect to $\eta$ on $\T \times \mathcal{K}$. Its partial derivative is:
$$\frac{\partial g}{\partial \eta}(\theta, \eta) = p \max(0, E(\theta, \eta))^{p-1} \mathbf{1}_{\{E(\theta, \eta) > 0\}} \frac{\partial E}{\partial \eta}(\theta, \eta).$$

Because the domain of integration $\T$ is fixed and compact, and the integrand's partial derivative $\frac{\partial g}{\partial \eta}$ is continuous and uniformly bounded on $\T \times \mathcal{K}$, we can rigorously apply the standard Leibniz integral rule (differentiation under the integral sign). We can pass the derivative directly inside the integral without requiring any transversality assumptions regarding how $W_0$ and $W_1$ intersect:
$$\nabla_\eta \mathcal{L}(\mu, \sigma) = \int_{\T} p \left( \frac{W_0(\theta) - W_1(\theta; \mu, \sigma)}{W_1(\theta; \mu, \sigma)} \right)^{p-1} \mathbf{1}_{\{W_0 > W_1\}} \nabla_\eta \left( \frac{W_0(\theta) - W_1(\theta; \mu, \sigma)}{W_1(\theta; \mu, \sigma)} \right) d\theta.$$

The gradient is an integral of a continuous, bounded function over a fixed compact domain, meaning the gradient itself is continuous. Therefore, the loss $\mathcal{L}^p$ is globally $\mathcal{C}^1$ on $\T \times [\sigma_{\min}, \sigma_{\max}]$. This strict regularity is the fundamental condition required to ensure that gradient descent remains well-defined and converges smoothly.

%% file: Text/experiment_parameters.tex
\subsection{Workloads used for testing of principles}
\label{app:principles_workload_overview}

For analyses in \cref{sec:principles} and in \cref{sec:findings}, we need to account for various different workload shapes. Figures \ref{app_fig:combined_workload_catalogues} and \ref{app_fig:workloads_eff}, show overviews of the different catalogues of workloads used.

\begin{figure}[H]
    \centering
    \begin{subfigure}{0.32\textwidth}
        \centering
        \includegraphics[width=\linewidth]{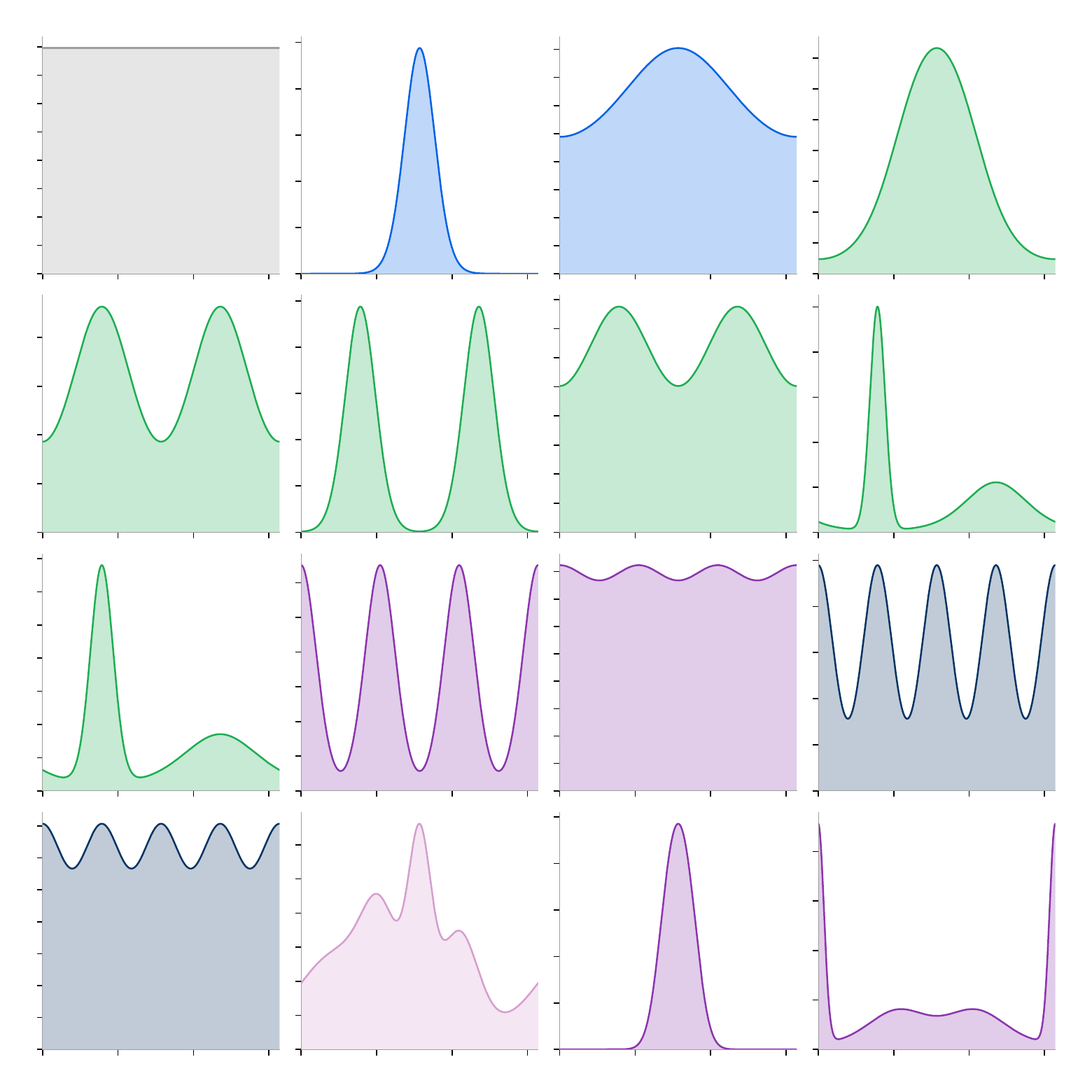}
        \caption{Principles workload overview.}
        \label{app_fig:workloads}
    \end{subfigure}
    \hfill
    \begin{subfigure}{0.32\textwidth}
        \centering
        \includegraphics[width=\linewidth]{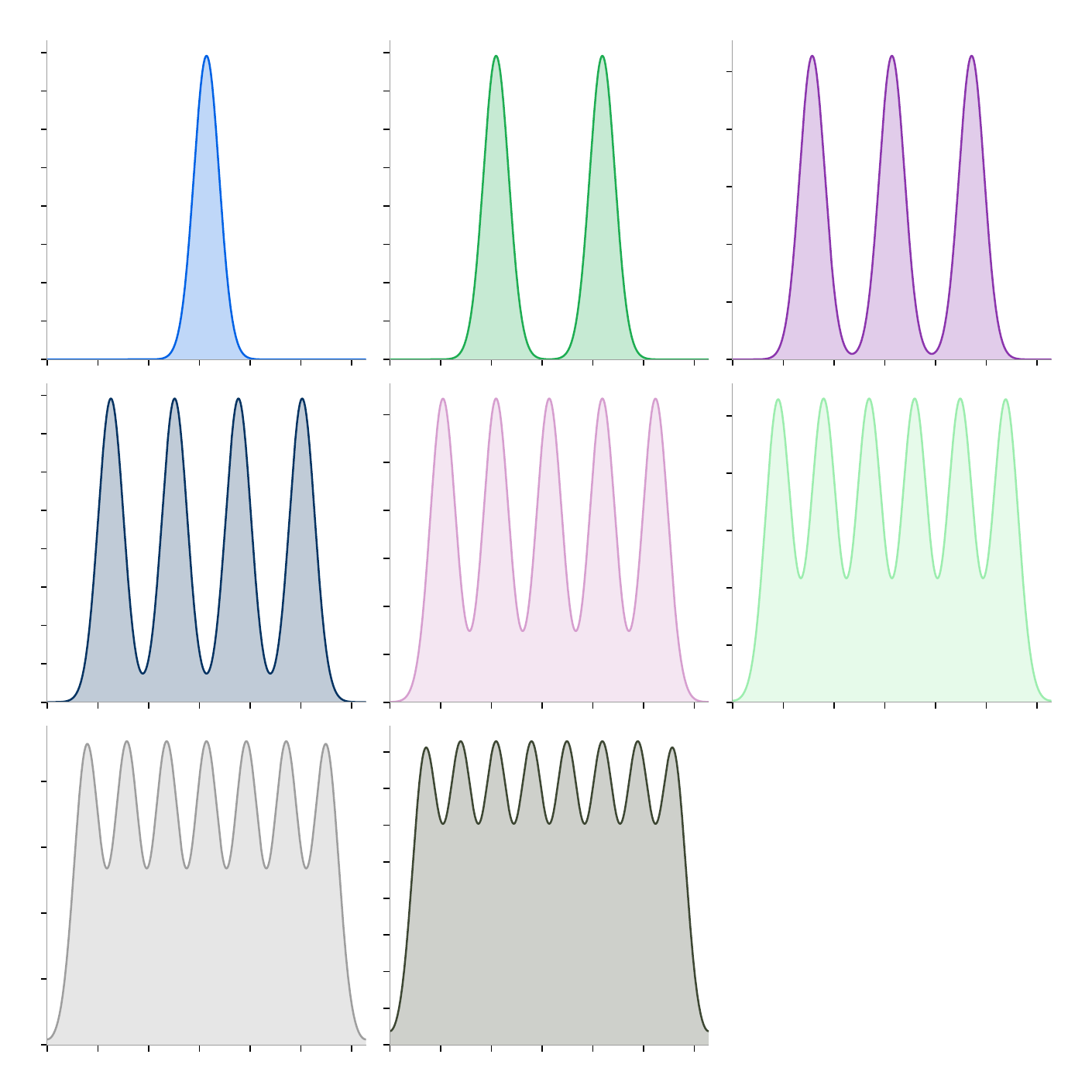}
        \caption{Multi-Gaussian series (1--9).}
        \label{app_fig:workloads_pics}
    \end{subfigure}
    \hfill
    \begin{subfigure}{0.32\textwidth}
        \centering
        \includegraphics[width=\linewidth]{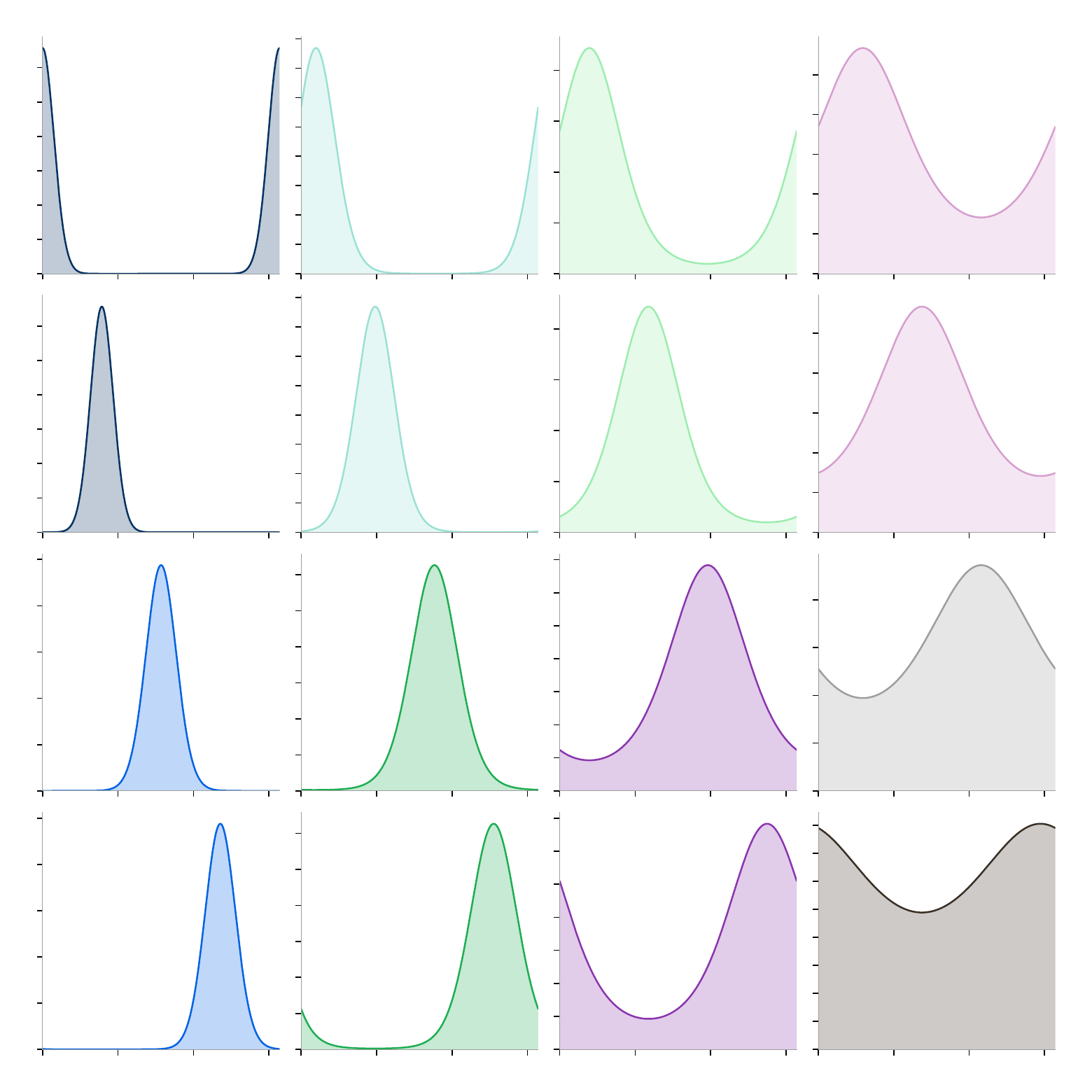}
        \caption{Unimodal robustness workloads.}
        \label{app_fig:workloads_unimodal}
    \end{subfigure}

    \caption{Comprehensive overview of tasks used in the model: (a) general principles, (b) multi-Gaussian scaling, and (c) robustness testing.}
    \label{app_fig:combined_workload_catalogues}
\end{figure}

\begin{figure}[H]
    \centering
    \includegraphics[width=0.8\textwidth]{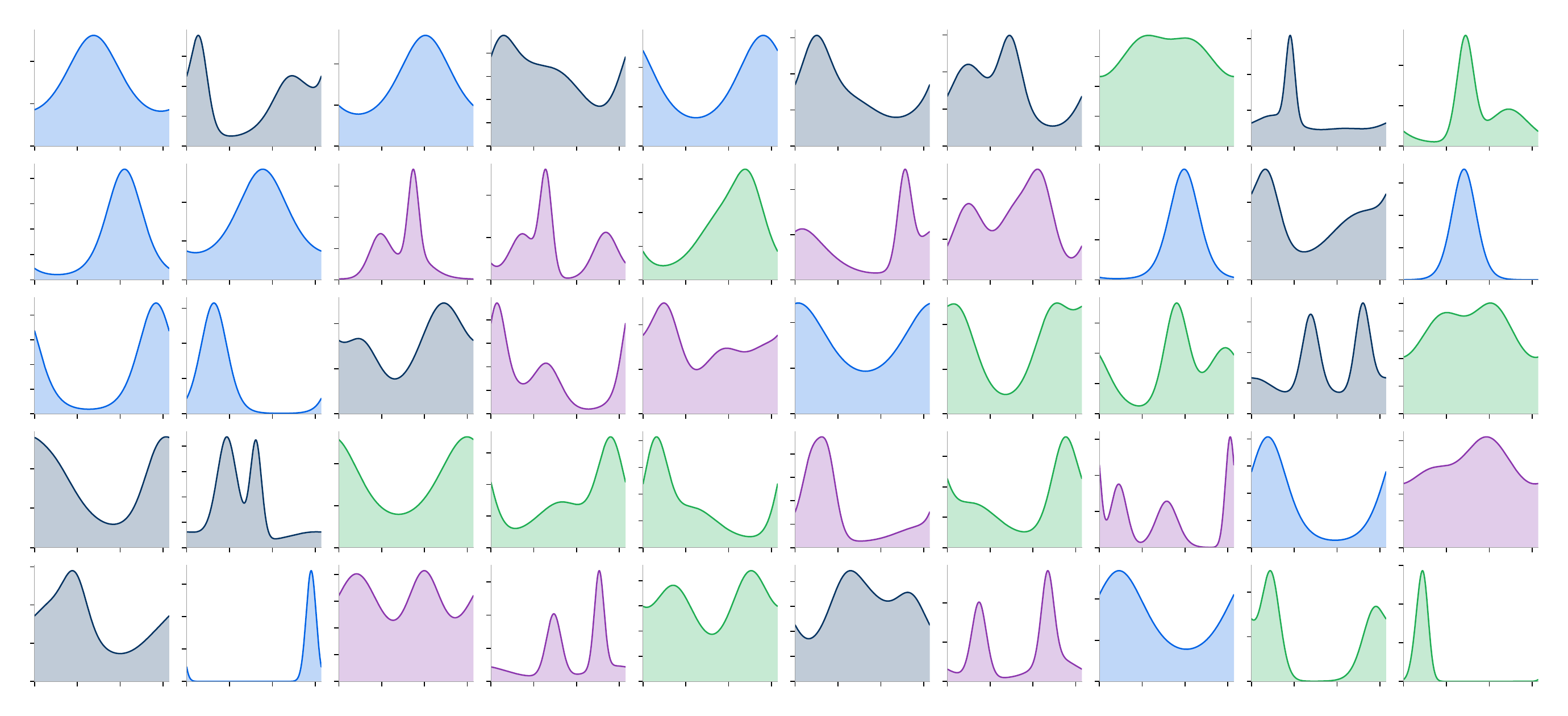}
    \caption{Overview of workloads used for extracting the efficiency principle.}
    \label{fig:efficiency_catalogue}
    \label{app_fig:workloads_eff}
\end{figure}

\subsection{Metrics to characterise workload during analyses of principles}
\label{app:principles_workload_metrics}

For analyses in \cref{sec:principles_workloads}, we have studied the effect of different features of the task onto the optimised system. We detail them in this appendix subsection. Let $W_0(\theta)$ be the task demand profile defined on the torus, discretised over a grid of $L = 2048$ points with spacing $\Delta\theta$. We define the normalised density
\[
  \tilde{W}_0(\theta) \;=\; \frac{W_0(\theta)}{\displaystyle\sum_{i=1}^{L} W_0(\theta_i)\,\Delta\theta}\,,
\]
so that $\sum_i \tilde{W}_0(\theta_i)\,\Delta\theta = 1$.

\medskip
\noindent The 12 features are grouped into four families.

\paragraph{Parametric features.}

\begin{enumerate}

\item \textit{Peaks number} ($n_{\text{peaks}}$).
The number of Gaussian components in the mixture defining $W_0(\theta)$.

\item \textit{Spread} ($s_{\max}$).
The largest pairwise circular distance between peak centres:
\[
  s_{\max} \;=\; \max_{i < j}\; \min\!\Big(\lvert \mu_i - \mu_j \rvert,\; 2\pi - \lvert \mu_i - \mu_j \rvert\Big).
\]
For unimodal tasks ($n_{\text{peaks}} = 1$), we set $s_{\max} = 0$.

\end{enumerate}

\paragraph{Information-theoretic features.}

\begin{enumerate}\setcounter{enumi}{2}

\item \textit{Shannon entropy} ($\mathcal{S}$).
The differential entropy of the normalised profile:
\[
  \mathcal{S} \;=\; -\sum_{i=1}^{L} \tilde{W}_0(\theta_i)\,\log\!\big[\tilde{W}_0(\theta_i)\big]\;\Delta\theta\,.
\]
Higher values indicate a more spread-out, uniform-like demand; lower values indicate concentration around few modes.

\item \textit{R\'enyi entropy} ($\mathcal{R}_2$).
The R\'enyi entropy of order 2:
\[
  \mathcal{R}_2 \;=\; -\log\!\left(\sum_{i=1}^{L} \tilde{W}_0(\theta_i)^{\,2}\;\Delta\theta\right).
\]
This is the logarithm of the inverse participation ratio and provides a robust measure of the effective support of $W_0$.

\end{enumerate}

\paragraph{Fourier features.}

Let $\bar{W}_0 = \frac{1}{L}\sum_i W_0(\theta_i)$ and define the centred profile $W_0^{c}(\theta) = W_0(\theta) - \bar{W}_0$. Denote by $\{\hat{c}_k\}_{k=0}^{\lfloor L/2 \rfloor}$ the discrete Fourier coefficients of $W_0^{c}$ and the spectral power $P_k = |\hat{c}_k|^2$.

\begin{enumerate}\setcounter{enumi}{4}

\item \textit{Fourier energy 1} ($F_1$).
The fraction of spectral power in the first harmonic:
\[
  F_1 \;=\; \frac{P_1}{\displaystyle\sum_{k} P_k}\,.
\]

\item \textit{Fourier energy 2} ($F_2$).
The fraction of spectral power in the second harmonic:
\[
  F_2 \;=\; \frac{P_2}{\displaystyle\sum_{k} P_k}\,.
\]

\item \textit{Fourier energy 3} ($F_3$).
The fraction of spectral power in the third harmonic:
\[
  F_3 \;=\; \frac{P_3}{\displaystyle\sum_{k} P_k}\,.
\]

\item \textit{Fourier concentration} ($F_{\text{conc}}$).
The cumulative fraction of spectral power in the first three harmonics:
\[
  F_{\text{conc}} \;=\; \frac{P_1 + P_2 + P_3}{\displaystyle\sum_{k} P_k}\,.
\]
Values close to 1 indicate that the task shape is well described by low-frequency components; values near 0 indicate fine-grained or high-frequency structure.

\end{enumerate}

\paragraph{Geometric features.}

\begin{enumerate}\setcounter{enumi}{8}

\item \textit{Heterogeneity low} ($\mathcal{H}_{\text{low}}$).
It is the heterogeneity of the task detailed in \cref{app:heterogeneity_metric}, which corresponds to a dissimilarity-based inverse concentration measure. Define the circular distance matrix, 
$$\forall \theta, \theta' \in \T, \quad d_{\circ}(\theta_i, \theta_j) := \min\!\big(|\theta - \theta'|,\; 2\pi - |\theta - \theta'|\big)$$
and the kernel $Z(\theta, \theta') = \exp(-d_{\circ}(\theta, \theta')/\pi)$. The heterogeneity of the task $W_0$ is defined as such, 
$$\mathcal{H}_{\text{low}}(W_0) = \left(\int_{\theta \in \T} (Z W_0)(\theta) d \theta \right)^{-1},$$
with the effective density $Z W_0$ defined as follows:
$$Z W_0 (\theta) = \int_{\theta \in \T} Z(\theta,\theta')W_0(\theta')d \theta'.$$
With the discretisation $(\theta_i)_{1 \leq i \leq L}$, we approximate the heterogeneity of the task with,
\[
  \mathcal{H}_{\text{low}} \;=\; \frac{1}{\displaystyle(\Delta\theta)^2\;\tilde{\mathbf{W}}_0^{\!\top} Z\, \tilde{\mathbf{W}}_0}.\,,
\]
where $\tilde{\mathbf{W}}_0$ is the vector of $\tilde{W}_0(\theta_i)$ values. This captures how dispersed the demand is: concentrated profiles yield small $\mathcal{H}_{\text{low}}$; spread profiles yield large values.

\item \textit{Peak height} ($W_{0,\max}$).
The maximum value of the unnormalised demand:
\[
  W_{0,\max} \;=\; \max_{\theta}\; W_0(\theta)\,.
\]

\item \textit{Flatness} ($\kappa$).
The normalised kurtosis of the profile, measuring how peaked or flat the shape is:
\[
  \kappa \;=\; \frac{\displaystyle\sum_{i=1}^{L} \big(W_0(\theta_i) - \bar{W}_0\big)^4\;\Delta\theta}{\left[\displaystyle\sum_{i=1}^{L} \big(W_0(\theta_i) - \bar{W}_0\big)^2\;\Delta\theta\right]^2}\,.
\]
High $\kappa$ indicates sharp, spiky profiles; low $\kappa$ indicates broad, plateau-like shapes.

\item \textit{Circular variance} ($V_{\text{circ}}$).
Measures the directional spread of the normalised profile on the unit circle:
\[
  V_{\text{circ}} \;=\; 1 \;-\; \left\lvert\, \sum_{j=1}^{L} \tilde{W}_0(\theta_j)\, e^{i\,\theta_j}\;\Delta\theta \,\right\rvert,
\]
where $i$ denotes the complex number such that $i^2 = -1$.
$V_{\text{circ}} = 0$ when $W_0$ is a perfect point mass; $V_{\text{circ}} \to 1$ when $W_0$ is uniform.

\end{enumerate}

%% file: Text/network_effect.tex
In this appendix subsection we justify the extensive use of the circular topology of the interaction graph, show implications of taking other geometries, as in \cref{fig:findings_MD}.

\subsection{Canonical network}
Throughout the study we used the circular topology as nearly the sole candidate for our studies. The reason is that its corresponding communication matrix (adjacency matrix of the network),
$$Q = J_N = \begin{pmatrix} 
    0 & 1 & 0 & \cdots & 0 & 1 \\
    1 & 0 & 1 & \ddots & \vdots & 0 \\
    0 & 1 & \ddots & \ddots & 0 & \vdots \\
    \vdots & \ddots & \ddots & \ddots & 1 & 0 \\
    0 & \cdots & 0 & 1 & 0 & 1 \\
    1 & 0 & \cdots & 0 & 1 & 0
\end{pmatrix},$$
can be considered as a canonical example of our study, which stems from the production function. To show it we fix a linear production function with respect to the vector of individual production functions, i.e.:
$$w \in \R^N \mapsto W_1(Q, w) \quad \text{linear}.$$
Depending on the topology, there exists a vector of weights $$\alpha(Q) = (\alpha_1(Q), \dots, \alpha_N(Q))^T \in \R^N$$ such that:
$$W_1(Q, w) = \alpha(Q)^T w = \sum_{1 \leq i \leq N} \alpha_i(Q)w_i.$$
We call $\alpha(Q)$ the agent weights. Hence, in the linear case the network effect is entirely encompassed within the weights. As the heterogeneity is the core variable of study, we want to move away, when it is not needed, any other variables. Therefore we do not want to have a different weighting on the agents as it would influence the heterogeneity of the agents' capacities. Therefore any network whose interaction matrix $Q$ leads to a vector of weights lying on the principal diagonal can be a candidate for canonicity. 
Hence, we fix $J_N$ as it represents an undirected graph and has a one-norm of $2N$ thus accounting for the resources needed to fully fuel the production system in a natural way: one resource unit per agent and one per interacting edge. We acknowledge that other conventions could have been fixed, and we study the influence of $\alpha$ and the communication structure further.
\subsection{Weights' effects on optimal agents}
Before investigating the effects of $\alpha$ on the optimal agents, we first introduce some notations. The average and deviation of $\alpha$:
$$\bar{\alpha} = \frac{1}{N} \sum_{1 \leq i \leq N} \alpha_i, \quad \delta\alpha = (\delta\alpha_1, \dots, \delta\alpha_N)^T = (\bar{\alpha} - \alpha_1, \dots, \bar{\alpha} - \alpha_N)^T.$$
With these notations we can define the mean-field production function, 
$$\bar{W_1}(\alpha, w) := \bar{\alpha}^T w = W_1(\bar{\alpha}, w), $$
and the deviation production function, 
$$\delta W_1(\alpha, w) = \delta\alpha^T w = W_1(\delta\alpha, w).$$
The production function is the sum of the mean-field and the deviation production functions: 
$$W_1(\alpha, w) = \bar{W_1}(\alpha, w) + \delta W_1(\alpha, w).$$
\begin{remark}
    The previous equality directly stems from the linearity: 
    $$W_1(\alpha, w) = W_1(\bar{\alpha} + \delta \alpha, w) =  W_1(\bar{\alpha}, w) + W_1(\delta\alpha, w).$$
\end{remark}
We now empirically show in a simple case that for any agent $i$, the higher its weight deviation $\delta \alpha_i$ is from the mean field $\bar{\alpha}$, the more its specialisation deviates from what it would have had with a weight equal to the mean field. We first consider a simple setting of five agents and a fixed task with high resource constraints. We consider a weights vector $\alpha = (1, 2, 3, 4, 5)^T$, so the second agent contributes twice more than the first, the third contributes $3$ times more than the first, and so on until the fifth. The production function is:
$$W_1(\alpha, w) = \sum_{i = 1}^5iw_i.$$
In order to remove the effect of the stochasticity of the gradient descent procedure, we perform Monte Carlo estimates of each agent's specialisation levels and we output the Monte Carlo averages alongside their standard deviation intervals. \cref{fig:alpha_to_specialisation} clearly shows that each agent's specialisation level decreases with its corresponding weight. In this particular setup, the first agent is nearly twice as specialised as the last agent, compared to the mean field, in \cref{fig:alpha_mf_to_specialisation}, where specialisation is drastically more uniform across agents. 
\begin{figure}[H]
    \centering
    \begin{subfigure}{0.48\textwidth}
        \centering
        \includegraphics[width=\linewidth]{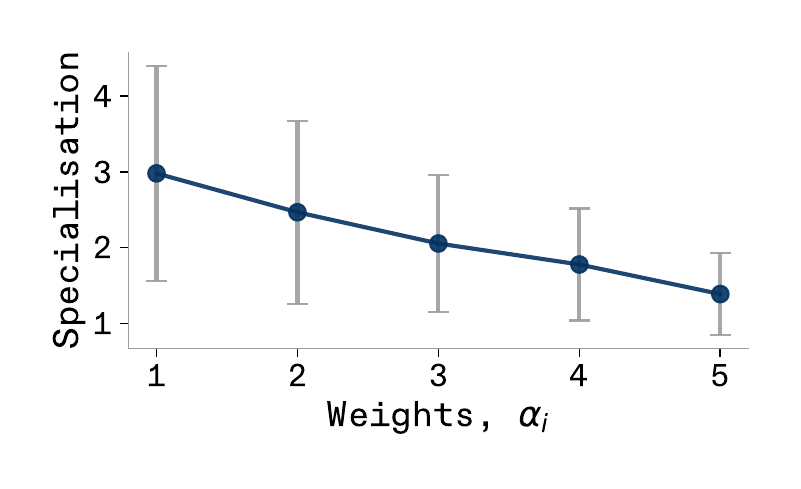}
        \caption{Specialisation levels with different agent weights.}
        \label{fig:alpha_to_specialisation}
    \end{subfigure}
    \hfill
    \begin{subfigure}{0.48\textwidth}
        \centering
        \includegraphics[width=\linewidth]{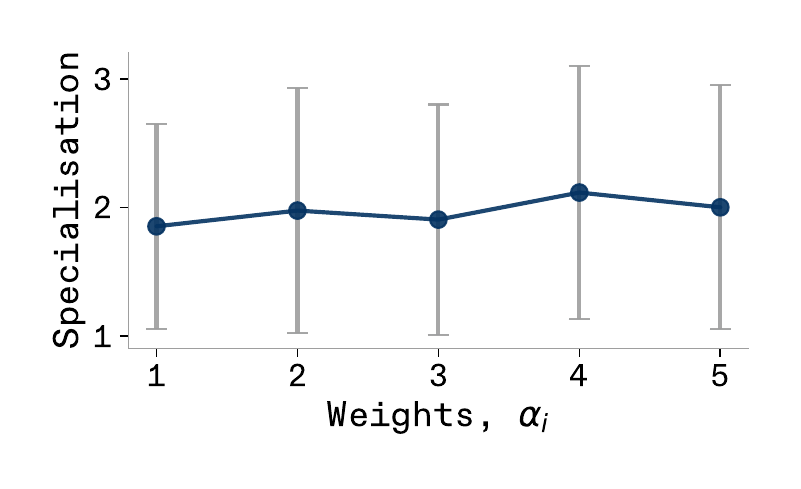}
        \caption{Specialisation levels with uniform agent weights.}
        \label{fig:alpha_mf_to_specialisation}
    \end{subfigure}
    \caption{Comparison of specialisation levels under different agent weighting schemes.}
    \label{fig:combined_specialisation}
\end{figure}
To observe the comparison we overlay the two preceding results to visualise the specialisation deviation as a function of the weight deviations: 
$$\delta S = S(\alpha) - S(\bar{\alpha}) = (S(\alpha_1) - S(\bar{\alpha}), \dots, S(\alpha_N) - S(\bar{\alpha}))$$
\begin{figure}[H]
    \centering
    \includegraphics[width=0.65\textwidth]{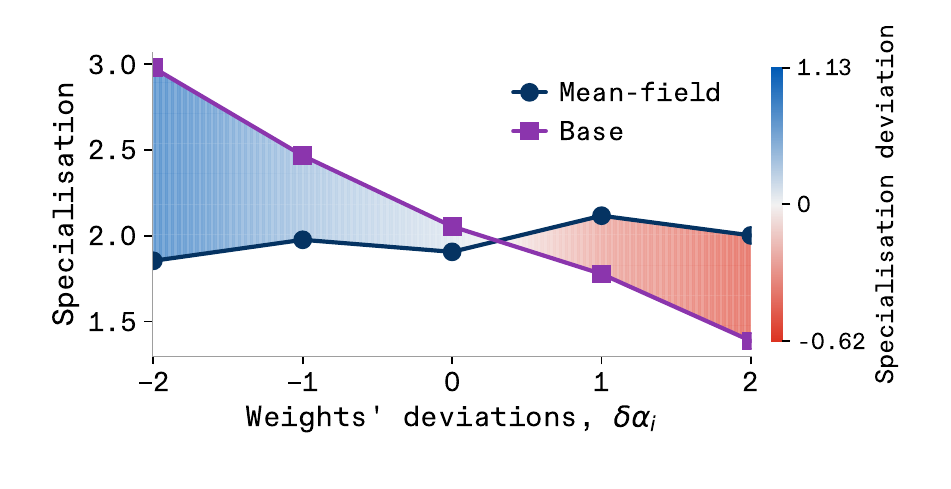}
    \caption{Specialisation deviations with deviations of agents' weights.}
    \label{fig:base_mf_specialisation_deviation}
\end{figure}
These observations illustrate, in a simple case, the specialisation deviation property with weight deviations from the mean field.

%% file: Text/supplementary_principles.tex
\subsection{Relation between productivity, efficiency and heterogeneity}
\label{app:prod_eff_het}
While the results in \cref{sec:principles_efficiency} establish a strong correlation between heterogeneity and performance under resource constraints, we further ask whether this relationship holds independently of the resource constraints themselves, that is, whether heterogeneity carries additional explanatory power for performance beyond what resource scarcity alone would predict. To test this, we conduct an incremental predictive analysis. We consider the same setting as in \cref{sec:principles_efficiency}, i.e.\ $N = 16$ agents interacting via $J_N$, subject to a varying stiffness parameter, jointly optimise to realise a bi-Gaussian workload under resource constraints. For each combination of resource constraint level $R$ and heterogeneity stiffness parameter $S$, we record the resulting heterogeneity $\mathcal{H}$ and performance $P$ of the corresponding optimal production system. We then compare two nested polynomial regression models: a full model $M_1: P \sim \mathrm{poly}(R, \mathcal{H})$ and a restricted model $M_2: P \sim \mathrm{poly}(R)$, using an $F$-test to assess whether including $\mathcal{H}$ significantly reduces prediction error beyond $R$ alone. 

To examine whether this relationship is regime-dependent, we partition the data into three resource-constraint regimes (terciles) and repeat the comparison with per-regime linear models. To ensure robust evaluation, the data within each regime is randomly divided into a training set (70\%) and a test set (30\%). The models are fitted exclusively on the training data, and their prediction errors ($E_1$ and $E_2$) are calculated based on their performance on the unseen test set. The results are shown in \cref{tab:granger_efficiency}. Globally, the inclusion of heterogeneity significantly improves predictive accuracy across the entire dataset ($p < 0.001$). When partitioned by regime, we observe that while heterogeneity offers a modest but significant predictive improvement even under abundant resources ($p = 0.009$), its explanatory power drastically intensifies as resources become scarce. Under severe resource constraints, the $F$-statistic surges to $99.81$ ($p < 0.001$) and the test set prediction error ratio $E_2/E_1$ leaps to $2.31$, meaning the model's accuracy on unseen data more than doubles when heterogeneity is considered. This confirms that the performance–heterogeneity relationship is not simply a reflection of resource scarcity: heterogeneity carries independent explanatory power for performance, and this predictive power becomes critically important as resources grow scarce. 
\begin{table}[H]
    \centering
    \begin{tabular}{lccc}
        \toprule
        \textbf{Regime} & $E_2 / E_1$ & $F$ & $p$-value \\
        \midrule
        Low $R$ (abundant) & 1.08 & 7.21 & 0.009 \\
        Mid $R$            & 1.07 & 7.61 & 0.007 \\
        High $R$ (scarce)  & 2.31 & 99.81 & $< 0.001$ \\
        \bottomrule
    \end{tabular}
    \caption{\textbf{Heterogeneity carries independent explanatory power for performance under resource constraints.} $E_2/E_1$ is the ratio of prediction errors of the restricted model ($M_2$, resource constraints only) to the full model ($M_1$, resource constraints and heterogeneity), and $F$ is the corresponding $F$-statistic. While heterogeneity provides a marginal predictive benefit under resource abundance, its explanatory power for performance becomes dominant and highly significant as resources grow scarce.}
    \label{tab:granger_efficiency}
\end{table}

\subsection{Additional details on the robustness principles of \cref{sec:principles_robustness}}
\label{app:add_principles_robustness}

We give here detailed results and illustrations supplementing \cref{sec:principles_robustness} by disaggregating robustness results by catalogues. 

\paragraph{Unimodal catalogue.}
\cref{fig:unimodal_example_timeseries} shows example time series for one
representative task under each regime.
Across all three regimes the heterogeneous system maintains a steadier, higher
production floor, whereas the homogeneous system exhibits large oscillations
that periodically collapse to near-zero output.
The summary statistics (\cref{fig:unimodal_summary_robustness}) confirm
this pattern across all $16$~tasks.
Under wave evolution, the heterogeneous system achieves a $134\%$ higher mean
production ($P_{\mathrm{heterogeneous}}=0.049$ vs.\
$P_{\mathrm{homogeneous}}=0.021$, $t_{15}=7.24$, $p<10^{-5}$) and a $56\%$
lower CV ($0.52$ vs.\ $1.17$, $p<10^{-6}$), with all $16/16$ tasks favouring
the heterogeneous system on every metric.
Under BM the advantage persists: mean minimum production is $7\times$ higher
for the heterogeneous system ($0.034$ vs.\ $0.005$, $p=0.001$), and under
extreme events the heterogeneous system's production floor is $16\times$ higher
($0.022$ vs.\ $0.001$, $p<10^{-6}$), with $16/16$ tasks showing a better
minimum.
\begin{figure}[H]
    \centering
    \includegraphics[width=\linewidth]{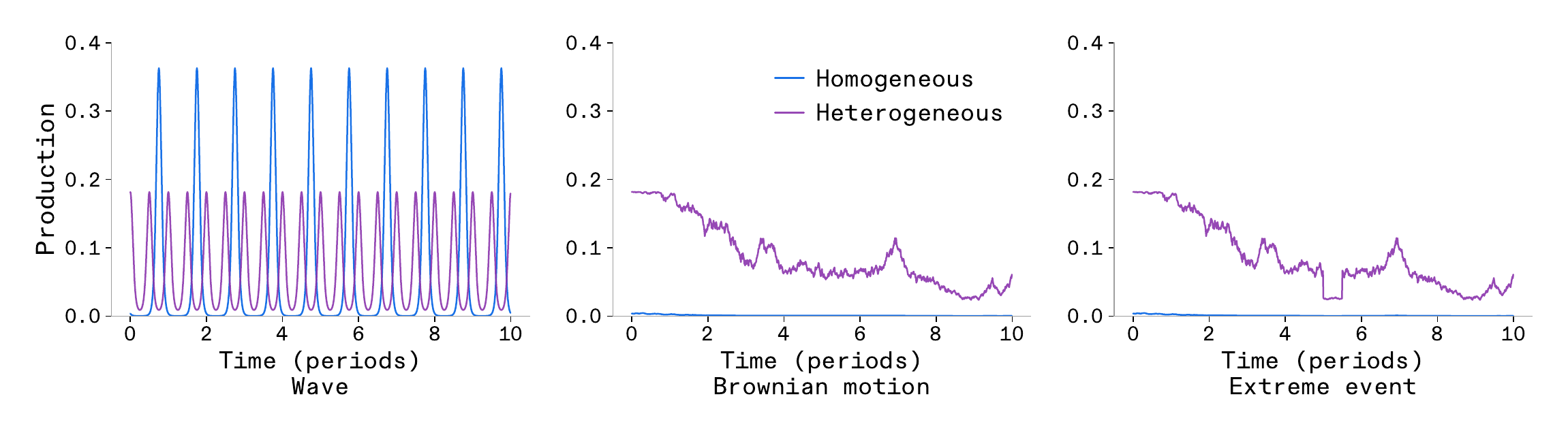}
    \caption{
        Example production time series for the unimodal catalogue (task~4)
        under wave, Brownian motion, and extreme-event regimes.
        Blue: homogeneous system; purple: heterogeneous system.
    }
    \label{fig:unimodal_example_timeseries}
\end{figure}
\begin{figure}[H]
    \centering
    \includegraphics[width=\linewidth]{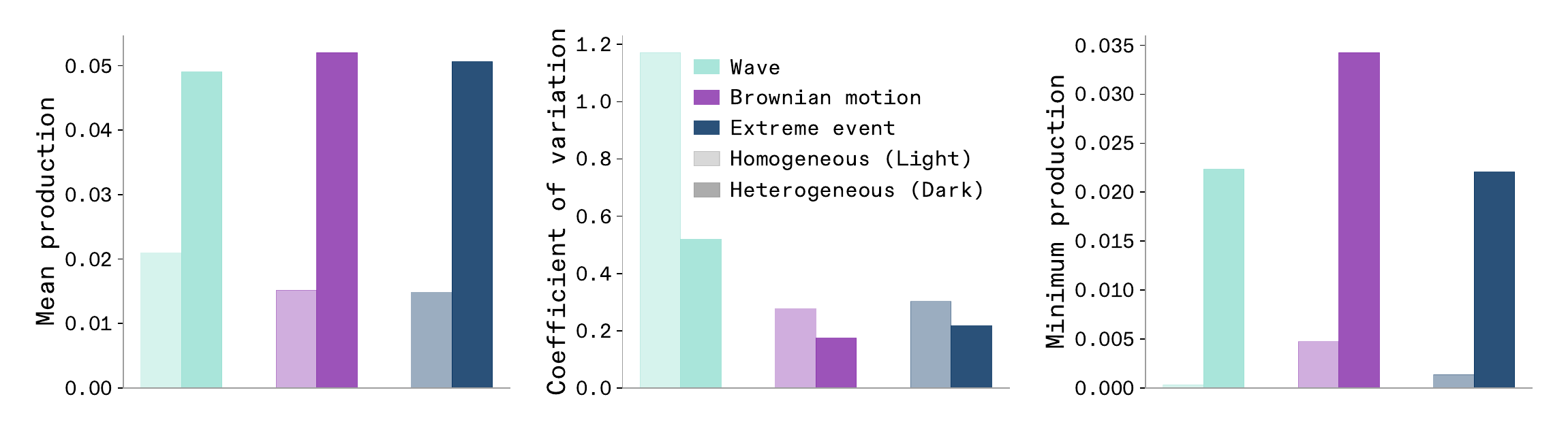}
    \caption{
        Summary robustness statistics (mean, CV, minimum) averaged across all
        $16$ unimodal tasks.
        Hatched bars: homogeneous; solid bars: heterogeneous.
        Colours indicate regime (blue: wave; purple: BM; dark blue: extreme).
    }
    \label{fig:unimodal_summary_robustness}
\end{figure}
\paragraph{Diverse catalogue.}
When the task catalogue includes structurally diverse demands—uniform
densities, bimodal and multimodal configurations with varying separations and
asymmetries—the heterogeneous advantage remains robust
(\cref{fig:diverse_summary_robustness}).
Under wave evolution, $16/16$ tasks favour the heterogeneous system on both
mean and minimum production ($p<10^{-4}$).
Under BM and extreme events, $14/16$ tasks show higher mean and minimum
production for the heterogeneous system.
The minimum-production advantage is highly significant across all regimes
($p<0.002$), confirming that the benefit is not an artifact of unimodal task
structure.
\begin{figure}[H]
    \centering
    \includegraphics[width=\linewidth]{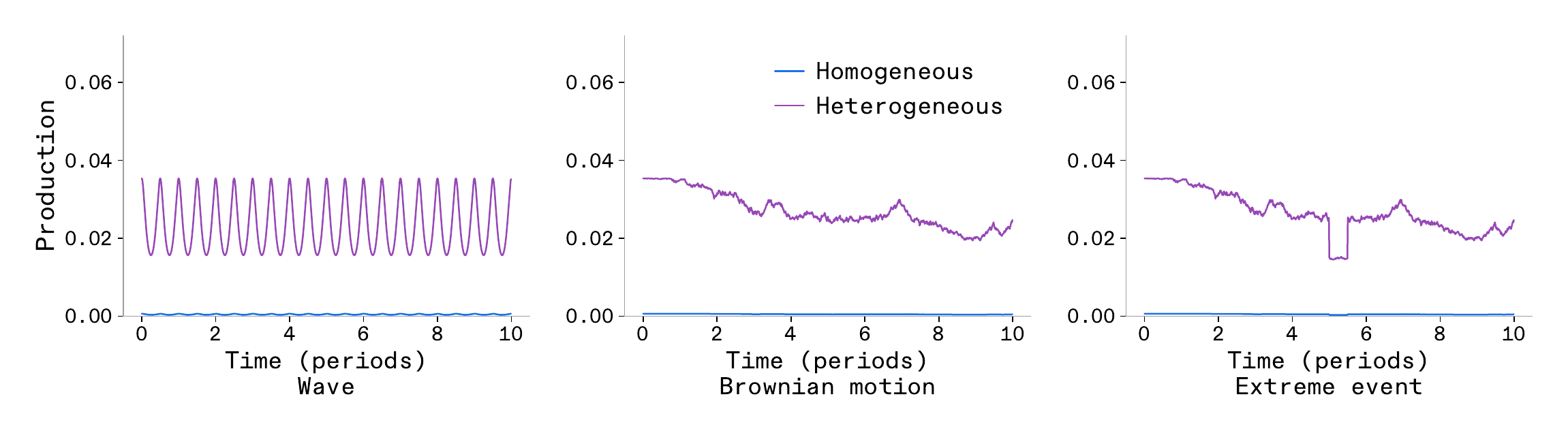}
    \caption{
        Example production time series for the diverse catalogue (task~4)
        under wave, Brownian motion, and extreme-event regimes.
    }
    \label{fig:diverse_example_timeseries}
\end{figure}
\begin{figure}[H]
    \centering
    \includegraphics[width=\linewidth]{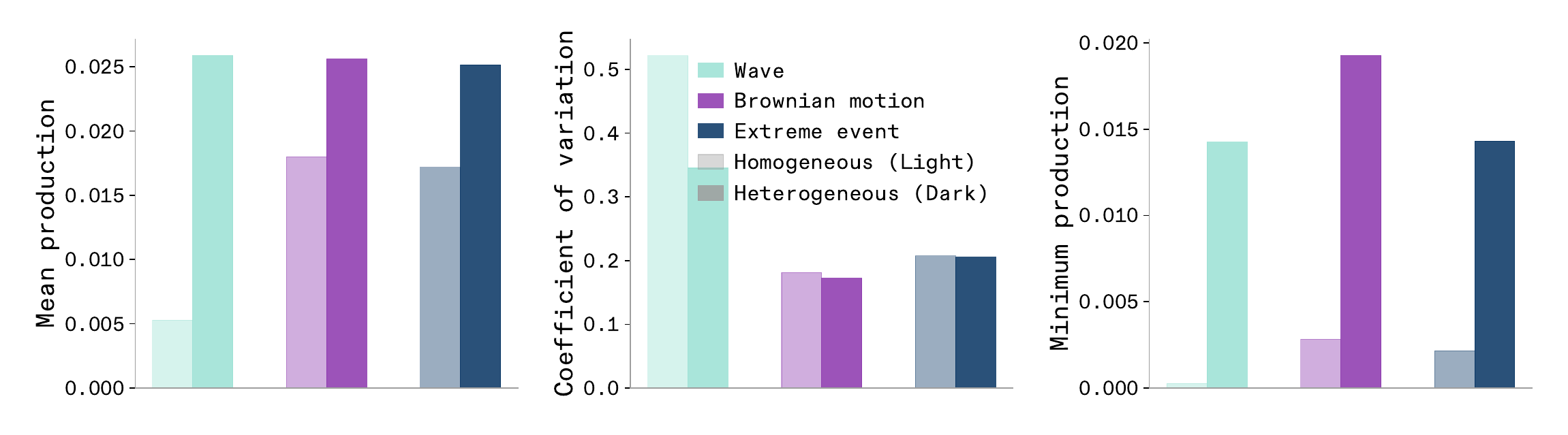}
    \caption{
        Summary robustness statistics for the diverse catalogue
        ($16$ tasks spanning uniform through quintmodal configurations).
    }
    \label{fig:diverse_summary_robustness}
\end{figure}

\paragraph{Homo-suited catalogue: robustness despite initial disadvantage.}
The most revealing experiment uses the homo-suited catalogue, where all tasks
are initially centred at $\mu=\pi$—the exact specialisation of the homogeneous
system—so that the homogeneous system begins with a large production advantage.
Under BM, the homogeneous system indeed achieves a $4\times$ higher mean
production ($0.069$ vs.\ $0.017$, $p=0.04$): the specialist excels when the
environment matches its narrow competence.
However, the heterogeneous system is significantly more \emph{stable}: its CV
is $41\%$ lower ($0.18$ vs.\ $0.31$, $p=0.014$), with $13/16$ tasks showing
lower variability for the heterogeneous system (Wilcoxon $p=0.009$).
Under extreme events, the pattern sharpens: the heterogeneous system maintains
a $44\%$ lower CV ($0.20$ vs.\ $0.36$, $p=0.004$) and a higher production
floor ($\min P=0.011$ vs.\ $0.008$), with $12/16$ tasks favouring the
heterogeneous system on CV (\cref{fig:homosuited_summary_robustness_2}).
The time series (\cref{fig:homosuited_example_timeseries}) illustrate this
trade-off vividly: the homogeneous system produces sharp peaks when the task
happens to align with its narrow skill, but collapses when it drifts away; the
heterogeneous system produces a lower but remarkably steadier output throughout.
\begin{figure}[H]
    \centering
    \includegraphics[width=\linewidth]{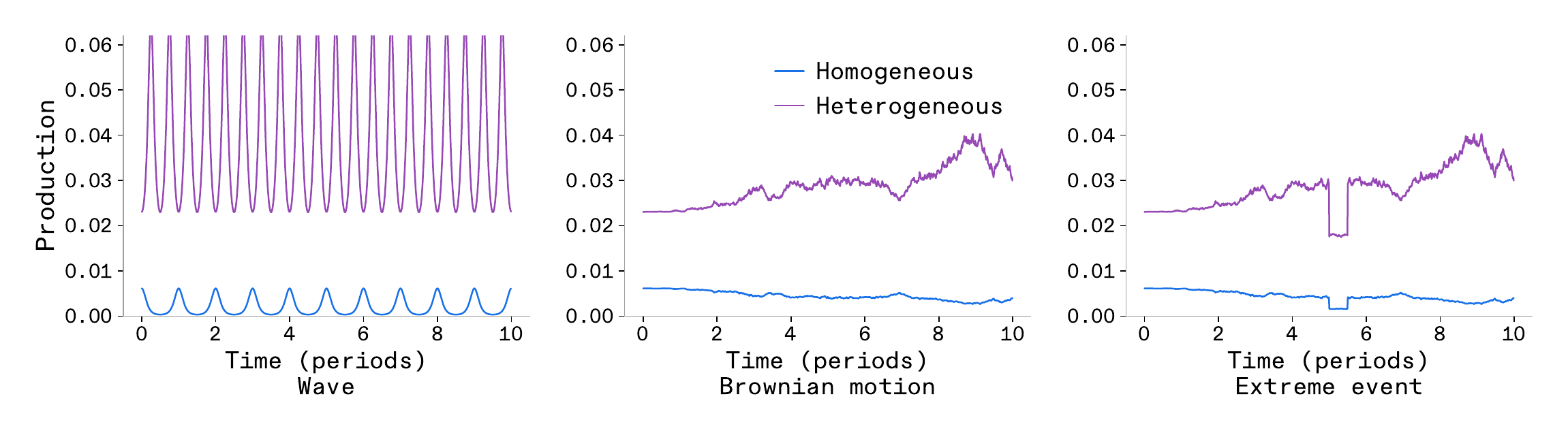}
    \caption{
        Example production time series for the homo-suited catalogue (task~4).
        The homogeneous system starts with high production (task initially at
        $\mu=\pi$) but suffers large swings; the heterogeneous system is
        steadier.
    }
    \label{fig:homosuited_example_timeseries}
\end{figure}
\begin{figure}[H]
    \centering
    \includegraphics[width=\linewidth]{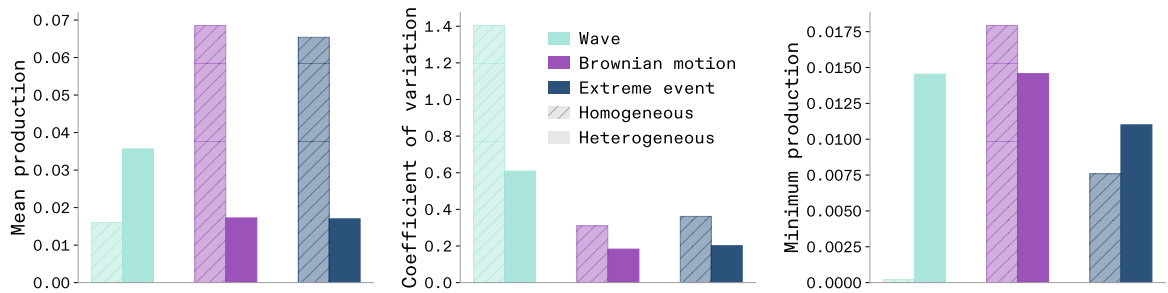}
    \caption{
        Summary robustness for the homo-suited catalogue.
        Despite lower mean production, the heterogeneous system exhibits
        significantly lower CV and higher minimum production under extreme
        events.
    }
    \label{fig:homosuited_summary_robustness_2}
\end{figure}